\documentclass[11pt]{article}
\usepackage{fullpage}
\usepackage{amsmath,amsthm,amsfonts,amssymb,dsfont,bm}
\usepackage{enumerate,color,xcolor}
\usepackage{graphicx}
\usepackage{subfigure}
\usepackage[colorlinks,linkcolor=cyan,citecolor=blue]{hyperref}
\usepackage{url}

\usepackage{natbib}

\usepackage{tikz}
\usepackage{subcaption}

\numberwithin{equation}{section}
\theoremstyle{plain}

\newtheorem{theorem}{Theorem}
\newtheorem{corollary}[theorem]{Corollary}

\newtheorem{lemma}[theorem]{Lemma}

\newtheorem{assumption}[theorem]{Assumption}

\newcommand{\mk}{\mathcal{K}}
\newcommand{\mc}[1]{\mathcal{#1}}
\newcommand{\R}{\mathbb{R}}
\newcommand{\paren}[1]{\left(#1\right)}
\newcommand{\proj}{\mathcal{P}_{\mathcal{K}}}

\newcommand{\prox}{q_{\mk}^{\lambda}}
\newcommand{\sqb}[1]{\left[#1\right]}
\newcommand{\bsqb}[1]{\Bigg[#1\Bigg]}
\newcommand{\mb}[1]{\mathbf{#1}}
\newcommand{\norm}[1]{\left\|#1\right\|}
\newcommand{\normsq}[1]{\left\|#1\right\|^2}
\newcommand{\Wc}{\ensuremath{\mathcal{W}}}
\newcommand{\E}{\mathbb{E}}
\newcommand{\Law}{\mathcal L}
\newcommand{\lrangle}[1]{\left\langle#1\right\rangle}
\newcommand{\ug}{U^{\gamma}}
\newcommand{\ugc}{U^{\gamma}_{\Wc,\eta}}

\begin{document}
\begin{center}  
{\bf\Large Decentralized Proximal Stochastic Gradient Langevin Dynamics}
\end{center}

\author{}
\begin{center}
    {Mohammad Rafiqul Islam}\,\footnote{Department of Mathematics, Florida State University, Tallahassee, Florida, United States of America;
  rislam@fsu.edu},
  Lingjiong Zhu\,\footnote{Department of Mathematics, Florida State University, Tallahassee, Florida, United States of America; zhu@math.fsu.edu
 }
\end{center}

\begin{center}
 \today
\end{center}

\begin{abstract}
We propose Decentralized Proximal Stochastic Gradient Langevin Dynamics (DE-PSGLD), a decentralized Markov chain Monte Carlo (MCMC) algorithm for sampling from a log-concave probability distribution constrained to a convex domain. Constraints are enforced through a shared proximal regularization based on the Moreau–Yosida envelope, enabling unconstrained updates while preserving consistency with the target constrained posterior. We establish non-asymptotic convergence guarantees in the 2-Wasserstein distance for both individual agent iterates and their network averages. Our analysis shows that DE-PSGLD converges to a regularized Gibbs distribution and quantifies the bias introduced by the proximal approximation. We evaluate DE-PSGLD for different sampling problems on synthetic and real datasets. As the first decentralized approach for constrained domains, our algorithm exhibits fast posterior concentration and high predictive accuracy. The code for the numerical experiments is publicly available at \href{https://anonymous.4open.science/r/Decentralized-proximal-SGLD-74E8}{anonymous.4open.science/r/Decentralized-proximal-SGLD-74E8}.
\end{abstract}

\section{Introduction}
Decentralized learning is a learning process in which data is distributed across computational agents or collected by individual agents, and model parameters are computed as the consensus of the agents. 
It has gained a lot of interest for applications where agents can collaboratively learn a predictive model without sharing their own data, but sharing only their local models with their immediate neighbors to generate a global model \citep{he2018cola,hendrikx2019accelerated,arjevani2020ideal}.
We assume there are $N$ agents who are connected over an undirected communication network $\mathcal{G}=(\mathcal{V,E})$ where $\mathcal{V}=\{1,\ldots,N\}$ represents the agents and $\mathcal{E}\subseteq \mathcal{V\times V}$ denotes the set of edges; i.e., if agent $i$ and $j$ are connected then $(i,j) \in \mathcal{E}$ implies $(j,i) \in \mathcal{E}$. Suppose we have a collection of $n$ independent and identically distributed (i.i.d.) data pairs $z_i=(a_i, y_i)$, where $a_i\in\R^p$ is the feature vector and $y_i$ the label or response of the $i$-th observation. Let $Z=[z_1, z_2,\cdots,z_n]\in \mathbb{R}^{np }$ be sampled from the distribution $p(Z|x)$ where the parameter $x\in \mathbb{R}^d$ has a common prior. The goal is to sample from the posterior distribution $p(x | Z) \propto p(Z|x) p(x)$ by distributing $Z$ among $N$ agents such that $Z_i=\{z_1^i, z_2^i,\cdots, z_{n_i}^i\}$ is the subset of data exclusive to agent $i$. Thus, $Z=\cup_{i=1}^{N}Z_i$ and $Z_i\cap Z_j =\emptyset$ for $i\ne j$. Therefore, by the additive property of the log-likelihood function, we have $\log p(Z|x) = \sum_{i=1}^{N}\sum_{j=1}^{n_i}\log p(z_j^i|x)$. By defining the potential function $f(x)$ as
\begin{equation}
    \label{eq:mainprob2}
    f(x):= \sum_{i=1}^{N}f_i(x), \qquad f_i(x):= -\sum_{j=1}^{n_i} \log p(z_j^i |x) -\frac1N\log p(x),
\end{equation}
we set our goal to sample from the convex constrained set $\mk\subseteq \R^d$, with the probability density function $\pi(x): = p(x|Z)\propto e^{-f(x)}$, where each component function $f_i$ is processed with the data that are exclusive to agent $i$ only. The choice of the log-likelihood function depends on the type of the learning problem, \emph{e.g.,} Bayesian linear regression \citep{hoff2009first}, Bayesian logistic regression \citep{hoff2009first}, Bayesian deep learning \citep{wang2016towards,polson2017}
and Bayesian principal component analysis \citep{dubey2016variance}.

\section{Related Work}
Decentralized optimization has been extensively studied in recent years to address large-scale learning problems where centralized computation is either infeasible or undesirable due to communication, privacy, or robustness constraints. A wide variety of decentralized first-order algorithms have been proposed, including decentralized gradient descent, EXTRA-type methods, decentralized stochastic gradient descent (DSGD), decentralized accelerated proximal stochastic gradient (DAPSGD), and accelerated variants; see, for example,~\citep{ye2020decentralized, arjevani2020ideal, shi2015extra, lian2017can}. Motivated by stochastic optimization literature, there has been a growing interest in decentralized \textit{sampling} algorithms. Decentralized Langevin-type algorithms extend classical Langevin Monte Carlo by allowing each agent to access only its local data while communicating with neighbors to approximate the global posterior. Representative works include decentralized stochastic gradient Langevin dynamics (DE-SGLD), EXTRA-SGLD, DIGing-SGLD, and their accelerated variants \citep{gurbuzbalaban2021decentralized,gurbuzbalaban2024generalized,DIGing,yao2025accelerating}. These methods are primarily designed for unconstrained sampling on $\mathbb{R}^d$ and rely on consensus mechanisms to ensure asymptotic agreement across agents. Sampling from probability distributions supported on constrained domains or involving non-smooth potentials has attracted significant attention in the centralized setting. One classical approach is projected Langevin Monte Carlo (PLMC), studied in~\citet{bubeck2015finite,bubeck2018sampling}, where each Langevin step is followed by a projection onto the feasible set:
\begin{equation}
    x_{k+1}=\mathcal{P}_{\mathcal{K}}\left(x_{k}-\eta\nabla f(x_{k})+\sqrt{2\eta}\xi_{k+1}\right),
    \label{projected:Langevin}
\end{equation}
with $\mathcal{P}_{\mathcal{K}}$ denoting the Euclidean projection onto a convex set $\mathcal{K}$, and the dynamics \eqref{projected:Langevin} is based on the discretization of the continuous-time overdamped Langevin stochastic differential equation (SDE) with reflected boundary: 
\begin{equation}
    dX_{t}=-\nabla f(X_{t})dt+\sqrt{2}dW_{t}+\nu(X_{t})L(dt),
\end{equation}
where the term $\nu(X_{t})L(dt)$ ensures that $X_{t}\in\mathcal{K}$ 
for every $t$ given that $X_{0}\in\mathcal{K}$. In particular, $\int_{0}^{t}\nu(X_{s})L(ds)$ is a bounded variation reflection process
and the measure $L(dt)$ is such that $L([0,t])$ is finite, 
$L(dt)$ is supported on $\{t|X_{t}\in\partial\mathcal{K}\}$. Non-asymptotic convergence rates in total variation distance were established in~\citet{bubeck2018sampling}, though with a strong dependence on the underlying dimension. Projected stochastic gradient Langevin dynamics has also been studied under weaker assumptions on the objective function. In \cite{Lamperski2021}, the author analyzed projected SGLD for possibly non-convex smooth objectives with stochastic gradients satisfying sub-Gaussian noise conditions, obtaining convergence guarantees in the $1$-Wasserstein distance; see also \cite{zheng2022constrained} for related developments. Mirror descent-based Langevin algorithms (see e.g. \cite{hsieh2018mirrored,Chewi2020,Zhang2020,TaoMirror2021,Ahn2021}) form another important class for constrained sampling. Mirrored Langevin dynamics was proposed in \cite{hsieh2018mirrored}, inspired by the classical mirror descent in optimization. An alternative strategy to handling constraints is through proximal and penalty-based methods. Proximal Langevin Monte Carlo was introduced in \cite{brosse2017sampling}, where the non-smooth component of the potential is handled via Moreau--Yosida regularization. This approach avoids explicit projections and yields polynomial complexity bounds under log-concavity assumptions. Subsequent work by \citet{SR2020} further examined proximal SGLD from a primal--dual perspective. More recently, penalized Langevin Monte Carlo methods have been proposed as an alternative to projection- or proximal-based approaches. Inspired by classical penalty methods in optimization, \cite{gurbuzbalaban2024penalized} introduced penalized Langevin dynamics for constrained sampling with potentially non-convex objectives and demonstrated improved dimension dependence compared to earlier methods. Related work combining constraints with replica-exchange Langevin dynamics was studied in \cite{constraint_replica}, showing that reducing the effective domain diameter can significantly accelerate mixing. In addition, motivated by the acceleration properties of non-reversible Langevin dynamics on $\mathbb{R}^d$ \cite{HHS93,HHS05,FSS20,GGZ2,HWGGZ20}, recent studies have begun to explore non-reversible dynamics for constrained sampling. In particular, \cite{DFTWZ2025} proposed skew-reflected non-reversible Langevin dynamics and established non-asymptotic convergence guarantees, showing that breaking reversibility can yield acceleration even in the presence of constraints. Further theoretical insights based on large deviations and asymptotic variance, along with extensive numerical experiments, are provided in \cite{WTWZ2025}.

Despite these advances, the literature on decentralized sampling and constrained Langevin methods has developed largely independently. To the best of our knowledge, existing decentralized Langevin algorithms are limited to smooth, unconstrained settings, while constrained and proximal Langevin methods rely on centralized computation. In particular, there is no prior work that systematically studies \emph{decentralized proximal or penalized stochastic gradient Langevin dynamics} for sampling from constrained or composite posterior distributions. Addressing this gap is the main motivation of the present work. The contributions of our paper can be summarized as follows:
\begin{itemize}
    \item We propose a new sampling algorithm that can be used to generate samples from a constrained domain in a decentralized system. To the best of our knowledge, this is the first decentralized Langevin sampling algorithm in a constrained domain.
    \item We provide a non-asymptotic convergence analysis in the 2-Wasserstein distance for both individual agent and network consensus chain (Theorem~\ref{thm:mt}). Based on this, we provide an iteration complexity for a given accuracy level (Corollary~\ref{cor:revised}).
    \item Finally, we show the efficiency of our algorithm for sampling with synthetic and real data for different sampling problems, including Bayesian linear and logistic regressions. 
\end{itemize}

\section{Background and Problem Setup}
\subsection{Langevin Dynamics} 
\textit{Langevin algorithms} are one of the most widely used Markov Chain Monte Carlo (MCMC) methods in statistical learning that allow sampling from a given density $\pi(x)$ of interest. The classical \textit{Langevin algorithm} is based on the discretization of \textit{overdamped Langevin SDE} \citep{dalalyan2017theoretical,durmus2017nonasymptotic}:
\begin{equation}
    dX(t)=-\nabla f(X(t))dt+\sqrt{2}dW_t,
    \label{eq:ovdLangevin}
\end{equation}
where $f:\mathbb{R}^d\rightarrow \mathbb{R}$ and $W_t$ is a standard $d-$dimensional Brownian motion with $W_0=0$. Under some mild assumptions on $f$, the diffusion \eqref{eq:ovdLangevin} admits a unique stationary distribution with the density $\pi(x)\propto e^{-f(x)}$, also known as the \textit{Gibbs distribution} \citep{pavliotis2014stochastic}. The implementation of this algorithm requires the discretized version of the dynamics, and Euler-Maruyama discretization is the simplest one, known as \textit{Unadjusted Langevin Algorithm} (ULA) \citep{durmus2017nonasymptotic,durmus2019high}:
\begin{equation}
    x_{k+1} = x_k -\eta \nabla f(x_k)+ \sqrt{2\eta}w_{k+1},
    \label{eq:em_ovdLangevin}
\end{equation}
where $\eta>0$ is {\color{blue}the step-size (or learning rate)}, and $w_{k}\in \mathbb{R}^d$ is a sequence of independent and identically distributed (\emph{i.i.d.}) standard Gaussian random vectors $\mathcal{N}(0,\mathbf{I}_d)$. However, the discretization \eqref{eq:em_ovdLangevin} does not converge to the target distribution $\pi(x)$, and it introduces a bias that needs to be properly characterized to ensure performance guarantees \citep{dalalyan2019user}. The \textit{Unadjusted Langevin Algorithm} (ULA) \eqref{eq:ovdLangevin} also requires the computation of $\nabla f$ at each iteration, which can be computationally expensive and often impractical when the data are large and multi-dimensional. This issue can be handled efficiently using \textit{stochastic gradient}, instead of the full gradient \citep{bottou2010large} which results in the algorithm \textit{Stochastic Gradient Langevin Dynamics } (SGLD) \citep{welling2011bayesian,Raginsky} given as 
\begin{equation}
    x_{k+1} = x_k -\eta \tilde{\nabla}f(x_k)+\sqrt{2\eta}w_{k+1},
    \label{eq:sgld}
\end{equation}
where $\eta>0$ is {\color{blue}the step-size (or learning rate)}, $w_{k}\in \mathbb{R}^d$ is a sequence of \emph{i.i.d.} standard Gaussian random vectors $\mathcal{N}(0,\mathbf{I}_d)$, and the stochastic gradient $\tilde{\nabla}f(x_k)$ is an unbiased estimator of the deterministic gradient with a bounded variance. When the number of data points is large, these stochastic gradients are cheaper to estimate and can be computed from a mini-batch setup.
\subsection{Decentralization}
We have a collection of $N$ computational agents connected over a communication network with the communication matrix $W$, where $W=[W_{i,j}]\in \mathbb{R}^{N\times N}$ is symmetric, doubly stochastic matrix with the properties, $W_{ij}=W_{ji}>0$ for $i\ne j$ if $\{i,j\}\in \mathcal{E}$, $W_{ij}=W_{ji}=0$ for $\{i,j\} \notin \mathcal{E}$, and $W_{ii}=1 - \sum_{j\ne i}W_{ij}>0$ for every $1\le i \le N$. Moreover, the eigenvalues of $W$ satisfy $1=\lambda_1^W > \lambda_2^W >\cdots >\lambda_N^W>-1$, with $W\mathbf{1} = \mathbf{1}$ where $\mathbf{1}$ is a vector of length $N$ with entries equal to 1. A common approach to compute $W$ is taking $W=\mathbf{I}_N-\delta L$, where $\mathbf{I}_N$ is an $N\times N$ identity matrix, $L$ is the graph Laplacian, and $\delta>0$ is a small number satisfying $0<\delta < \frac{2}{\lambda_N^L}$ \citep{olfati2007consensus, chung1997spectral}. We define the spectral gap $1-\rho\in (0,1)$ where, $\rho:=\max\{|\lambda_2^W|,|\lambda_N^W|\}$ is the second largest eigenvalue of $W$.

Let $x_i^{(k)}$ be the local variable of the agent $i$ at $k$-th iteration. The Decentralized Stochastic Gradient Langevin Dynamics (DE-SGLD)~\citep{gurbuzbalaban2021decentralized} consists of a weighted averaging with local variables $x_j^{(k)}$ of node $i$'s immediate neighbors $j\in \Omega_i : =\{j: (i,j)\in \mathcal{G}\}$ as well as a stochastic gradient step over the node's component function $f_i(x)$, i.e.
\begin{align}
    x^{k+1}_i &=\sum\nolimits_{j\in \Omega_i} W_{ij}x_j^{(k)} -\eta \tilde{\nabla}f_i\left(x_i^{(k)}\right)+\sqrt{2\eta}w^{(k+1)}_i, \label{eq:desgld}
\end{align}
where $\eta>0$ is the stepsize, $\tilde{\nabla}f_i\paren{x_i^{(k)}}$ is the stochastic gradient, and $w_{i}^{(k)}\in \mathbb{R}^d$ are \emph{i.i.d.} standard Gaussian random vectors $\mathcal{N}(0,\mathbf{I}_d)$. The DE-SGLD algorithm is an unconstrained sampling algorithm for any $x\in\R^d$ and can be used for each agent $i$ to sample a common distribution $e^{-f(x)}$. We want to introduce regularization and constraints via a proximal function so that for any $x\in \R^d$, the iterations are onto a convex compact set $\mk$. We propose and discuss this technique in the next section.

\section{Main Results}
Let $\mk\in \R^d$ be a compact convex set with non-empty interior. To sample from a distribution $\pi$ restricted to a compact convex set $\mk$, we cannot use the ULA, and its discretized versions in either centralized \eqref{eq:sgld} or decentralized \eqref{eq:desgld} setting. For constraint sampling, we need to add some regularization in the form of a proximal function.

The unconstrained potential function $\displaystyle u:\R^d\rightarrow (-\infty,+\infty]$ of the form $u(x):=f(x) + q_{\mathcal{K}}(x)$, is associated with the probability density function (PDF) $\pi(x)\propto e^{-u(x)}$, and $q_{\mathcal{K}}(x)$ is the indicator function given as
\begin{equation}
    q_{\mathcal{K}}\,(x)=\begin{cases}
        +\infty&\quad\mathrm{if}\quad x\ne \mk,\\
        0&\quad\mathrm{if}\quad x\in \mk.
    \end{cases}
    \label{eq:indicator}
\end{equation}
For the constraint sampling, we use the \textit{Moreau-Yosida} envelope~\citep{rockafellar1998variational}, $q_{\mathcal{K}}^{\gamma}:\R^d \rightarrow \R_+$ of the form
\begin{equation}
    q_{\mathcal{K}}^{\gamma}(x)=\inf_{y\in \R^d} \paren{q_{\mathcal{K}}(y)+\frac{1}{2\gamma}\|x-y\|^2}=\frac{1}{2\gamma}\|x-\mathcal{P}_{\mk}(x)\|^2,
    \label{eq:mye}
\end{equation}
where $\displaystyle \proj(x):=\arg\min_{y\in \mk}\norm{x-y}$ denotes the projection onto $\mk$, and $\gamma>0$ is a regularization parameter. Then the constrained potential $u^{\gamma}:\R^d\rightarrow \R$ given by $u^{\gamma}(x):=f(x)+q_{\mathcal{K}}^{\gamma}(x)$, is associated with the stationary distribution $\pi^{\gamma}(x)\propto e^{-u^{\gamma}(x)}=\exp{\paren{-f(x)-\frac{1}{2\gamma}\|x-\proj(x)\|^2}}$, for any $x\in \R^d$.
\subsection{Distance Between $\pi^{\gamma}$ and $\pi$}
In this section, we bound the distance between $\pi^{\gamma}$ and $\pi$ in terms of the 2-Wasserstein distance. Let us define $\mb{P}_{2}(\R^d)$ as the space of all Borel probability measures $\mu$ on $\R^d$ with finite second moment (with respect to the Euclidean norm). For any two Borel probability measures $\mu,\nu$ in $\mb{P}_{2}(\R^d)$, the 2-Wasserstein distance is given as \citep{villani2008optimal}:
\begin{equation*}
\mathcal{W}_2^2(\mu,\nu):=\inf_{\gamma\in\Pi(\mu,\nu)}\int_{\R^d\times\R^d}\|x-y\|^2\,d\gamma(x,y),
\end{equation*}
where $\Pi(\mu,\nu)$ is the set of couplings of $(\mu,\nu)$, and $\mu$ is absolutely continuous with respect to $\nu$. 

We have the following lemma that 
upper bounds the 2-Wasserstein distance between $\pi^{\gamma}$ and $\pi$.

\begin{lemma}
    \label{lem:newlemma}
    Fix any $\gamma_0\in (0,1/e)$. Then, for any $0<\gamma\le \gamma_0$,
    \begin{equation}
        \Wc_{2}\paren{\pi^\gamma,\pi} \le \mc{C}\gamma^{1/8}(\log(1/\gamma))^{1/8},\label{eneq:dist}
    \end{equation}
    where $\mc{C}$ is a constant that depends on $\gamma_0$, and defined in~\eqref{eq:lebesgue5} in the proof given in Appendix~\ref{lem:proof-newlemma}. 
\end{lemma}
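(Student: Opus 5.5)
The plan is to bound $\Wc_2$ by a weighted total variation and then estimate the total variation between $\pi^\gamma$ and $\pi$ by splitting space into a thin shell around $\mk$ and its complement. The first ingredient is the standard inequality (Villani, Thm.~6.15)
\begin{equation*}
\Wc_2^2(\mu,\nu)\le 2\int_{\R^d}\|x-x_0\|^2\,d|\mu-\nu|(x).
\end{equation*}
We take $x_0\in\mk$, so that $\Wc_2$ is controlled by the second moment of $|\pi^\gamma-\pi|$. Write $Z=\int_{\mk}e^{-f}$ and $Z_\gamma=\int_{\R^d}e^{-f-q^\gamma_{\mk}}$, so that $Z_\gamma\ge Z$. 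On $\mk$ the two densities agree up to normalization:
\begin{equation*}
|\pi^\gamma-\pi| = e^{-f}\paren{\frac1Z-\frac1{Z_\gamma}}.
\end{equation*}
Off $\mk$ only $\pi^\gamma$ has mass. Hence everything reduces to bounding
\begin{equation*}
Z_\gamma-Z=\int_{\mk^c}e^{-f(x)-\frac{1}{2\gamma}d(x,\mk)^2}\,dx
\qquad\text{and}\qquad
\int_{\mk^c}\|x\|^2e^{-f(x)-\frac{1}{2\gamma}d(x,\mk)^2}\,dx .
\end{equation*}
Both terms are divided by $Z$, and the bound on $\mk$ is then at most $\mathrm{diam}(\mk)^2\,(Z_\gamma-Z)/Z_\gamma$.

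To bound these integrals we use only that $f$ is bounded below; convexity of $f$ gives this, up to a linear term, and the mass at infinity is then killed by the Gaussian factor. Introduce a cutoff radius $r>0$ and split $\mk^c$ into the shell $S_r=\{0<d(x,\mk)\le r\}$ and $\{d(x,\mk)>r\}$.

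On $S_r$ we bound the integrand by $\sup e^{-f}$ over the $r$-neighborhood of $\mk$. Steiner's formula, or simply the Lipschitz property of $d(\cdot,\mk)$ together with the coarea formula, gives $\mathrm{Leb}(S_r)\le C r$ for $r\le 1$, since the surface area of $\{d=s\}$ is bounded by that of a ball containing $\mk+B_1$. On the outer region we use the Gaussian tail: in polar-type coordinates, $\int_{d>r}e^{-d^2/(2\gamma)}(1+\|x\|^2)\,dx$ is at most $C e^{-r^2/(2\gamma)}$ times a polynomial in $r$ and $\gamma$. Here we also need $e^{-f}$ to grow at most sub-Gaussianly; since $f$ is convex, $e^{-f}$ is bounded on any bounded set and $f(x)\ge -A-B\|x\|$, so the tail is dominated by $e^{B\|x\|-d^2/(2\gamma)}$, which is fine for $\gamma\le\gamma_0<1$.

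Putting the pieces together gives
\begin{equation*}
\Wc_2^2\le C\paren{r+e^{-r^2/(2\gamma)}\mathrm{poly}(r)} .
\end{equation*}
We then choose $r\asymp\sqrt{\gamma\log(1/\gamma)}$. The condition $\gamma_0<1/e$ ensures $\log(1/\gamma)>1$, so this choice is legitimate and $r\le 1$ is controlled by $\gamma_0$. This yields $\Wc_2^2\lesssim(\gamma\log(1/\gamma))^{1/2}$, i.e. $\Wc_2\lesssim(\gamma\log(1/\gamma))^{1/4}$. That is already better than the stated rate $\gamma^{1/8}(\log(1/\gamma))^{1/8}$.

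The stated exponent $1/8$ suggests the authors lose an extra square root somewhere. A likely source is passing through $\|\pi^\gamma-\pi\|_{TV}$ with a Cauchy–Schwarz step, bounding $\Wc_2^2\le C\sqrt{\|\cdot\|_{TV}}$ using finite fourth moments. If my sharper bookkeeping fails, I would follow exactly that route: first show $\|\pi^\gamma-\pi\|_{TV}\le C\sqrt{\gamma\log(1/\gamma)}$, then apply Cauchy–Schwarz with a uniform fourth-moment bound for $\pi^\gamma$, $\gamma\le\gamma_0$. Since $(\gamma\log(1/\gamma))^{1/4}\le(\gamma\log(1/\gamma))^{1/8}$ whenever $\gamma\log(1/\gamma)<1$, either route proves the lemma. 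The constant $\mc C$ collects $\mathrm{diam}(\mk)$, $\inf_{\mk}f$ and $\sup_{\mk+B_1}f$, $Z^{-1}$, and the moment bounds, with dependence on $\gamma_0$ entering through the uniform tail estimates.

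The main obstacle is the shell volume estimate $\mathrm{Leb}(S_r)\le Cr$ and the uniform tail/moment control off $\mk$. The first I would get from the Steiner formula for convex bodies. For the second I would use the convex lower bound $f(x)\ge f(x_0)+\langle\nabla f(x_0),x-x_0\rangle$, so that the integrand off $\mk$ is dominated by a shifted Gaussian in the distance variable.
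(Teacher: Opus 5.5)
Your proof is correct, but it takes a different route from the paper's.

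\textbf{The paper's route.} The paper never bounds $|\pi^\gamma-\pi|$ directly. It uses that $\pi\ll\pi^\gamma$ with constant density ratio on $\mk$, so $D(\pi\Vert\pi^\gamma)=\log(Z_\gamma/Z)\le (Z_\gamma-Z)/Z$. It then bounds $Z_\gamma-Z$ by essentially your shell/tail split: it cuts $d(\cdot,\mk)^2$ at $2\tilde\alpha\gamma\log(1/\gamma)$, controls the shell volume by Lemma~\ref{lem:lebsgue}, and controls the rest by a factor $\gamma^{\tilde\alpha}$. This gives $D(\pi\Vert\pi^\gamma)\le A_*\sqrt{\gamma\log(1/\gamma)}$. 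Finally it invokes the Bolley--Villani weighted Csisz\'ar--Kullback--Pinsker inequality $\Wc_2\le\hat C\,(D^{1/2}+(D/2)^{1/4})$. The $D^{1/4}$ term is where the exponent $1/8$ comes from. So your diagnosis of a lost square root is right; the loss is the Pinsker-type step inside that inequality.

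\textbf{Your route.} You apply Villani's weighted total-variation bound to the explicit signed difference. Because the densities are proportional on $\mk$, the second-moment-weighted mass of $|\pi^\gamma-\pi|$ obeys the same $O(\sqrt{\gamma\log(1/\gamma)})$ shell/tail bound as $Z_\gamma-Z$, and hence as the KL divergence. No square root is lost, and you get $\Wc_2\lesssim(\gamma\log(1/\gamma))^{1/4}$. Integrating $e^{-s^2/(2\gamma)}$ against level-set areas via the coarea formula, instead of cutting at a threshold, would even remove the logarithm and give $\gamma^{1/4}$.

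\textbf{What each approach buys.} You pay with an extra weighted integral over $\mk^c$ and a Steiner/coarea shell estimate that uses convexity of $\mk$. In exchange you need only a linear lower bound on $f$. You also avoid the uniform-in-$\gamma$ Gaussian integrability of $\pi^\gamma$ that $\sup_{\gamma\le\gamma_0}\hat C$ requires. The paper's route gets a black-box transfer from a one-line identity for $D(\pi\Vert\pi^\gamma)$, mirroring the penalized-Langevin analysis.

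\textbf{Relation to the stated lemma.} Your constant is not the one in \eqref{eq:lebesgue5}. Since $\gamma\log(1/\gamma)\le 1/e$ on $(0,\gamma_0]$, your bound still implies the stated inequality with a $\gamma_0$-dependent constant. The sharper rate would also let one take $\gamma$ of order $\varepsilon^4$ (up to logarithms) instead of $\varepsilon^8$ in Corollary~\ref{cor:revised}.
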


\subsection{Decentralized Proximal SGLD}
We are now ready to present our main algorithm that requires the following assumptions. Throughout this paper, we assume that the following assumptions hold.

\begin{assumption}
    \label{assumption:convex}
    We assume each component function $f_i:\mathbb{R}^d\rightarrow \mathbb{R}, \in \mathcal{S}_{\mu,L}$ is $\mu-$strongly convex and $L$-smooth with $L>\mu$. That is for any $f_i \in \mathcal{S}_{\mu,L}$, and for every $x_1, x_2 \in \mathbb{R}^d$, 
    \[
    \frac{\mu}{2}\|x_1-x_2\|^2 \le f_i(x_1)-f_i(x_2)-\nabla f_i(x_2)^\top(x_1-x_2) \le \frac{L}{2}\|x_1-x_2\|^2.
    \]  
\end{assumption}
Next, denote $B(x,r)$ as the closed ball centered at $x\in \R^d$ with radius $r>0$ such that $B(x,r):=\{y\in \R^d:\|y-x\|\le r\}$. 
We impose the following assumption on $\mathcal{K}$. 

\begin{assumption}
    \label{assumption:ball}
    There exist $r, R>0, r\le R$, such that, $B(0,r)\subset \mk\subset B(0,R)$.
\end{assumption}

Next, we impose the following assumption on the gradient noise.

\begin{assumption}
    \label{assump:gradnoise}
    The gradient noises are defined as $\xi_i^{(k+1)}:=\tilde{\nabla}f_i\left(x_i^{(k)}\right) - \nabla f_i\left(x_i^{(k)}\right)$ are unbiased with a finite second moment. That is, $\mathbb{E}\left[\xi_i^{(k+1)} \vrule\ \mathcal{F}_k\right]= 0,\,\text{and } \mathbb{E}\left\|\xi_i^{(k+1)}\right\|^2 \le \sigma^2$, where $\mathcal{F}_k$ is the natural filtration of the iterates $x_i^{(k)}$ up to and including time $k$.
\end{assumption}
We also use two important properties from~\cite{rockafellar1998variational} and~\cite{brosse2017sampling}. The proximal function $\prox$ is convex, continuously differentiable, and $\gamma^{-1}$-Lipschitz, \emph{i.e.,}
\[
\nabla \prox(x)= \gamma^{-1}\paren{x-\proj(x)},\qquad \norm{\nabla \prox(x)-\nabla \prox(y)}\le \gamma^{-1}\norm{x-y},\quad\forall\, x,y\in \R^d.
\]
Based on the setup and assumptions, we present the Decentralized Proximal Stochastic Gradient Langevin Dynamics (DE-PSGLD) as
\begin{equation}
    \begin{split}
        x_i^{(k+1)}&=\sum_{j\in \Omega_i} W_{ij}x_j^{(k)}-\eta\sqb{\nabla f_i\paren{x_i^{(k)}}+\frac{1}{N\gamma}\paren{x_i^{(k)}-\proj\paren{x_i^{(k)}}}}-\eta \xi_i^{(k+1)}+\sqrt{2\eta}w_i^{(k+1)},
    \end{split}
    \label{eq:dpsgld}
\end{equation}
where $\eta>0$ is the stepsize, $\xi_{i}^{(k)}$ are stochastic gradient noise satisfying Assumption~\ref{assump:gradnoise}, $w_i^{(k)}$ are i.i.d. Gaussian random vectors $\mathcal{N}(0,\mathbf{I}_d)$ independent of the stochastic gradient noise, and $\Omega_i = \{j: (i,j)\in \mathcal{G}\}$ are the neighbors of the node $i$.

\subsection{Convergence Analysis}
Next, we concatenate the local decision variables into a single vector to facilitate convergence analysis. For $\mk\subset \R^d$, we define the product set $\mk^N:=\mk\times\cdots \times\mk \subset \R^{Nd}$, and let $x^{(k)}$ define the decision vector from all agents
\begin{equation}
    x^{(k)} = \left[\left(x^{(k)}_1\right)^\top,\left(x^{(k)}_2\right)^\top,\ldots, \left(x^{(k)}_N\right)^\top\right]^\top \in \mathbb{R}^{Nd}.
    \label{eq:concatenatedx}
\end{equation}
Define $G:\R^{Nd}\rightarrow \R$ with $G(x):=G(x_1,x_2,\cdots,x_N)=\sum_{i=1}^N\normsq{x_i-\proj(x_i)}$ and $F:\R^{Nd}\rightarrow \R$ with $F(x):=F(x_1,x_2\cdots,x_N)=\sum_{i=1}^N f_i(x_i)$. Then, we further define $U^{\gamma}(x):=F(x)+\frac{1}{2N\gamma}G(x)$ such that $U^{\gamma}(x)=\sum_{i=1}^{N}u_{i}^{\gamma}(x_{i})$, where
$u_{i}^{\gamma}(x_{i}):=f_{i}(x_{i})+\frac{q_{\mathcal{K}}^{\gamma}(x_{i})}{N}$. Let $x_{*}^\gamma\in \R^d$ be the unique minimizer of $u^\gamma$, and $x^*_\gamma=[(x_{*}^\gamma)^\top,\cdots,(x_{*}^\gamma)^\top]^\top$ is a vector in $\mathbb{R}^{Nd}$.
Then the concatenated DE-PSGLD iterations can be written as follows
\begin{equation}
    x^{(k+1)}=\Wc x^{(k)} -\eta \nabla U^{\gamma}\paren{x^{(k)}}-\eta \xi^{(k+1)}+\sqrt{2\eta}w^{(k+1)},
    \label{eq:concatenatedxk}
\end{equation}
where $\nabla U^{\gamma}(x)=\nabla F\paren{x}+\frac{1}{N\gamma}\nabla G(x)$, $\mathcal{W}=W\otimes \mb{I}_d,\,$ and $\otimes$ denotes the Kronecker product, and the noise terms are defined as 
\begin{align*}
    w^{(k+1)} &= \left[\left(w^{(k+1)}_1\right)^\top,\ldots, \left(w^{(k+1)}_N\right)^\top\right]^\top,\quad
    \xi^{(k+1)} = \left[\left(\xi^{(k+1)}_1\right)^\top,\ldots, \left(\xi^{(k+1)}_N\right)^\top\right]^\top,
\end{align*}
with the properties $\mathbb{E}\left[\xi^{(k+1)} \vrule\ \mathcal{F}_k\right]= 0$ and $\mathbb{E}\left\|\xi^{(k+1)}\right\|^2 \le \sigma^2N$ that follows from Assumption~\ref{assump:gradnoise}. Let $\bar{x}^{(k)}: = \frac{1}{N}\sum_{i=1}^{N}x_i^{(k)}$ denote the mean iterate at the $k$-th iteration. Since $\Wc$ is doubly stochastic, the average iterates satisfy:
\begin{align}
    \bar{x}^{(k+1)}&=\bar{x}^{(k)}-\eta\bsqb{\frac1N\sum_{i=1}^N\nabla f_i\paren{x_i^{(k)}}+\frac{1}{N\gamma}\paren{\bar{x}^{(k)}-\frac1N\sum_{i=1}^N\proj\paren{x_i^{(k)}}}}\nonumber\\
    &\qquad\qquad\qquad-\eta\bar{\xi}^{(k+1)}+\sqrt{2\eta}\bar{w}^{(k+1)},
    \label{eq:meanite}
\end{align}
where $\bar{\xi}^{(k+1)}:=\frac1N\sum_{i=1}^{N}\xi_i^{(k+1)}$, and $\bar{w}^{(k+1)}:=\frac1N\sum_{i=1}^{N} w_i^{(k+1)}$. 

Now we state the main result that provides the finite-time convergence guarantees. 

\begin{theorem}
    \label{thm:mt}
    Assume that $\E\normsq{x^{(0)}}<\infty$, and the stepsize $0<\eta<\eta_{\max}:=\min\left\{\frac{2N}{L_\gamma},\frac{1+\lambda_N^W}{L_\gamma},\frac{1}{L_\gamma+\mu}\right\}$ with $L_\gamma:=L+\frac{2}{N\gamma}$. Then, for every $k\in\mathbb{N}$, DE-PSGLD updates $x_i^{(k)}$ and their average $\bar{x}^{(k)}$ satisfy
    \begin{align}
        &\Wc_2\paren{\Law\paren{\bar{x}^{(k)}},\pi}\le
        (1-\mu\eta)^k\mc{C}_0+\paren{\frac{\rho^{2k}-\paren{1-\frac{\mu\eta}{N}\paren{1-\frac{\eta L_\gamma}{2N}}}^k}{\rho^2-\paren{1-\frac{\mu\eta}{N}\paren{1-\frac{\eta L_\gamma}{2N}}}}}^\frac12\frac{\rho}{\sqrt{N}}\mc{C}_1+\sqrt{\eta}\,\mc{C}_2\nonumber\\
        &\qquad\qquad\qquad\qquad\quad+\mc{C}\gamma^{1/8}\paren{\log\paren{1/\gamma}}^{1/8},\nonumber\\
        &\frac{1}{N}\sum_{i=1}^N\Wc_2\paren{\Law\paren{x_i^{(k)}},\pi}\le (1-\mu\eta)^k\mc{C}_0+\paren{\frac{\rho^{2k}-\paren{1-\frac{\mu\eta}{N}\paren{1-\frac{\eta L_\gamma}{2N}}}^k}{\rho^2-\paren{1-\frac{\mu\eta}{N}\paren{1-\frac{\eta L_\gamma}{2N}}}}}^\frac12\frac{\rho}{\sqrt{N}}\mc{C}_1\nonumber\\
        &\qquad\qquad\qquad\qquad\qquad\qquad\quad+\sqrt{\eta}\,\tilde{\mc{C}}_2+\frac{\rho^k}{\sqrt{N}}\mc{C}_3+\eta\,\mc{C}_4+\mc{C}\gamma^{1/8}\paren{\log\paren{1/\gamma}}^{1/8},\nonumber
    \end{align}
    where $\mc{C},\,\mc{C}_0,\, \mc{C}_1,\, \mc{C}_2,\,\tilde{\mc{C}}_2,\,\mc{C}_3,\mc{C}_4>0$ are explicit constants defined in the Appendix that depend on $\mu, L, \gamma, \rho,\sigma, d, N,\gamma_0$, and $\pi\propto e^{-f(x)}$ is the Gibbs distribution supported on $\mathcal{K}$.
\end{theorem}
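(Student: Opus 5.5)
The proof will chain three estimates through the triangle inequality:
\[
\Wc_2\paren{\Law(\bar x^{(k)}),\pi}\le \Wc_2\paren{\Law(\bar x^{(k)}),\Law(\bar z^{(k)})}+\Wc_2\paren{\Law(\bar z^{(k)}),\pi^\gamma}+\Wc_2(\pi^\gamma,\pi).
\]
Here $\bar z^{(k)}$ is a centralized proximal SGLD chain driven by the full regularized potential $u^\gamma/N$. The last term is handled directly by Lemma~\ref{lem:newlemma}, which produces the $\mc{C}\gamma^{1/8}(\log(1/\gamma))^{1/8}$ contribution. The rest of the argument mirrors the DE-SGLD analysis of \citet{gurbuzbalaban2021decentralized}. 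This is possible because, by Assumption~\ref{assumption:convex} and the stated convexity and $\gamma^{-1}$-smoothness of $\prox$, each $u_i^\gamma$ is $\mu$-strongly convex and $(L+\frac{1}{N\gamma})$-smooth. Hence $U^\gamma$ lies in $\mathcal{S}_{\mu,L_\gamma}$ with a margin, and the whole problem reduces to the smooth strongly convex unconstrained case with $L$ replaced by $L_\gamma$.

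\textbf{Step 1: consensus error.} Write $x^{(k)}-\mathbf{1}\otimes\bar x^{(k)}=(\Wc-\tfrac1N\mathbf{1}\mathbf{1}^\top\otimes\mb{I}_d)(\cdots)$ and use $\|W-\tfrac1N\mathbf{1}\mathbf{1}^\top\|=\rho$. Unrolling the recursion gives
\[
\E\|x^{(k)}-\mathbf{1}\otimes\bar x^{(k)}\|^2\lesssim \rho^{2k}\E\|x^{(0)}\|^2+\frac{\eta^2 D_\gamma^2+\eta^2\sigma^2N+2\eta Nd}{(1-\rho)^2}.
\]
Here $D_\gamma$ bounds $\E\|\nabla U^\gamma(x^{(k)})\|^2$. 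To obtain this uniform bound, I will first show that $\E\|x^{(k)}-x^*_\gamma\|^2$ is uniformly bounded. The proof of that is the standard contraction of the map $x\mapsto \Wc x-\eta\nabla U^\gamma(x)$ around its fixed point; this is exactly where the conditions $\eta<\frac{1+\lambda_N^W}{L_\gamma}$ and $\eta<\frac{1}{L_\gamma+\mu}$ enter, through the smoothness of $\frac{1}{2\eta}x^\top(\mb{I}-\Wc)x+U^\gamma(x)$. The bound is then converted via $L_\gamma$-smoothness. This step yields the $\rho^k\mc{C}_3/\sqrt N$ and $\eta\mc{C}_4$ terms for the individual agents.

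\textbf{Step 2: coupling the average with a centralized chain.} Rewrite \eqref{eq:meanite} as $\bar x^{(k+1)}=\bar x^{(k)}-\frac{\eta}{N}\nabla u^\gamma(\bar x^{(k)})+\eta\mathcal{E}_k-\eta\bar\xi^{(k+1)}+\sqrt{2\eta}\bar w^{(k+1)}$, where $\|\mathcal{E}_k\|\le \frac{L_\gamma}{N}\sum_i\|x_i^{(k)}-\bar x^{(k)}\|\cdot\frac{1}{\sqrt N}$-type bounds come from Lipschitzness of $\nabla f_i$ and $\proj$. I will synchronously couple this with $\bar z^{(k)}$, which uses the same Gaussian noise and the same noise $\bar\xi$. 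Using the gradient-descent contraction $\|y-\frac{\eta}{N}\nabla u^\gamma(y)-z+\frac{\eta}{N}\nabla u^\gamma(z)\|^2\le(1-\frac{\mu\eta}{N}(1-\frac{\eta L_\gamma}{2N}))\|y-z\|^2$, valid for $\eta<2N/L_\gamma$, I get a linear recursion driven by the Step~1 consensus error. Summing the geometric convolution of $\rho^{2j}$ against this contraction factor produces exactly the term $\big(\frac{\rho^{2k}-q^k}{\rho^2-q}\big)^{1/2}\frac{\rho}{\sqrt N}\mc{C}_1$ with $q=1-\frac{\mu\eta}{N}(1-\frac{\eta L_\gamma}{2N})$. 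The stationary part of the same recursion contributes an $O(\sqrt\eta)$ term.

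\textbf{Step 3: centralized SGLD bias.} The chain $\bar z^{(k)}$ is SGLD with effective stepsize $\eta/N$ for the target $\pi^\gamma\propto e^{-u^\gamma}$, with noise variance $\frac{2\eta}{N}$ matching. The known result of \citet{dalalyan2019user} for strongly convex smooth potentials gives $\Wc_2(\Law(\bar z^{(k)}),\pi^\gamma)\le(1-\mu\eta)^k\mc{C}_0+\sqrt\eta\,\mc{C}_2$, after absorbing the factors of $N$ into the constants. The individual-agent bound then follows from $\Wc_2(\Law(x_i^{(k)}),\pi)\le(\E\|x_i^{(k)}-\bar x^{(k)}\|^2)^{1/2}+\Wc_2(\Law(\bar x^{(k)}),\pi)$, averaging over $i$, and applying Cauchy--Schwarz to the Step~1 bound.

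I expect the main obstacle to be Step~1's uniform moment bound. Because of the $\frac{1}{N\gamma}$ penalty, $L_\gamma$ is huge and the fixed point of $\Wc x-\eta\nabla U^\gamma$ differs from $x^*_\gamma$. I will therefore bound the distance to that fixed point first, and then bound the gap between the fixed point and $x^*_\gamma$ by $O(\eta\|\nabla U^\gamma(x^*_\gamma)\|/(1-\rho))$, where $\|\nabla u_i^\gamma(x^\gamma_*)\|$ is controlled via Assumption~\ref{assumption:ball}. All resulting constants will be tracked explicitly in $\gamma$.
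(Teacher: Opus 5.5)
Your proposal is correct and follows essentially the same route as the paper. Both use the same three-term triangle inequality, a uniform bound on $\E\|\nabla U^\gamma(x^{(k)})\|^2$ obtained by contraction toward the minimizer $x^*_{\eta,\gamma}$ of $\frac{1}{2\eta}x^\top(\mb{I}_{Nd}-\Wc)x+U^\gamma(x)$ together with an $O(\eta/(1-\rho))$ bound on $\|x^*_{\eta,\gamma}-x^*_\gamma\|$ (the paper cites Yuan et al.\ for this), the consensus bound, a synchronous coupling of $\bar x^{(k)}$ with a centralized Moreau--Yosida chain for $u^\gamma/N$, Dalalyan--Karagulyan, and Lemma~\ref{lem:newlemma}. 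The one real difference is that your comparison chain shares $\bar\xi^{(k+1)}$, so the $\sigma$-dependent $\sqrt{\eta}$ term has to come from the inexact-gradient version of Dalalyan--Karagulyan, which requires checking that $\bar\xi^{(k+1)}$ is a martingale difference with respect to a filtration containing the comparison chain and the coupled diffusion; the paper instead uses an exact-gradient comparison chain $\tilde x^{(k)}$ and absorbs $\bar\xi$ directly in the coupling estimate of Lemma~\ref{lem:myula-avg-bound}.
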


Next, by using Theorem~\ref{thm:mt}, we obtain the iteration complexity in terms of the dependence on the accuracy level $\varepsilon$ and the dimension $d$.

\begin{corollary}
    \label{cor:revised}
    For any given $\varepsilon>0$ that is sufficiently small, we have $\Wc_2\paren{\Law\paren{\bar{x}^{K},\pi}}\le \tilde{\mc{O}}(\varepsilon)$, and $\frac1N\sum_{i=1}^N\Wc_2\paren{\Law\paren{x_i^{(K)},\pi}}\le \tilde{\mc{O}}(\varepsilon)$, where $\tilde{\mc{O}}$ hides the logarithmic dependence on $\varepsilon$, provided that $\gamma=\mc{O}(\varepsilon^8)$, $\eta=\mc{O}(\varepsilon^{18}/d)$ and $K=\mc{O}\paren{\frac{1}{\eta}\log\paren{\frac{\sqrt{d}}{\varepsilon^{9}}}}$. In particular, the iteration complexity is given by $K=\mc{O}\paren{\frac{d}{\varepsilon^{18}}\log\paren{\frac{\sqrt{d}}{\varepsilon^{9}}}}$.
\end{corollary}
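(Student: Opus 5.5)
We plug parameter choices into Theorem~\ref{thm:mt} and make each of its terms $\tilde{\mc{O}}(\varepsilon)$. The argument hinges on tracking how the constants $\mc{C}_0,\mc{C}_1,\mc{C}_2,\tilde{\mc{C}}_2,\mc{C}_3,\mc{C}_4$ depend on $\gamma$ and $d$; the dependence on $\mu,L,\rho,\sigma,N$ is treated as $\mc{O}(1)$.

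\textbf{Bias term.} Choose $\gamma=\varepsilon^8$. The bias term of Lemma~\ref{lem:newlemma} then satisfies
\[
\mc{C}\gamma^{1/8}(\log(1/\gamma))^{1/8}=\mc{C}\,\varepsilon\,(8\log(1/\varepsilon))^{1/8}=\tilde{\mc{O}}(\varepsilon).
\]
For $\varepsilon$ small we have $\gamma\le\gamma_0<1/e$, so $\mc{C}$ is a fixed constant.

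\textbf{Discretization terms.} With this $\gamma$ we get $L_\gamma=L+2/(N\gamma)\asymp \varepsilon^{-8}$. From the appendix I expect $\mc{C}_2,\tilde{\mc{C}}_2$ to scale like $\sqrt{d}\,L_\gamma$ (or at worst $\sqrt{d}L_\gamma/\sqrt{\mu}$), as is typical for SGLD discretization bounds. They combine the diffusion contribution $\sqrt{d}$ with the Lipschitz constant of the drift, and the drift now carries the $\gamma^{-1}$ factor from the Moreau--Yosida gradient. Hence $\sqrt{\eta}\,\mc{C}_2\lesssim\sqrt{\eta d}\,\varepsilon^{-8}$. Making this $\le\varepsilon$ requires $\eta\lesssim\varepsilon^{18}/d$, which is the stated choice. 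The same choice also forces $\eta<\eta_{\max}\asymp\gamma$ for small $\varepsilon$. The term $\eta\,\mc{C}_4$ is of higher order under this choice, assuming $\mc{C}_4$ grows at most like $\sqrt{d}L_\gamma^2$.

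\textbf{Transient terms.} These decay geometrically. The first is $(1-\mu\eta)^K\mc{C}_0\le e^{-\mu\eta K}\mc{C}_0$. Here $\mc{C}_0$ bounds the initial distance to $\pi^\gamma$ (and hence to $\pi$) and should be $\mc{O}(\sqrt{d})$, via the second moment of the initialization and of $\pi^\gamma$. Taking $K\gtrsim\frac{1}{\mu\eta}\log(\sqrt{d}/\varepsilon^{9})$ makes it $\le\varepsilon^9\ll\varepsilon$. The logarithm's argument $\sqrt d/\varepsilon^9$ is generous enough to also absorb any polynomial-in-$\varepsilon^{-1}$ factors in $\mc{C}_0,\mc{C}_1$ coming from $L_\gamma$.

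For the consensus term, write $q:=1-\frac{\mu\eta}{N}(1-\frac{\eta L_\gamma}{2N})\in(0,1)$. Since $\eta$ is tiny, $q$ is close to $1$ while $\rho<1$ is fixed, so $q>\rho^2$ and
\[
\frac{\rho^{2K}-q^K}{\rho^2-q}=\frac{q^K-\rho^{2K}}{q-\rho^2}\le\frac{q^K}{q-\rho^2}.
\]
The denominator is bounded below by a constant, and $q^K\le e^{-\mu\eta K/(2N)}$, which is again small for the chosen $K$ up to constant factors. The term $\rho^K\mc{C}_3/\sqrt N$ is exponentially small in $K$ since $K\to\infty$.

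\textbf{Conclusion and main obstacle.} Summing the terms gives both bounds, and $K=\mc{O}\paren{\frac{d}{\varepsilon^{18}}\log\paren{\frac{\sqrt d}{\varepsilon^9}}}$ follows by substituting $\eta$. The main obstacle is the bookkeeping of the explicit appendix constants. One must confirm that $\mc{C}_2$ scales no worse than $\sqrt{d}/\gamma$; a worse power of $\gamma^{-1}$ would change the $\varepsilon^{18}$ exponent. One must also confirm that $\mc{C}_0,\mc{C}_1,\mc{C}_3$ are at most polynomial in $d$ and $\gamma^{-1}$, so they are absorbed by the logarithm in $K$. My first step would be to extract exactly these scalings from the appendix definitions. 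I would then fix exponents so that $\sqrt{\eta}\,\mc{C}_2=\mc{O}(\varepsilon)$.
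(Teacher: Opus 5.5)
Your proposal follows the paper's proof: the same choices $\gamma=\varepsilon^8$ and $\eta=\varepsilon^{18}/d$, the same handling of the consensus transient via $A_\gamma(\eta)>\rho^2$ and $\frac{A_\gamma(\eta)^K-\rho^{2K}}{A_\gamma(\eta)-\rho^2}\le\frac{A_\gamma(\eta)^K}{A_\gamma(\eta)-\rho^2}$, and the scalings you anticipated ($\mc{C}_0=\mc{O}(\sqrt d)$, $\mc{C}_1=\mc{O}(1/\gamma)$, $\mc{C}_2,\mc{C}_4=\mc{O}(\sqrt d/\gamma)$) are what the paper derives. The one thing to correct in your heuristic is that $\mc{C}_2=\mc{O}(\sqrt d/\gamma)$ is not an $\eta$-free ``SGLD-type'' fact: $\mc{C}_2$ contains the term $L_\gamma^2\eta D_\gamma^2$ with $D_\gamma^2\gtrsim dL_\gamma^2$ (and $D_\gamma^2$ itself contains $L_\gamma^2C_1^2\eta^2$ with $C_1=\mc{O}(\gamma^{-3/2})$), so the scaling holds only when $\eta=\mc{O}(\gamma^2)$---this is why the paper sets $\eta=\gamma^\alpha/d$ and needs $\alpha>2$, then matches $\gamma^{\alpha/2-1}$ with $\gamma^{1/8}$ to get $\alpha=9/4$---and your $\eta=\gamma^{9/4}/d$ lies in that regime, so your conclusion stands.
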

The proof of Theorem~\ref{thm:mt} and Corollary~\ref{cor:revised} are given in Appendix~\ref{mth-proof} and Appendix~\ref{cor-proof}, respectively.
\section{Numerical Experiments}
\label{sec:numerics}

This section presents numerical experiments illustrating how the proposed \emph{Decentralized Proximal Stochastic Gradient Langevin Dynamics} (DE-PSGLD) can be used for sampling from constrained posteriors that arise in data science, machine learning, and sampling problems. In all of the experiments below, we use four types of network structures, such as fully connected, circular, star, and disconnected, as depicted in Figure~\ref{fig:all_networks}.

\begin{figure}[htbp]
    \centering
    \begin{minipage}[t]{0.24\textwidth}
        \centering
        \begin{tikzpicture}[baseline=(current bounding box.center)]
            \foreach \x in {1,...,6} {
                \node[draw, circle, fill=black!70, minimum size=5mm, inner sep=1.5pt] (node\x) at (360/6*\x:1.2) {};
            }
            \foreach \x in {1,...,6} {
                \foreach \y in {\x,...,6} {
                    \draw (node\x) -- (node\y);
                }
            }
        \end{tikzpicture}
        \caption*{\small{(a) Fully connected}}
    \end{minipage}
\hfill
    \begin{minipage}[t]{0.24\textwidth}
        \centering
        \begin{tikzpicture}[baseline=(current bounding box.center)]
            \foreach \x in {1,...,6} {
                \node[draw, circle, fill=black!70, minimum size=5mm, inner sep=1.5pt] (node\x) at (360/6*\x:1.2) {};
            }
            \foreach \x [remember=\x as \lastx (initially 6)] in {1,...,6} {
                \draw (node\lastx) -- (node\x);
            }
        \end{tikzpicture}
        \caption*{\small{(b) Circular}}
    \end{minipage}
\hfill
    \begin{minipage}[t]{0.24\textwidth}
        \centering
        \begin{tikzpicture}[baseline=(current bounding box.center)]
            \node[draw, circle, fill=black!70, minimum size=5mm, inner sep=1.5pt] (center) at (0,0) {};
            \foreach \x in {1,...,5} {
                \node[draw, circle, fill=black!70, minimum size=5mm, inner sep=1.5pt] (node\x) at (360/5*\x:1.2) {};
                \draw (center) -- (node\x);
            }
        \end{tikzpicture}
        \caption*{\small{(c) Star}}
    \end{minipage}
\hfill
    \begin{minipage}[t]{0.24\textwidth}
        \centering
        \begin{tikzpicture}[baseline=(current bounding box.center)]
            \foreach \x in {1,...,6} {
                \node[draw, circle, fill=black!70, minimum size=5mm, inner sep=1.5pt] (node\x) at (360/6*\x:1.2) {};
            }
        \end{tikzpicture}
        \caption*{\small{(d) Disconnected}}
    \end{minipage}
    \vspace{0.5cm}
    \caption{Different types of network structures}
    \label{fig:all_networks}
\end{figure}
\textbf{Synthetic 1-Dimensional Sampling.} We start our experiment by sampling from a non-Gaussian target on a compact convex set $\mk=[-1, 1]$ in one dimension, $d=1$, with density $\pi(x)\propto e^{-f(x)}$ where $f(x)=\frac{x^2}{2}+\frac{x^4}{8}-x$, and  $x\in\mk$. The target is smooth and log-concave near the origin. To evaluate sampling accuracy, we compute the 2-Wasserstein distance in 1 dimension~\citep{panaretos2019statistical} based on the discretization below:
\[
\mathcal{W}_2^2(\mu, \nu) = \int_0^1 \left(F^{-1}_{\mu}(u)-F^{-1}_{\nu}(u)\right)^2 du\approx \frac1n\sum_{k=1}^n \left(Q(u_k)-x^{(k)}\right),\quad u_k:=\frac{k-\frac12}{n},
\]
where $\mu$ denotes the true distribution with CDF $F$, 
$\nu$ denotes the distribution of the sampler, 
$Q=F^{-1}$ is the true quantile of the target on $[-1, 1]$, and $x^{(k)}$ are stored samples at the $k$-th iteration. We use the step size $\eta=5\times 10^{-4}$ and proximal regularizer $\gamma=3.3\times 10^{-4}$, and we iterate through $300$ steps. In each iteration, we generate $100$ samples of the variable $x$ for both DE-PSGLD and PSGLD algorithms, and for decentralization, we used a $30$-agent mixing matrix. 

\begin{figure}
    \centering
    \includegraphics[width=\linewidth]{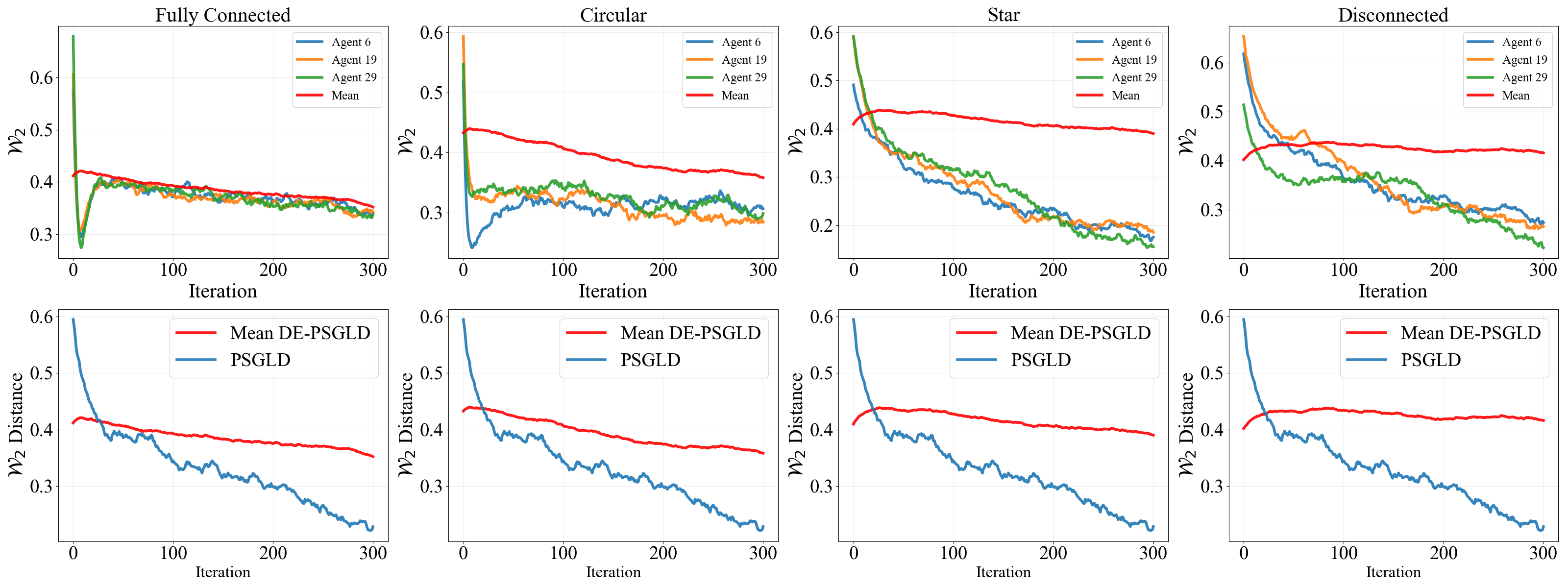}
    \caption{Evolution of the 2-Wasserstein distance between the target distribution and the samples generated by decentralized proximal SGLD (DE-PSGLD) and centralized Moreau-Yosida SGLD (PSGLD) across different network styles.}
    \label{fig:1dsampling}
\end{figure}

The top row in Figure~\ref{fig:1dsampling} shows the 2-Wasserstein distances between the true distribution and randomly chosen three representative agents and the network mean across four networks. These plots suggest that a stronger connection yields rapid agent agreement and a steady decay of the 2-Wasserstein distance. However, a weaker connection, such as a star network, yields slow convergence. From the bottom row, we see that the centralized PSGLD converges faster than the mean DE-PSGLD due to the absence of communication constraints. The DE-PSGLD samples concentrate most of their mass inside the feasible region, and the mean chain is more stable than a single agent. In contrast, centralized PSGLD exhibits a wider distribution with more mass outside $[-1,1]$ as shown in Figure~\ref{fig:1D_density_compare}.

\begin{figure}
    \centering
    \includegraphics[width=\linewidth]{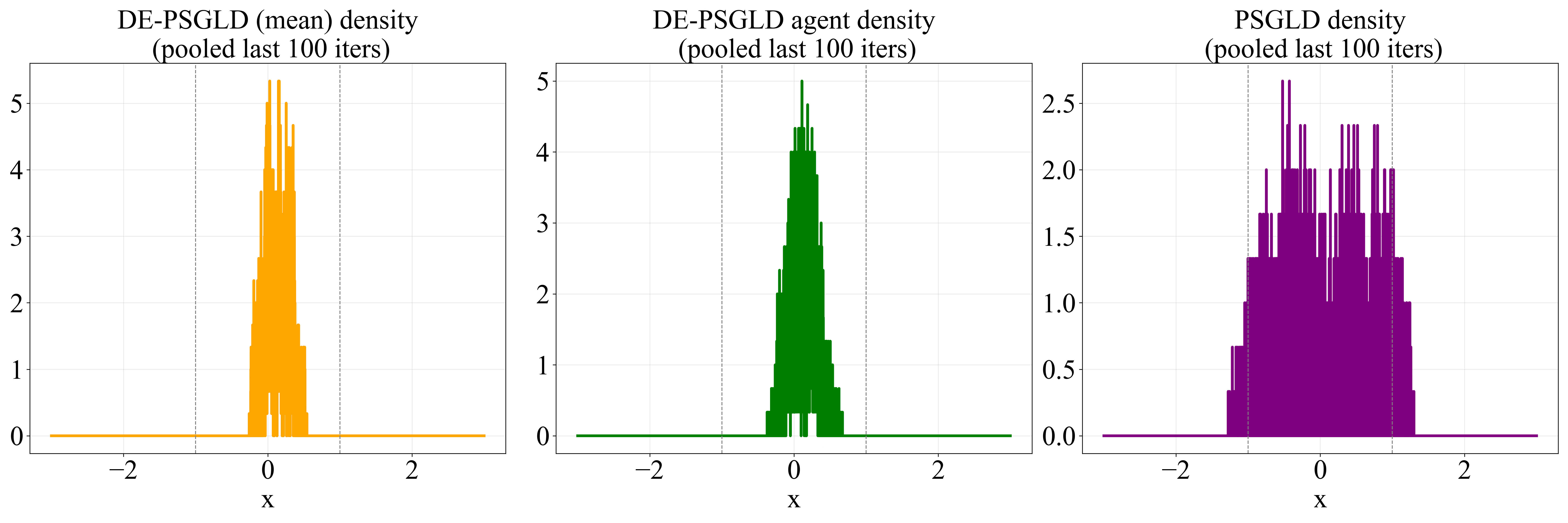}
    \caption{Comparison of the sampling from the target density $\pi(x)\propto e^{-f(x)}\mb{1}_{[-1,1]}$ with empirical densities obtained from DE-PSGLD and PSLGD. The mean and agent 1 are from a fully connected network style.}
    \label{fig:1D_density_compare}
\end{figure}

\textbf{Synthetic 2-Dimension: Bayesian Linear Regression.} The constrained sampling has great usefulness in the Bayesian regression problem. For instance, if we set the constraint that the model parameters are bounded in the $\ell_1$-ball, then the model is referred to as the Lasso regression; if we set the constraint to be the $\ell_2$-ball, then the regression is referred to as the ridge regression. As a toy experiment, we consider a two-dimensional problem with synthetic data generated using the following model
\begin{equation}
    y_i = \beta_*^{\top}X_i + \epsilon_i,\qquad \epsilon_i\sim\mathcal{N}(0, 0.25),\qquad X_i\sim\mathcal{N}(0, \mathbf{I}_2),\qquad \beta_* = [1, 1]^{\top},\label{eq:modelblr}    
\end{equation}
where $\beta_*$ is the true value of the parameters. Given the data, our goal is to sample from the posterior distribution $p(\beta\mid \mb{X, y}) \propto p(\mb{X}\mid \beta)p(\beta)$. The prior $p(\beta)$ is defined as a uniform distribution over the $\ell_p$-ball of radius $s$. Specifically, we set $s$ to be $80\%$ of the norm of the Ordinary Least Squares (OLS) estimate, i.e., $\mk = \{\beta: \norm{\beta}_2\le s\}$ and $s=\frac{4}{5}\norm{\beta_{\mathrm{OLS}}}_{2}$. It is uniformly distributed in the $\ell_2$-ball centered at the origin as shown in Figure~\ref{fig:2dprior}. To run the sampling, we simulate 10,000 observations using the model~\eqref{eq:modelblr} and distribute them among $N=20$ agents. Under the assumption of Gaussian noise $\epsilon_i\sim \mathcal{N}(0, \sigma^2)$, the log-likelihood corresponds to the standard least-squares loss $f(\beta) = \frac{1}{2\sigma^2}\sum_{i=1}^{N}\sum_{j=1}^{n_i} (y_j-\beta^{\top}X_j)^2$, where $n_i=500$ is the number of data points in agent $i$.
\begin{figure}[htbp]
    \centering
    \begin{minipage}{0.55\textwidth}
        For this set of experiments, we take a batch size of 100 and run both DE-PSGLD and PSGLD algorithms for 500 iterations with a fixed proximal regularizer $\gamma=5\times 10^{-5}$ and stepsize $\eta=5\times 10^{-4}$. At each iteration, we generate $300$ samples of the parameters.
        
        The results from the PSGLD are shown in Figure~\ref{fig:mysgld2dpost}: the PSGLD algorithm successfully concentrates the posterior density near the boundary of the $\ell_2$-ball closest to the true parameter $\beta_*$. Since the true parameter (the red star) lies outside the constraint set $\mathcal{K}$, the posterior density is ``pushed'' against the boundary, illustrating how the proximal term $\frac{1}{2\gamma}\|\beta - \proj(\beta)\|^2$ allows the sampler to respect the hard constraint while targeting the likelihood. 

        The result from the DE-PSGLD algorithm is shown in Figure~\ref{fig:blrsampldpsgld}. We observe the impact of the network structure on the sampling quality. Networks with some form of connections, such as fully connected, circular, and star, show strong consensus. The local posterior is nearly identical to the ensemble mean (bottom row). The fourth column, \emph{i.e.,} the disconnected network sets the baseline in sampling. Since the agents are not connected and do not communicate, the samples are nearly uniformly distributed within the constrained region, and the mean is far from the true parameters. It shows the importance of communication and decentralization.
    \end{minipage}
    \hfill 
    \begin{minipage}{0.4\textwidth}
        \centering
        \includegraphics[scale=0.22]{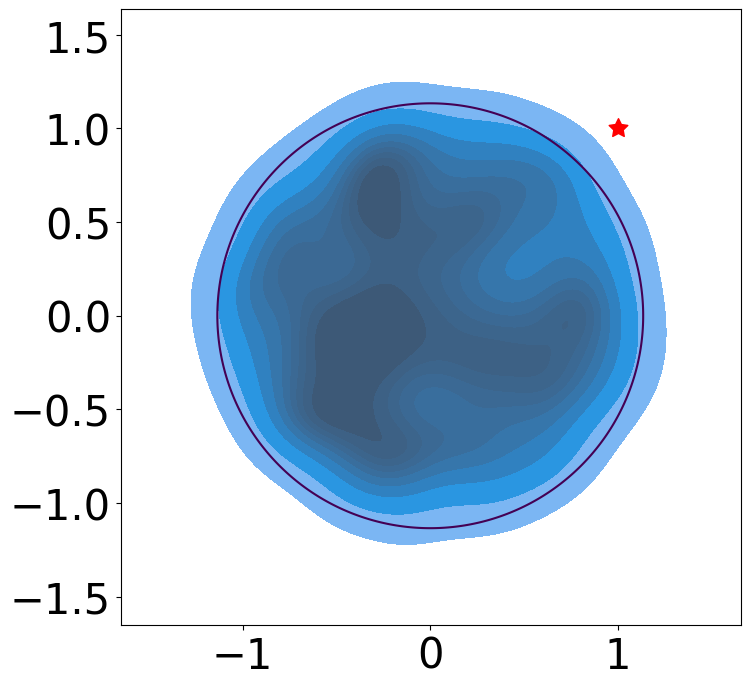}
        \caption{The contour plot prior distribution constrained in the 2-norm ball around the origin.}
        \label{fig:2dprior}
        \includegraphics[scale=0.22]{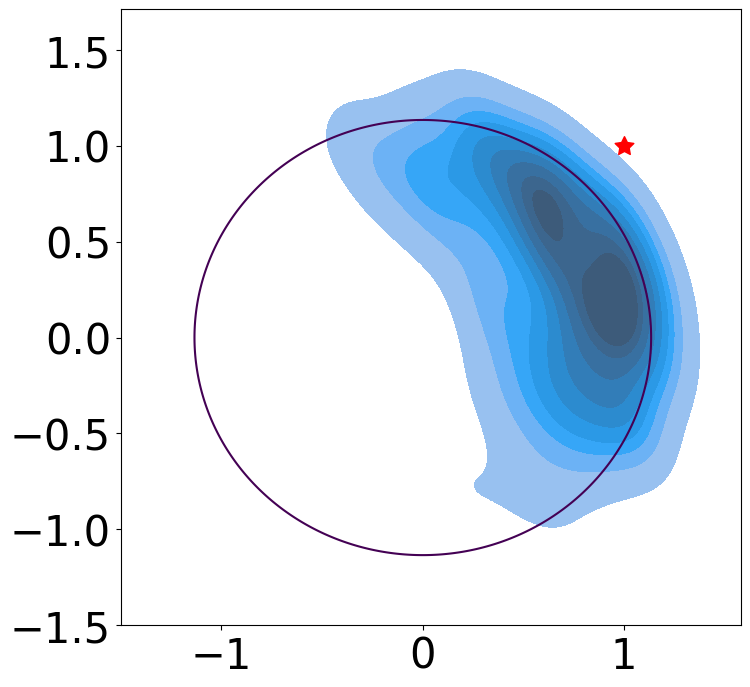}
        \caption{The contour plot of the posterior distribution of the model parameter sampled using the PSGLD algorithm.}
        \label{fig:mysgld2dpost}
    \end{minipage}
\end{figure}
\begin{figure}
    \centering
    \includegraphics[width=\linewidth]{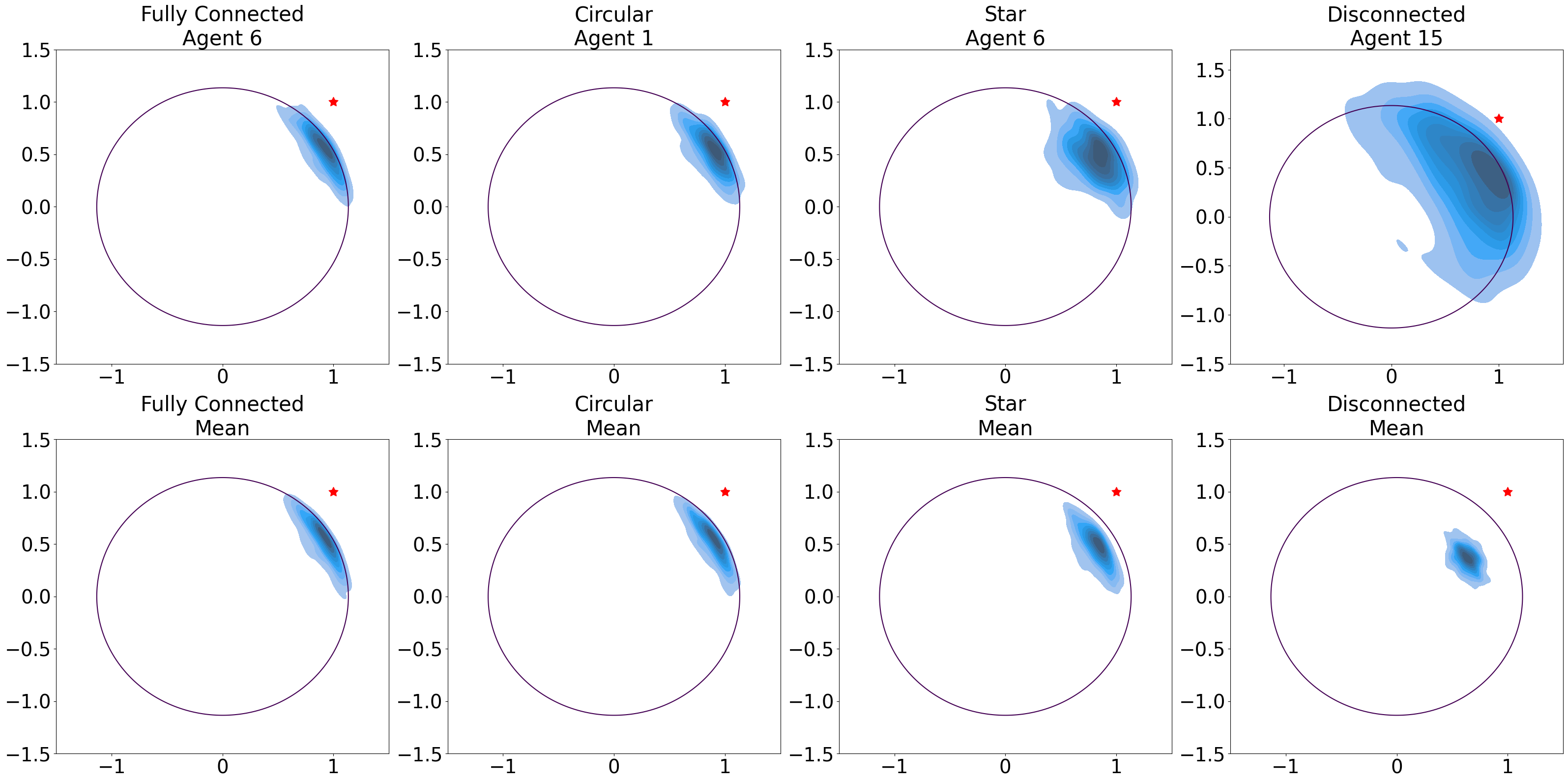}
    \caption{The top row shows the contour plot of samples of a randomly chosen agent out of 20 agents. The bottom row shows the mean of all agents across all 4 network structures. The red star is the true location of the parameter in 2D space.}
    \label{fig:blrsampldpsgld}
\end{figure}\hfill\\
\textbf{Real $d$-Dimension: Bayesian Logistic Regression.} In this set of experiments, we use UCI ML Breast Cancer Wisconsin (Diagnostic) datasets~\citep{breast}. The data is in the form of an input-output pair $(X_i,y_i)_{i=1}^{569}$ where each $X_i\in \R^{30}$ represents a feature vector containing 30 predictive attributes, and the binary target $y_i\in \{0,1\}$ represents the diagnostic classification: $212$ malignant vs $357$ benign. The prior $p(\beta)$ is taken from the $\ell_2$-ball of radius $s$, where $s$ is the $80\%$ of the norm of the $\beta$ parameters from unconstrained logistic regression using the Maximum Likelihood Estimation (MLE) method. In our experiment, we consider a network of 5 agents and distribute the data equally among the agents. Figure~\ref{fig:bcan} illustrates the progress in classification accuracy as we increase the number of iterations up to $1000$. Each iteration generates $1000$ samples, and uses a mini-batch size of $10$, stepsize $\eta=0.005$, proximal parameter $\gamma = 0.16$. 
\begin{figure}[h]
    \centering
    \includegraphics[width=\linewidth]{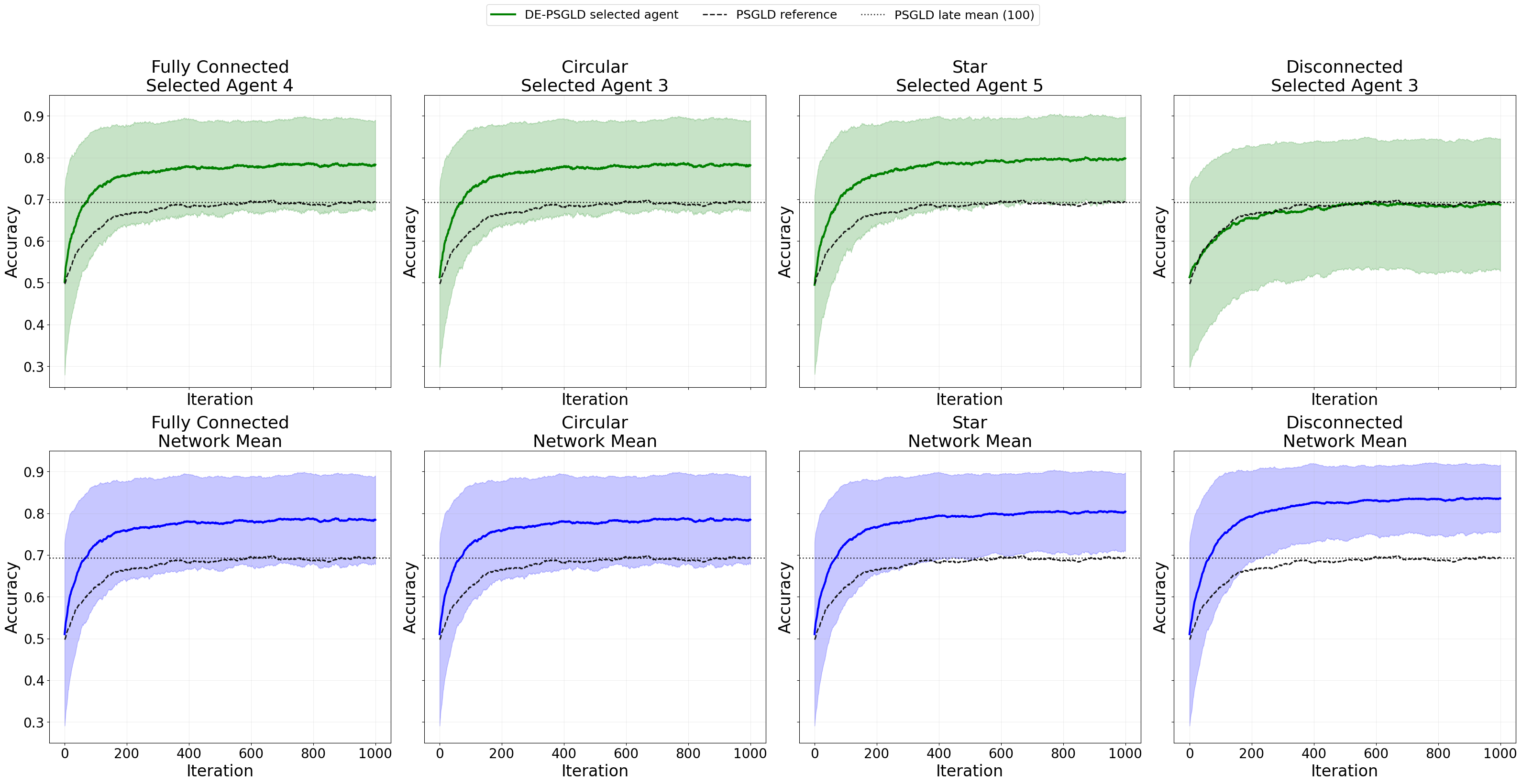}
    \caption{The plots show the accuracy versus iterations using the DE-PSGLD and PSGLD algorithms in the classification of cancer types. The top row shows the relative performances of a randomly selected agent from the DE-PSGLD algorithm and the PSGLD algorithm across four different network structures. The bottom row displays the mean accuracy over all agents from the DE-PSGLD algorithm versus the PSGLD algorithm across four network structures.}
    \label{fig:bcan}
\end{figure} 

\section{Conclusion}
In this paper, we studied the problem of constrained sampling in a distributed system where the agents learn to sample from a log-concave target distribution $\pi(x)\propto e^{-f(x)}$ on a constrained domain $\mathcal{K}\subset\mathbb{R}^{d}$ without sharing the local data. We proposed and studied the decentralized proximal stochastic gradient Langevin dynamics (DE-PSGLD). We provided a non-asymptotic 2-Wasserstein convergence guarantee and iteration complexity for a given accuracy level $\varepsilon$ when the target $f$ is smooth and strongly convex, and follow some assumptions. Finally, we provide numerical experiments to show the efficiency of our algorithm for sampling in a decentralized learning system.

\section*{Acknowledgments}
The authors are grateful to Mert G\"{u}rb\"{u}zbalaban for helpful discussions.
Mohammad Rafiqul Islam is partially supported by the grant NSF DMS-2053454. 
Lingjiong Zhu is partially supported by the grants NSF DMS-2053454 and DMS-2208303. 

\newpage
{\small 
\bibliographystyle{plainnat}
\bibliography{dpsgld}}


\newpage
\appendix

\section{Proof of Theorem~\ref{thm:mt}}
To prove Theorem~\ref{thm:mt}, we split the arguments into the following steps:
\label{mth-proof}
\begin{enumerate}
    \item First, we compute an upper bound on the $L^2$ distance between the iterate $x_i^{(k)}$ and their average  iterate $\bar{x}^{(k)}$; 
    \item Next, we compute an upper bound on the $L^2$ distance between the average iterate $\bar{x}^{(k)}$ and the iterate of the centralized proximal Langevin algorithm (PLA), also known as Moreau--Yosida Unadjusted Langevin Algorithm (MYULA) in the literature~\citep{SR2020};
    \item Finally, we compute an upper bound on the $\displaystyle \Wc_2$ distance between the law of $x^{(k)}$ and the Gibbs distribution $\pi$. 
\end{enumerate}

The whole process breaks down into the following triangle inequality
\begin{equation}
    \label{thm:proof:eq1}
    \frac{1}{N}\sum_{i=1}^N\Wc_2\paren{\Law\paren{x_i^{(k)}},\pi}\le \frac1N\sum_{i=1}^N\Wc\paren{\Law\paren{x_i^{(k)}},\Law\paren{\bar{x}^{(k)}}}+\Wc\paren{\Law\paren{\bar{x}^{(k)}},\pi},
\end{equation}
where
\begin{equation}
\Wc_2\paren{\Law\paren{\bar{x}^{(k)}},\pi} \le \Wc_2\paren{\Law\paren{\bar{x}^{(k)}},\Law\paren{\tilde{x}^{(k)}}}+\Wc_2\paren{\Law\paren{\tilde{x}^{(k)}},\pi^{\gamma}}+\Wc_2\paren{\pi^\gamma,\pi}.
\label{eq:avW2}
\end{equation}
Here, equation~\eqref{thm:proof:eq1} represents the average of the 2-Wasserstein distance between the distribution of the iterates of each agent and the target distribution, and equation~\eqref{eq:avW2} represents the 2-Wasserstein distance between the distribution of the average iterates from all agents and the target distribution. The step-by-step proof is given in the subsequent sections.

\subsection{\textbf{Uniform $L^2$ bounds between $x_i^{(k)}$ and their average $\bar{x}^{k}$}}

We first start with computing a uniform $L^2$ bound on the gradient $\nabla U^{\gamma}\paren{x^{(k)}}$ using the following lemma. The idea is to consider the concatenated DE-PSGLD iterates in equation~\eqref{eq:concatenatedxk} as the decentralized gradient descent (DGD) subject to the proximal regularizer, stochastic gradient, and Gaussian noise. 
\begin{lemma}
    \label{lem:gradbound}
    Under the assumptions of Theorem~{\ref{thm:mt}},
    \begin{equation*}
        \E\normsq{\nabla U^{\gamma}\paren{x^{(k)}}} \le D_\gamma^2,\qquad\textrm{for any }k\in\mathbb{N},
    \end{equation*}
    where 
    \begin{equation}
        \begin{split}
            D_\gamma^2&:=4L_\gamma^2\paren{1-\mu\eta\paren{1+\lambda_N^W-\eta L_\gamma}}^k\normsq{x^{(0)}-x^*_{\gamma}} + 8L_\gamma^2\frac{C_1^2\eta^2 N}{(1-\rho)^2}\\
         &\qquad\qquad+4\normsq{\nabla U^{\gamma}(x^*_\gamma)}+\frac{2L_\gamma^2(\eta \sigma^2 N+2dN)}{\mu\paren{1+\lambda_N^W-\eta L_\gamma}},
        \end{split}
        \label{eq:D}
    \end{equation}
    with $L_\gamma := L+\frac{2}{N\gamma}$, and the constant $C_1$ is defined in equation~\eqref{eq:c1}, $1-\rho$ is the spectral gap, $x^*_\gamma=\left[\left(x_{*}^\gamma\right)^\top,\cdots,\left(x_{*}^\gamma\right)^\top\right]^\top\in \R^{Nd}$ with $x_*^\gamma\in \R^d$ being the unique minimizer of $u^\gamma$.
\end{lemma}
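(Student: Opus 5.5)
We would bound the gradient by its Lipschitz constant applied to the distance from a fixed reference point, and then control that distance through a contraction recursion for the iterates. Since each $u_i^\gamma$ is $\mu$-strongly convex and $(L+\frac{1}{N\gamma})$-smooth, $U^\gamma$ is $\mu$-strongly convex and $L_\gamma$-smooth on $\R^{Nd}$; we take $L_\gamma=L+\frac{2}{N\gamma}$ as in the statement, which only loosens the constants. Hence
\[
\normsq{\nabla U^\gamma(x^{(k)})}\le 2L_\gamma^2\normsq{x^{(k)}-x^*_\gamma}+2\normsq{\nabla U^\gamma(x^*_\gamma)}.
\]
Note that $\nabla U^\gamma(x^*_\gamma)$ is not zero, because $x^*_\gamma$ is a consensus point minimizing only the sum $\sum_i u_i^\gamma$. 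The problem therefore reduces to a uniform bound on $\E\normsq{x^{(k)}-x^*_\gamma}$.

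Following the decentralized gradient descent analysis of \cite{gurbuzbalaban2021decentralized}, we rewrite \eqref{eq:concatenatedxk} as a gradient step on the penalized function
\[
\mathcal{H}(x):=\frac{1}{2\eta}x^\top(\mb{I}-\Wc)x+U^\gamma(x),
\]
namely $x^{(k+1)}=x^{(k)}-\eta\nabla\mathcal{H}(x^{(k)})-\eta\xi^{(k+1)}+\sqrt{2\eta}w^{(k+1)}$. The function $\mathcal{H}$ is $\mu$-strongly convex, and since $1-\lambda_N^W<2$ it is $\paren{\frac{1-\lambda_N^W}{\eta}+L_\gamma}$-smooth. Let $x^*_\eta$ denote its unique minimizer. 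Standard strongly convex gradient-descent contraction, valid under the stepsize condition $\eta<\frac{1+\lambda_N^W}{L_\gamma}$, gives
\[
\E\normsq{x^{(k+1)}-x^*_\eta}\le\paren{1-\mu\eta(1+\lambda_N^W-\eta L_\gamma)}\E\normsq{x^{(k)}-x^*_\eta}+\eta^2\sigma^2N+2\eta dN.
\]
The cross terms vanish because the noise is conditionally mean-zero and the Gaussian is independent of the gradient noise. Unrolling and summing the geometric series produces the $(1-\mu\eta(1+\lambda_N^W-\eta L_\gamma))^k$ term, together with the stationary term $\frac{\eta\sigma^2N+2dN}{\mu(1+\lambda_N^W-\eta L_\gamma)}$.

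We then pass from $x^*_\eta$ back to $x^*_\gamma$ using
\[
\normsq{x^{(k)}-x^*_\gamma}\le 2\normsq{x^{(k)}-x^*_\eta}+2\normsq{x^*_\eta-x^*_\gamma},
\]
and apply the same splitting to the initial-condition term. This accounts for the factors $4L_\gamma^2$ and $2L_\gamma^2$ in $D_\gamma^2$. We would then inflate the constants so that everything matches \eqref{eq:D}.

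The main obstacle is bounding the bias $\normsq{x^*_\eta-x^*_\gamma}\le\frac{C_1^2\eta^2N}{(1-\rho)^2}$, which is the consensus error of decentralized gradient descent at its fixed point. To handle it, we would use the fixed-point equation $(\mb{I}-\Wc)x^*_\eta=-\eta\nabla U^\gamma(x^*_\eta)$ and decompose $x^*_\eta$ into its average and consensus-orthogonal parts. On the orthogonal complement of the consensus subspace, $\mb{I}-\Wc$ has smallest eigenvalue at least $1-\rho$, so the disagreement part has norm at most $\frac{\eta}{1-\rho}\norm{\nabla U^\gamma(x^*_\eta)}$. The gradient norm must be bounded uniformly by a constant $C_1\sqrt N$, for example via a bounded-level-set argument; here we would exploit that $\mk\subset B(0,R)$ keeps the proximal part under control. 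The average part's deviation from $x^*_\gamma$ is then controlled by strong convexity of $\sum_i u_i^\gamma$ together with smoothness applied to the disagreement, which yields an $O(\eta/(1-\rho))$ error. Squaring this gives the stated term. We expect some care to be needed so that $C_1$ is explicit, for instance $C_1$ proportional to $\sqrt{L_\gamma}$ times a function-value gap as in the DGD literature.
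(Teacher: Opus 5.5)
Your route is the paper's: write the iteration as a noisy gradient step on $\mathcal{H}=U^\gamma_{\Wc,\eta}$, contract toward its minimizer $x^*_\eta$ at rate $1-\mu\eta(1+\lambda_N^W-\eta L_\gamma)$, and pay separately for the bias $\norm{x^*_\eta-x^*_\gamma}$. The flaw is that your constant bookkeeping does not produce $D_\gamma^2$.

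Write $c_k:=(1-\mu\eta(1+\lambda_N^W-\eta L_\gamma))^k$, $S:=\frac{\eta\sigma^2N+2dN}{\mu(1+\lambda_N^W-\eta L_\gamma)}$ and $B:=\normsq{x^*_\eta-x^*_\gamma}$. You apply the Lipschitz step at $x^*_\gamma$ and then split both $\normsq{x^{(k)}-x^*_\gamma}$ and $\normsq{x^{(0)}-x^*_\eta}$. This gives
\[
\E\normsq{\nabla U^\gamma(x^{(k)})}\le 8L_\gamma^2c_k\normsq{x^{(0)}-x^*_\gamma}+(8c_k+4)L_\gamma^2B+4L_\gamma^2S+2\normsq{\nabla U^\gamma(x^*_\gamma)}.
\]
The initial-condition and stationary coefficients here are twice those in $D_\gamma^2$, and the coefficient of $B$ can reach $12$ instead of $8$. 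So your splitting does not yield the factors $4L_\gamma^2$ and $2L_\gamma^2$, as you claim. Inflating constants cannot repair this, because the mismatch goes the wrong way. You would obtain $2D_\gamma^2$, which is harmless downstream but is not the stated lemma.

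The fix, which is what the paper does, is to apply the Lipschitz step at $x^*_\eta$:
\[
\normsq{\nabla U^\gamma(x^{(k)})}\le 2L_\gamma^2\normsq{x^{(k)}-x^*_\eta}+2\normsq{\nabla U^\gamma(x^*_\eta)},\qquad \normsq{\nabla U^\gamma(x^*_\eta)}\le 2L_\gamma^2B+2\normsq{\nabla U^\gamma(x^*_\gamma)}.
\]
Combine this with your recursion and $\normsq{x^{(0)}-x^*_\eta}\le 2\normsq{x^{(0)}-x^*_\gamma}+2B$. You get $4L_\gamma^2c_k\normsq{x^{(0)}-x^*_\gamma}+(4c_k+4)L_\gamma^2B+2L_\gamma^2S+4\normsq{\nabla U^\gamma(x^*_\gamma)}\le D_\gamma^2$.

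For the bias term, the paper simply imports Theorem~1 of \citet{yuan2016convergence}. Your sketch is a self-contained substitute, and it closes more easily than you expect: you need neither a level-set argument over the iterates nor $\mk\subset B(0,R)$.
\begin{enumerate}
\item Since $x^*_\eta$ minimizes $\mathcal{H}$ and $\mb{I}-\Wc\succeq 0$, we have $\sum_i u_i^\gamma(x^*_{\eta,i})\le\mathcal{H}(x^*_\eta)\le\mathcal{H}(0)=\sum_i u_i^\gamma(0)$.
\item By $L_\gamma$-smoothness, $\normsq{\nabla u_i^\gamma(y)}\le 2L_\gamma(u_i^\gamma(y)-u_i^{\gamma,*})$. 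Hence $\normsq{\nabla U^\gamma(x^*_\eta)}\le 2L_\gamma\sum_i(u_i^\gamma(0)-u_i^{\gamma,*})$.
\item Your disagreement bound gives $\frac{\eta}{1-\rho}\norm{\nabla U^\gamma(x^*_\eta)}$.
\item For the average part, use $\sum_i\nabla u_i^\gamma(x^*_{\eta,i})=0$, which follows from double stochasticity, together with $N\mu$-strong convexity of $u^\gamma$. This costs an extra factor $L_\gamma/\mu$.
\end{enumerate}
Together these give $\norm{x^*_\eta-x^*_\gamma}\le\paren{1+\frac{L_\gamma}{\mu}}\frac{\eta}{1-\rho}\sqrt{2L_\gamma\sum_i(u_i^\gamma(0)-u_i^{\gamma,*})}$. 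This is dominated by $C_1\frac{\eta\sqrt N}{1-\rho}$, so your version recovers the cited bound with room to spare.
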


The proof of Lemma~\ref{lem:gradbound} is given in Appendix~\ref{lemproof:gradbound}. This bound provides a useful tool to bound the uniform $L_2$ bound between the iterates $x_i^{(k)}$ and the mean iterates $\bar{x}^{(k)}$, which is given in the following lemma
\begin{lemma}
    \label{lem:agent-avg}
    Under the assumptions of Theorem~\ref{thm:mt},
    \begin{align*}
        \frac1N\sum_{i=1}^N\E\normsq{x_i^{(k)}-\bar{x}^{(k)}}\le \frac{4\rho^{2k}}{N}\E\normsq{x^{(0)}}+ \frac{4\eta^2D_\gamma^2}{N(1-\rho)^2}+\frac{4\eta^2\sigma^2}{(1-\rho^2)}+\frac{8\eta d }{(1-\rho^2)}.
    \end{align*}
\end{lemma}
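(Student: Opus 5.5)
Our plan is to unroll the consensus-error recursion and bound each forcing term with Lemma~\ref{lem:gradbound}. Let $\bar{\mathcal{J}}:=\frac1N(\mathbf{1}\mathbf{1}^\top)\otimes\mb{I}_d$ be the averaging projector, so that $\bar{\mathcal{J}}x^{(k)}$ stacks $N$ copies of $\bar{x}^{(k)}$ and
\[
\frac1N\sum_{i=1}^N\normsq{x_i^{(k)}-\bar{x}^{(k)}}=\frac1N\normsq{(\mb{I}_{Nd}-\bar{\mathcal{J}})x^{(k)}}.
\]
Because $W$ is symmetric and doubly stochastic, $\Wc$ commutes with $\bar{\mathcal{J}}$, and $\norm{(\Wc-\bar{\mathcal{J}})v}\le\rho\norm{v}$ for every $v$. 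Applying $\mb{I}-\bar{\mathcal{J}}$ to \eqref{eq:concatenatedxk} and iterating gives
\begin{equation*}
(\mb{I}-\bar{\mathcal{J}})x^{(k)}=(\Wc-\bar{\mathcal{J}})^k x^{(0)}-\eta\sum_{s=0}^{k-1}(\Wc-\bar{\mathcal{J}})^{k-1-s}\paren{\nabla U^\gamma(x^{(s)})+\xi^{(s+1)}}+\sqrt{2\eta}\sum_{s=0}^{k-1}(\Wc-\bar{\mathcal{J}})^{k-1-s}w^{(s+1)}.
\end{equation*}
Here we used $(\Wc-\bar{\mathcal{J}})^{j}(\mb{I}-\bar{\mathcal{J}})=(\Wc-\bar{\mathcal{J}})^{j}$ for $j\ge1$; the $j=0$ term carries an extra $\mb{I}-\bar{\mathcal{J}}$, a contraction, which does not affect the bounds.

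We split the right-hand side into four pieces (initial, gradient, stochastic-gradient noise, Gaussian noise) and use $\norm{a+b+c+e}^2\le4(\norm a^2+\norm b^2+\norm c^2+\norm e^2)$. This accounts for the factor $4$ in every term, the Gaussian term acquiring the extra factor $2$ from $\sqrt{2\eta}$.

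\textbf{Initial term.} It is bounded by $\rho^{2k}\E\normsq{x^{(0)}}$.

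\textbf{Gradient term.} This is deterministic-weighted but correlated across $s$, so we use Minkowski in $L^2$:
\[
\paren{\E\Big\|\sum_s(\Wc-\bar{\mathcal{J}})^{k-1-s}\nabla U^\gamma(x^{(s)})\Big\|^2}^{1/2}\le\sum_s\rho^{k-1-s}D_\gamma\le \frac{D_\gamma}{1-\rho}.
\]
This yields $\eta^2D_\gamma^2/(1-\rho)^2$. We need $D_\gamma$ to bound $\E\normsq{\nabla U^\gamma(x^{(s)})}$ uniformly in $s$. The expression \eqref{eq:D} contains a $k$-dependent geometric factor, but it is nonincreasing in $k$ under the stepsize condition. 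So we take its value at $s=0$, or equivalently read $D_\gamma$ as a uniform bound, which is the main technical point to handle carefully.

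\textbf{Noise terms.} The increments $\xi^{(s+1)}$ and $w^{(s+1)}$ are martingale differences with respect to $\mathcal{F}_s$. The weights $(\Wc-\bar{\mathcal{J}})^{k-1-s}$ are deterministic, so the cross terms vanish in expectation and the second moments add:
\[
\E\Big\|\sum_s(\Wc-\bar{\mathcal{J}})^{k-1-s}\xi^{(s+1)}\Big\|^2\le\sum_s\rho^{2(k-1-s)}\sigma^2N\le\frac{\sigma^2N}{1-\rho^2}.
\]
Likewise, using $\E\normsq{w^{(s+1)}}=Nd$, the Gaussian sum is bounded by $Nd/(1-\rho^2)$.

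\textbf{Assembly.} Dividing by $N$ gives
\[
\frac{4\rho^{2k}}{N}\E\normsq{x^{(0)}}+\frac{4\eta^2D_\gamma^2}{N(1-\rho)^2}+\frac{4\eta^2\sigma^2}{1-\rho^2}+\frac{8\eta d}{1-\rho^2},
\]
which is exactly the claimed bound. The only nonroutine point is the uniform use of Lemma~\ref{lem:gradbound} inside the Minkowski sum; everything else is spectral contraction plus orthogonality of martingale increments.
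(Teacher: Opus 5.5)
Your argument is correct and is essentially the paper's proof. Both unroll \eqref{eq:concatenatedxk} and remove the consensus component, then split into initial, gradient, stochastic-gradient and Gaussian pieces with the factor $4$, contract each by $\|W^{j}-\frac1N\mathbf{1}\mathbf{1}^{\top}\|=\rho^{j}$, bound the gradient sum by the triangle inequality (the paper then applies Jensen with weights $\rho^{k-1-r}$, giving the same bound as your Minkowski step), and sum the noise second moments geometrically. You also note that $D_\gamma$ must be read as a uniform bound, namely the $k=0$ value of \eqref{eq:D}, since its geometric factor decreases under the stepsize condition; the paper passes over this silently, and your reading is the right way to make that step rigorous.
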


The proof of Lemma~\ref{lem:agent-avg} is given in Appendix~\ref{lemproof:agent-avg}. 

We use the Cauchy-Schwarz inequality in Lemma~\ref{lem:agent-avg} to upper bound the first component of~{\ref{thm:proof:eq1}}, 
\begin{align}  &\frac1N\sum_{i=1}^N\Wc_2\paren{\Law\paren{x_i^{(k)}},\Law\paren{\bar{x}^{(k)}}}\nonumber
    \\   &\le\paren{\frac1N\sum_{i=1}^N\Wc_2^2\paren{\Law\paren{x_i^{(k)}},\Law\paren{\bar{x}^{(k)}}}}^\frac12\nonumber\\
    &\le \paren{\frac1N\sum_{i=1}^N\E\normsq{x_i^{(k)}-\bar{x}^{(k)}}}^\frac12\nonumber\\
    &\le \paren{\frac{4\rho^{2k}}{N}\E\normsq{x^{(0)}}+ \frac{4\eta^2D_\gamma^2}{N(1-\rho)^2}+\frac{4\eta^2\sigma^2}{(1-\rho^2)}+\frac{8\eta d }{(1-\rho^2)}}^\frac12\nonumber\\
    &\le \frac{2\rho^k}{\sqrt{N}}\paren{\E\normsq{x^{(0)}}}^\frac12+\frac{2\eta D_\gamma}{(1-\rho)\sqrt{N}}+\frac{2\eta\sigma }{\sqrt{1-\rho^2}}+\frac{2\sqrt{2\eta\,d}}{\sqrt{1-\rho^2}}.\label{thm:proof:eq2}
\end{align}

\subsection{\textbf{$L^2$ Distance between the mean $\bar{x}^{(k)}$ and the centralized proximal Langevin (PLA)} $\tilde{x}^{(k)}$}
When the data is distributed among the agents, the learning process introduces errors between the average of the gradients and the gradient of the average across all agents. Let us define the error term at the $(k+1)$-th iteration as follows:
\begin{align}
    \mc{E}_{k+1}:=\frac1N\sum_{i=1}^N\bsqb{\nabla f_i\paren{\bar{x}^{(k)}}-\nabla f_i\paren{x_i^{(k)}}} +\frac{1}{N^{2}\gamma}\sum_{i=1}^N\bsqb{\proj\paren{\bar{x}^{(k)}}-\proj\paren{x_i^{(k)}}}.   
    \label{eq:err}
\end{align}
Since the gradients $\nabla f_i$'s are Lipschitz based on our Assumption~\ref{assumption:convex} and the projection is 1-Lipschitz, we can find an $L^{2}$ bound for the error defined in~\eqref{eq:err} presented in the following lemma.

\begin{lemma}
    \label{lem:errbound}
    Under the assumptions of Theorem~\ref{thm:mt}, for any $k\in\mathbb{N}$, we have
    \begin{equation*}
        \E \normsq{\mc{E}_{k+1}}\le \frac{8L_\gamma^2\rho^{2k}}{N}\E\normsq{x^{(0)}}+\frac{8L_\gamma^2\eta^2D_\gamma^2}{N(1-\rho)^2}+\frac{8L_\gamma^2\eta^2\sigma^2}{(1-\rho^2)}+\frac{16L_\gamma^2\eta d }{(1-\rho^2)}.
    \end{equation*}
\end{lemma}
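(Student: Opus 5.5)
The plan is to reduce Lemma~\ref{lem:errbound} directly to Lemma~\ref{lem:agent-avg}. The error in \eqref{eq:err} is an average over agents of terms that are Lipschitz in $x_i^{(k)}-\bar{x}^{(k)}$, so the consensus bound already contains everything needed.

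First, I would regroup \eqref{eq:err} as a single sum:
\[
\mc{E}_{k+1}=\frac1N\sum_{i=1}^N a_i,\qquad a_i:=\nabla f_i\paren{\bar{x}^{(k)}}-\nabla f_i\paren{x_i^{(k)}}+\frac{1}{N\gamma}\paren{\proj\paren{\bar{x}^{(k)}}-\proj\paren{x_i^{(k)}}}.
\]
By Assumption~\ref{assumption:convex}, $\nabla f_i$ is $L$-Lipschitz, and the projection onto the convex set $\mk$ is $1$-Lipschitz. Hence
\[
\norm{a_i}\le \paren{L+\tfrac{1}{N\gamma}}\norm{x_i^{(k)}-\bar{x}^{(k)}}\le L_\gamma\norm{x_i^{(k)}-\bar{x}^{(k)}},
\]
using $L+\frac{1}{N\gamma}\le L+\frac{2}{N\gamma}=L_\gamma$.

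Next, Jensen's (or Cauchy--Schwarz) inequality gives $\normsq{\frac1N\sum_i a_i}\le \frac1N\sum_i\normsq{a_i}$. Taking expectations,
\[
\E\normsq{\mc{E}_{k+1}}\le L_\gamma^2\,\frac1N\sum_{i=1}^N\E\normsq{x_i^{(k)}-\bar{x}^{(k)}}.
\]
Substituting the bound of Lemma~\ref{lem:agent-avg} then yields
\[
\E\normsq{\mc{E}_{k+1}}\le \frac{4L_\gamma^2\rho^{2k}}{N}\E\normsq{x^{(0)}}+\frac{4L_\gamma^2\eta^2D_\gamma^2}{N(1-\rho)^2}+\frac{4L_\gamma^2\eta^2\sigma^2}{1-\rho^2}+\frac{8L_\gamma^2\eta d}{1-\rho^2}.
\]
This is half of the stated bound, so the claim follows a fortiori.

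The factor $2$ in the statement most likely comes from the authors splitting the gradient part and the projection part with $\normsq{u+v}\le 2\normsq{u}+2\normsq{v}$ and bounding each Lipschitz constant by $L_\gamma$. I would either present that cruder route to match the constants exactly, or note that the sharper bound implies the stated one.

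There is no real obstacle here. The only care needed is to get the projection coefficient right: it is $\frac{1}{N^2\gamma}$ summed over $N$ agents, which after factoring out $\frac1N$ becomes $\frac{1}{N\gamma}$ per agent and is dominated by $L_\gamma$. One should also check that Lemma~\ref{lem:agent-avg} is applied at the same index $k$, which holds since $\mc{E}_{k+1}$ depends only on $x^{(k)}$.
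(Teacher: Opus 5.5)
Your proof is correct and follows the paper's route. In both arguments, Lipschitz continuity of $\nabla f_i$ and of $\proj$, together with Jensen/Cauchy--Schwarz, reduces $\E\normsq{\mc{E}_{k+1}}$ to $L_\gamma^2$ times the consensus bound of Lemma~\ref{lem:agent-avg}. The only difference is that the paper first splits the gradient and projection parts with $\normsq{u+v}\le 2\normsq{u}+2\normsq{v}$ and then uses $L^2+\frac{4}{N^2\gamma^2}\le L_\gamma^2$, which produces the extra factor $2$; your per-agent grouping avoids that split and gives the bound with half the constants, which implies the stated one.
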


The proof of Lemma~\ref{lem:errbound} is given in Appendix~\ref{lemproof:errorbound}. 
Next, let us define the mean-chain potential
\[
\hat{u}^\gamma(x):=\frac1Nf(x)+\frac1Nq_{\mk}^\gamma(x),
\]
such that
\[
\nabla \hat{u}^\gamma(x)=\frac1N\nabla f(x)+\frac{1}{N\gamma}\paren{x-\proj(x)}.
\]
Therefore, the mean iterates from~\eqref{eq:meanite} can be rewritten as follows:
\begin{align}
    \bar{x}^{(k+1)} &= \bar{x}^{(k)}-\eta\nabla\hat{u}^\gamma\paren{\bar{x}^{(k)}}+\eta\mc{E}_{k+1}-\eta\bar{\xi}^{(k+1)}+\sqrt{2\eta}\bar{w}^{(k+1)},
    \label{eq:meanite-again}
\end{align}
where $\mc{E}_{k+1}$ is defined in~\eqref{eq:err}, $\bar{\xi}^{(k+1)}:=\frac1N\sum_{i=1}^{N} \xi_i^{(k+1)}$ with $\xi_{i}^{(k+1)}$ being the gradient noise satisfying Assumption~\ref{assump:gradnoise}, and $\bar{w}^{(k+1)}:=\frac1N\sum_{i=1}^{N} w_i^{(k+1)}\sim\mathcal{N}(0,N^{-1}\mb{I}_d)$. Now, let us define the centralized proximal Langevin algorithm (PLA) by
\begin{equation}
    \label{eq:myulac}
    \tilde{x}^{(k+1)}=\tilde{x}^{(k)}-\eta\nabla\hat{u}^\gamma\paren{\tilde{x}^{(k)}}+\sqrt{2N^{-1}\eta}\,z^{(k+1)},
\end{equation}
where $\displaystyle z^{(k+1)}\sim \mathcal{N}\paren{\mb{0},\mb{I}_d}$, and coupled noise $\displaystyle z^{(k+1)}=\sqrt{N}\bar{w}^{(k+1)}$.
This is the Euler-Maruyama discretization of the continuous-time overdamped Langevin diffusion
\begin{equation}
    dX_t:=-\nabla \hat{u}^\gamma(X_{t})dt+\sqrt{2N^{-1}}dW_t,\label{eq:ovld}
\end{equation}
where $W_t$ is a standard $d$-dimensional Brownian motion.
We are interested in upper-bounding the $L^2$ distance between the mean iterate $\bar{x}^{(k)}$ and the centralized proximal Langevin algorithm (PLA) iterate $\tilde{x}^{(k)}$. We have the following lemma.

\begin{lemma}
    \label{lem:myula-avg-bound}
    Under the assumptions of Theorem~\ref{thm:mt}, for every $k\in\mathbb{N}$,
    \begin{align*}
        &\E\normsq{\bar{x}^{(k)}-\tilde{x}^{(k)}}\nonumber\\
        &\le \eta \paren{\frac{\eta}{\frac{\mu}{N}\paren{1-\frac{\eta L_\gamma}{2N}}}+\frac{(1+\frac{\eta L_\gamma}{N})^2}{\frac{\mu^2}{N^{2}}\paren{1-\frac{\eta L_\gamma}{2N}}^2}}\paren{\frac{8L_\gamma^2\eta D_\gamma^2}{N(1-\rho)^2}+\frac{8L_\gamma^2\eta\sigma^2}{(1-\rho^2)}+\frac{16L_\gamma^2  d }{(1-\rho^2)}}\\
        &\qquad\qquad+\frac{\eta\sigma^2}{\mu\paren{1-\frac{\eta L_\gamma}{2N}}}+\frac{\rho^{2k}-\paren{1-\frac{\mu\eta}{N}\paren{1-\frac{\eta L_\gamma}{2N}}}^k}{\rho^2-\paren{1-\frac{\mu\eta}{N}\paren{1-\frac{\eta L_\gamma}{2N}}}}\cdot\frac{8L_\gamma^2\rho^2}{N}\E\normsq{x^{(0)}}.
    \end{align*}
\end{lemma}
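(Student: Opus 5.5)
We couple the two chains synchronously: $\tilde{x}^{(0)}=\bar{x}^{(0)}$, and $\tilde{x}^{(k)}$ uses the Gaussian increment $z^{(k+1)}=\sqrt{N}\bar{w}^{(k+1)}$. Subtracting \eqref{eq:myulac} from \eqref{eq:meanite-again}, the Brownian terms cancel and the difference $e_k:=\bar{x}^{(k)}-\tilde{x}^{(k)}$ satisfies
\begin{equation*}
e_{k+1}= e_k-\eta\paren{\nabla\hat{u}^\gamma\paren{\bar{x}^{(k)}}-\nabla\hat{u}^\gamma\paren{\tilde{x}^{(k)}}}+\eta\mc{E}_{k+1}-\eta\bar{\xi}^{(k+1)}.
\end{equation*}
The function $\hat{u}^\gamma$ is $\frac{\mu}{N}$-strongly convex. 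Its gradient is Lipschitz with constant at most $\frac{L}{N}+\frac{1}{N\gamma}\le \frac{L_\gamma}{N}$, because $q_{\mk}^\gamma$ is convex with $\gamma^{-1}$-Lipschitz gradient. The noise $\bar{\xi}^{(k+1)}$ is a martingale difference with $\E\|\bar{\xi}^{(k+1)}\|^2\le \sigma^2/N$.

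\textbf{Step 1 (one-step contraction).} Write $A_k:=e_k-\eta(\nabla\hat{u}^\gamma(\bar{x}^{(k)})-\nabla\hat{u}^\gamma(\tilde{x}^{(k)}))$. The standard cocoercivity argument for a $\frac{\mu}{N}$-strongly convex, $\frac{L_\gamma}{N}$-smooth function, valid since $\eta<2N/L_\gamma$, gives
\begin{equation*}
\|A_k\|^2\le \paren{1-\tfrac{\mu\eta}{N}(1-\tfrac{\eta L_\gamma}{2N})}\|e_k\|^2=:q^2\|e_k\|^2 .
\end{equation*}
Condition on $\mc{F}_k$. Because $\bar{\xi}^{(k+1)}$ has zero conditional mean, its cross term with $A_k$ vanishes. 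Its cross term with $\mc{E}_{k+1}$ also vanishes, since $\mc{E}_{k+1}$ is $\mc{F}_k$-measurable. This leaves
\begin{equation*}
\E\|e_{k+1}\|^2\le \E\|A_k+\eta\mc{E}_{k+1}\|^2+\eta^2\sigma^2/N .
\end{equation*}
To bound $\E\|A_k+\eta\mc{E}_{k+1}\|^2$, use Young's inequality $\|a+b\|^2\le(1+\epsilon)\|a\|^2+(1+\epsilon^{-1})\|b\|^2$ with $\epsilon$ tuned so that $(1+\epsilon)q^2\le 1-\frac{\mu\eta}{2N}(1-\frac{\eta L_\gamma}{2N})$. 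Equivalently, take norms and use $\|A_k+\eta\mc{E}\|\le q\|e_k\|+\eta\|\mc{E}\|$, then square. Either way $\epsilon\asymp \frac{\mu\eta}{N}(1-\frac{\eta L_\gamma}{2N})$, and the multiplier on $\eta^2\E\|\mc{E}_{k+1}\|^2$ becomes of order $\frac{(1+\eta L_\gamma/N)^2}{\frac{\mu\eta}{N}(1-\frac{\eta L_\gamma}{2N})}$. The factor $(1+\eta L_\gamma/N)$ comes from bounding $\|A_k\|$ crudely by Lipschitzness in the cross term. This is the origin of the two-term prefactor in the statement.

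\textbf{Step 2 (plug in Lemma~\ref{lem:errbound} and unroll).} Write $c:=1-\frac{\mu\eta}{N}(1-\frac{\eta L_\gamma}{2N})$. Lemma~\ref{lem:errbound} bounds $\E\|\mc{E}_{k+1}\|^2$ by a geometric term $\frac{8L_\gamma^2\rho^{2k}}{N}\E\|x^{(0)}\|^2$ plus a constant $B:=\frac{8L_\gamma^2\eta^2D_\gamma^2}{N(1-\rho)^2}+\frac{8L_\gamma^2\eta^2\sigma^2}{1-\rho^2}+\frac{16L_\gamma^2\eta d}{1-\rho^2}$. This yields a recursion
\begin{equation*}
\E\|e_{k+1}\|^2\le c\,\E\|e_k\|^2+\alpha\,(B+\beta\rho^{2k})+\eta^2\sigma^2/N, \qquad e_0=0 .
\end{equation*}
Iterating, the constant parts sum to at most $\frac{\alpha B+\eta^2\sigma^2/N}{1-c}$. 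Dividing by $1-c=\frac{\mu\eta}{N}(1-\frac{\eta L_\gamma}{2N})$ produces the terms $\eta(\cdots)(\frac{8L_\gamma^2\eta D_\gamma^2}{N(1-\rho)^2}+\cdots)$ and $\frac{\eta\sigma^2}{\mu(1-\eta L_\gamma/(2N))}$ in the statement, after pulling one factor $\eta$ out of $B$. The geometric part gives the convolution $\sum_{j=0}^{k-1}c^{k-1-j}\rho^{2j}=\frac{\rho^{2k}-c^k}{\rho^2-c}$. Combined with the prefactor and the extra $\rho^2$, from indexing the error at $k+1$ versus the $\rho^{2k}$ in Lemma~\ref{lem:errbound}, this gives the last term.

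\textbf{Main obstacle.} The delicate part is the cross term in Step 1. One has to choose the Young parameter so that the contraction factor, the factor $1/(1-c)$, and the exact prefactor $\frac{\eta}{\frac{\mu}{N}(\cdot)}+\frac{(1+\eta L_\gamma/N)^2}{\frac{\mu^2}{N^2}(\cdot)^2}$ all come out as stated. I would first expand $\|A_k+\eta\mc{E}_{k+1}\|^2=\|A_k\|^2+2\eta\langle A_k,\mc{E}_{k+1}\rangle+\eta^2\|\mc{E}_{k+1}\|^2$. Then bound the middle term by $\|A_k\|\le(1+\eta L_\gamma/N)\|e_k\|$ and apply Young with weight $\frac{\mu\eta}{2N}(1-\frac{\eta L_\gamma}{2N})$ on $\|e_k\|^2$. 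The $\eta^2\|\mc{E}\|^2$ term yields the first summand of the prefactor, and the cross term yields the second. The geometric-sum identity is routine once $c\neq\rho^2$; the case $c=\rho^2$ is handled by continuity.
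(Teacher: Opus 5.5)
Your route is the paper's: synchronous coupling with $e_0=0$, expanding the square, a cocoercivity contraction for $A_k$, removing the $\bar{\xi}^{(k+1)}$ cross terms by conditional mean zero, bounding the $\mc{E}_{k+1}$ cross term through $\|A_k\|\le(1+\frac{\eta L_\gamma}{N})\|e_k\|$ and Young, then Lemma~\ref{lem:errbound} and unrolling. But the constant bookkeeping, which is what the statement is about, fails in two places.

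\textbf{First issue: a dropped factor $2$ in Step~1.} Cocoercivity gives $\|A_k\|^2\le\bigl(1-\frac{2\mu\eta}{N}(1-\frac{\eta L_\gamma}{2N})\bigr)\|e_k\|^2$, and your $q^2$ drops the factor $2$. You then put Young weight $\frac{\mu\eta}{2N}(1-\frac{\eta L_\gamma}{2N})$ on $\|e_k\|^2$. This leaves the contraction $1-\frac{\mu\eta}{2N}(1-\frac{\eta L_\gamma}{2N})$ instead of $c$. It also leaves $\eta\bigl(\eta+\frac{2(1+\eta L_\gamma/N)^2}{\frac{\mu}{N}(1-\frac{\eta L_\gamma}{2N})}\bigr)$ in front of $\E\|\mc{E}_{k+1}\|^2$. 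So the recursion you unroll in Step~2 does not follow from Step~1. The unrolled constants would be off by factors of $2$ to $4$, with a different geometric rate. The paper keeps the full $\frac{2\mu\eta}{N}(\cdots)$ and spends exactly half of it. It factors out $(1+\frac{\eta L_\gamma}{N})\eta$ and takes $\varepsilon=\frac{\frac{\mu}{N}(1-\frac{\eta L_\gamma}{2N})}{1+\eta L_\gamma/N}$ in $2ab\le\varepsilon a^2+b^2/\varepsilon$. With that change your Step~2 gives the first two terms of the statement exactly.

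\textbf{Second issue: the last term.} Put $\alpha:=\eta\bigl(\eta+\frac{(1+\eta L_\gamma/N)^2}{\frac{\mu}{N}(1-\frac{\eta L_\gamma}{2N})}\bigr)$ and $\beta:=\frac{8L_\gamma^2}{N}\E\|x^{(0)}\|^2$. Your recursion unrolls to $\alpha\beta\,\frac{\rho^{2k}-c^k}{\rho^2-c}$. There is no extra $\rho^2$ from indexing, because the $\rho^{2j}$ bound on $\E\|\mc{E}_{j+1}\|^2$ is exactly what enters at step $j$. Nothing removes $\alpha$ either. The displayed term $\rho^2\beta\,\frac{\rho^{2k}-c^k}{\rho^2-c}$ would require $\alpha\le\rho^2$, which is not a hypothesis.

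A scaling check confirms this. Under $x\mapsto sx$ we have $\mu,L,L_\gamma\propto s^{-2}$ and $\eta,\gamma\propto s^{2}$. The left side scales like $s^2$, while $\rho^2\beta$ scales like $s^{-2}$; $\alpha\propto s^4$ is exactly the missing factor. The paper reaches its form by writing $\rho^{2(k-r)}$ where $\rho^{2(k-1-r)}$ belongs, and then dropping $\alpha$ between two displays. Your phrase ``combined with the prefactor and the extra $\rho^2$'' repeats that slip rather than deriving the term. What your argument actually proves is the bound with $\alpha\beta$ in place of $\frac{8L_\gamma^2\rho^2}{N}\E\|x^{(0)}\|^2$, and that is what you should claim.

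\textbf{A smaller point, also shared with the paper.} $\hat u^\gamma=\frac1N\sum_i f_i+\frac1N q_{\mk}^\gamma$ is $\mu$-strongly convex and $(L+\frac{1}{N\gamma})$-smooth. Your Lipschitz constant $\frac{L}{N}+\frac{1}{N\gamma}$ is too small, and $\frac{L}{N}+\frac{1}{N\gamma}\le\frac{L_\gamma}{N}$ fails for $N\ge3$. This is harmless. Since $\eta<1/L_\gamma$, we have $\eta(L+\frac{1}{N\gamma})<1$, so the true constants give the contraction factor $1-2\mu\eta\bigl(1-\frac{\eta}{2}(L+\frac{1}{N\gamma})\bigr)$. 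That factor is at most $1-\frac{2\mu\eta}{N}(1-\frac{\eta L_\gamma}{2N})$, and $\|A_k\|\le\|e_k\|$ then covers the cross term.
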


The proof of Lemma~\ref{lem:myula-avg-bound} is given in Appendix~\ref{lemproof:myula-avg-bound}.

\subsection{\textbf{$\displaystyle \Wc_2$ Distance between the centralized PLA and the Gibbs distribution $\pi^\gamma$}}
The bounds in terms of the $\Wc_2$ distance between the law of the Euler-Maruyama discretization of $\tilde{x}^{(k)}$ from the centralized Moreau--Yosida regularized Langevin dynamics and the Gibbs distribution can be derived using the results from \citet{dalalyan2019user} with some adjustment in smoothness and convexity parameters.
Recall the mean-chain potential
\[
\hat{u}^\gamma(x):=\frac1Nf(x)+\frac1Nq_{\mk}^\gamma(x).
\]
Note that $\hat{u}^\gamma(x)$ is $\mu/N$-strongly convex and $L_\gamma/N$-smooth.
Therefore, based on Theorem~4 from \citet{dalalyan2019user}, we have the following lemma.

\begin{lemma}
    \label{lem:dalalyan}
    For any $ \eta\in (0,\frac{2N}{L_\gamma+\mu}]$, we have
    \begin{equation*}
        \Wc_2\paren{\Law\paren{\tilde{x}^{(k)}},\pi^\gamma}\le (1-\mu\eta)^k\Wc_2\paren{\Law\paren{\tilde{x}^{(0)}},\pi^\gamma}+\frac{1.65L_\gamma}{N\mu}\sqrt{\eta d}.
    \end{equation*}
\end{lemma}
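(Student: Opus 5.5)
The plan is to reduce the centralized PLA recursion \eqref{eq:myulac} to a standard unadjusted Langevin algorithm with unit temperature, and then read the bound off Theorem~4 of \citet{dalalyan2019user}. That theorem says: for a potential $V$ that is $m$-strongly convex and $M$-smooth, and a step $h\le 2/(m+M)$, ULA satisfies $\Wc_2\le(1-mh)^k\Wc_2(\text{init})+1.65\,(M/m)\sqrt{hd}$. The difficulty is the temperature $N^{-1}$ in \eqref{eq:myulac}: the potential is $\hat u^\gamma$, but the noise is $\sqrt{2N^{-1}\eta}$ rather than $\sqrt{2\eta}$. As a first check, the invariant law of the diffusion \eqref{eq:ovld} is $\propto e^{-N\hat u^\gamma}=e^{-u^\gamma}$, which is exactly $\pi^\gamma$. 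So the chain is at least targeting the right measure.

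Next I would perform the spatial rescaling $y=\sqrt N\,x$ and set $V(y):=N\hat u^\gamma(y/\sqrt N)$. Then $\nabla V(y)=\sqrt N\,\nabla\hat u^\gamma(y/\sqrt N)$, and multiplying \eqref{eq:myulac} by $\sqrt N$ gives
\begin{equation*}
y^{(k+1)}=y^{(k)}-\eta\nabla V\paren{y^{(k)}}+\sqrt{2\eta}\,z^{(k+1)},
\end{equation*}
which is a plain ULA with step $\eta$. Its target is $e^{-V}$, the pushforward of $\pi^\gamma$ under $x\mapsto\sqrt N x$. Since $\nabla^2V(y)=\nabla^2\hat u^\gamma(y/\sqrt N)$ (interpreted a.e., or via gradient monotonicity/Lipschitz bounds, since $q^\gamma_{\mk}$ is only $C^{1,1}$), $V$ inherits the constants the paper attributes to $\hat u^\gamma$. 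That is, $V$ is $\mu/N$-strongly convex and $L_\gamma/N$-smooth. Theorem~4 of \citet{dalalyan2019user} then applies with $m=\mu/N$, $M=L_\gamma/N$ and $h=\eta$. Its admissibility condition $h\le 2/(m+M)$ is precisely the hypothesis $\eta\le 2N/(L_\gamma+\mu)$ of the lemma. Finally I would map back to $x$ using $\Wc_2(\Law(\sqrt N X),\Law(\sqrt N Y))=\sqrt N\,\Wc_2(\Law(X),\Law(Y))$, which divides both sides of the $y$-bound by $\sqrt N$.

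The step I expect to be the real obstacle is the bookkeeping of the factors of $N$, so that the output matches the stated form $(1-\mu\eta)^k$ and $1.65L_\gamma\sqrt{\eta d}/(N\mu)$. Done as above, the rescaling produces a contraction $(1-\mu\eta/N)^k$ and a bias $1.65\,(L_\gamma/\mu)\sqrt{\eta d/N}$. These are the same as the stated bound only up to powers of $N$. Before writing any details I would therefore pin down the exact conventions under which the paper invokes Theorem~4: which of $m,M,h$ and which temperature normalization are meant. I would also recheck the strong convexity and smoothness constants of $\hat u^\gamma$ itself. Note that $f=\sum_i f_i$ is $N\mu$-strongly convex and $NL$-smooth under Assumption~\ref{assumption:convex}, which makes $\hat u^\gamma$ $\mu$-strongly convex and $(L+\frac{1}{N\gamma})$-smooth, not $\mu/N$ and $L_\gamma/N$. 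Redoing the rescaling with these corrected constants gives $m=\mu$, $M\le L_\gamma$, hence contraction $(1-\mu\eta)^k$ as stated, the step condition $\eta\le 2/(\mu+L_\gamma)$, and a bias $1.65(L_\gamma/\mu)\sqrt{\eta d/N}$.

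Whichever normalization turns out to be the correct one, the proof itself is one application of the cited theorem, combined with the scaling identity for $\Wc_2$. The only substantive work is verifying that the constants (contraction rate, admissible step range and bias prefactor) come out exactly as written in the lemma. I expect the corrected-constant route above to deliver the stated contraction rate, but not the stated step range $\eta\le 2N/(L_\gamma+\mu)$ or the stated bias prefactor. If so, the resolution must come from the paper's intended parametrization, and it is that reconciliation I would settle first.
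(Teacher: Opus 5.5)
Your computations are correct, and the discrepancy you found is real. It sits exactly where the paper's argument breaks. The paper's whole proof is the sentence before the lemma: it asserts that $\hat u^\gamma$ is $\mu/N$-strongly convex and $L_\gamma/N$-smooth, then applies Theorem~4 of \citet{dalalyan2019user} as though \eqref{eq:myulac} were unit-temperature ULA on $\hat u^\gamma$ with step $\eta$. That ignores the $N^{-1}$ in the noise. It also uses a smoothness constant that fails in general for $N\ge 2$: as you note, the sharp constant can be $L+\frac{1}{N\gamma}>\frac{L_\gamma}{N}$. Even taken at face value, that reading gives the stated step range but contraction $(1-\mu\eta/N)^k$ and bias $1.65\frac{L_\gamma}{\mu}\sqrt{\eta d}$, so it does not produce the displayed bound either. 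Indeed \eqref{eq:final2} later uses $\frac{1.65L_\gamma}{\mu}\sqrt{\eta d}$, without the $1/N$. There is no hidden parametrization to recover.

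The gap in your proposal is that you leave the ``reconciliation'' open. It cannot be closed, because the lemma as written is false. Take $f_i(x)=\frac{\mu}{2}\|x\|^2$, which is admissible with any $L>\mu$, and fix $L,\gamma$. For $N$ large, $\eta=1/\mu$ lies in the stated range. From a deterministic start $x_0\notin\mk$, one step gives $\E\tilde x^{(1)}=-\frac{1}{N\gamma\mu}\paren{x_0-\proj(x_0)}$. So $\Wc_2\paren{\Law(\tilde x^{(1)}),\pi^\gamma}$ is unbounded in $x_0$, while the claimed right-hand side is the constant $\frac{1.65L_\gamma}{N\mu}\sqrt{\eta d}$.

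The $1/N$ bias prefactor also fails for legitimate steps. With $\mk$ a large centred ball, the chain is essentially Gaussian ULA with stationary covariance $\frac{2}{N\mu(2-\mu\eta)}\mathbf{I}_d$, against $\pi^\gamma\approx\mathcal{N}(0,\frac{1}{N\mu}\mathbf{I}_d)$. The limiting error $\sqrt{d/(N\mu)}\,\big(\sqrt{2/(2-\mu\eta)}-1\big)$ is therefore of order $N^{-1/2}$. It exceeds $\frac{1.65L_\gamma}{N\mu}\sqrt{\eta d}$ once $N$ is large, e.g.\ at $\eta=2/(\mu+L_\gamma)$.

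What you should prove instead is your corrected statement: for $\eta\le 2/(\mu+L_\gamma)$, $\Wc_2\paren{\Law(\tilde x^{(k)}),\pi^\gamma}\le(1-\mu\eta)^k\Wc_2\paren{\Law(\tilde x^{(0)}),\pi^\gamma}+1.65\frac{L_\gamma}{\mu}\sqrt{\eta d/N}$. The quickest route is the time change $h=\eta/N$. It turns \eqref{eq:myulac} into unit-temperature ULA on $u^\gamma=N\hat u^\gamma$, which is $N\mu$-strongly convex and $NL_\gamma$-smooth with target exactly $\pi^\gamma$; your spatial rescaling is equivalent. This corrected version is all the paper uses downstream. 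Theorem~\ref{thm:mt} assumes $\eta<1/(L_\gamma+\mu)$, and \eqref{eq:final2} carries the larger bias $\frac{1.65L_\gamma}{\mu}\sqrt{\eta d}$.
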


Next, we need to bound the distance between the minimizer of $\hat{u}^{\gamma}(x)$ and the Gibbs distribution $\pi^{\gamma}$.

\begin{lemma}
    \label{lem:minimizer}
    Let $\hat{x}_\gamma^*$ be the unique minimizer of $\hat{u}^\gamma(x)$.
    Then we have
    \begin{equation}
        \E_{X\sim \pi^\gamma}\normsq{X-\hat{x}_\gamma^*}\le \frac{2d}{\mu}.\label{eq:minimizer}    
    \end{equation}
\end{lemma}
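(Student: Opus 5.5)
The plan is to use the standard integration-by-parts identity for log-concave Gibbs measures together with strong convexity of $u^\gamma$. Since $\hat{u}^\gamma=\frac1N u^\gamma$, the minimizer $\hat{x}_\gamma^*$ of $\hat u^\gamma$ is also the minimizer of $u^\gamma=f+q_{\mk}^\gamma$, i.e.\ $\hat{x}_\gamma^*=x_*^\gamma$. Hence $\nabla u^\gamma(\hat{x}_\gamma^*)=0$.

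The first step is convexity. By Assumption~\ref{assumption:convex}, $f=\sum_{i=1}^N f_i$ is $N\mu$-strongly convex, hence in particular $\mu$-strongly convex. The Moreau--Yosida envelope $q_{\mk}^\gamma$ is convex and continuously differentiable with $\gamma^{-1}$-Lipschitz gradient. Therefore $u^\gamma$ is $C^1$, its gradient is Lipschitz, and
\[
\lrangle{\nabla u^\gamma(x)-\nabla u^\gamma(\hat{x}_\gamma^*),\,x-\hat{x}_\gamma^*}\ge \mu\normsq{x-\hat{x}_\gamma^*}\qquad\text{for all } x\in\R^d .
\]

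The second step is an integration-by-parts (Stein-type) identity. With $\pi^\gamma(x)=e^{-u^\gamma(x)}/Z$, we have $\nabla\pi^\gamma=-\pi^\gamma\nabla u^\gamma$. Apply the divergence theorem to the vector field $(x-\hat{x}_\gamma^*)\pi^\gamma(x)$ on a ball $B(0,\rho)$ and let $\rho\to\infty$; this gives
\[
\E_{X\sim\pi^\gamma}\lrangle{\nabla u^\gamma(X),\,X-\hat{x}_\gamma^*}=\int_{\R^d}\mathrm{div}(x-\hat{x}_\gamma^*)\,\pi^\gamma(x)\,dx=d .
\]

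The only technical point, and the expected main obstacle, is showing that the boundary term vanishes and that every integrand is integrable. I would handle this via quadratic growth: strong convexity gives $u^\gamma(x)\ge u^\gamma(\hat{x}_\gamma^*)+\frac{\mu}{2}\normsq{x-\hat{x}_\gamma^*}$, so $\pi^\gamma$ has Gaussian tails. Consequently $\norm{x}\pi^\gamma(x)\rho^{d-1}\to0$ uniformly on the spheres $\partial B(0,\rho)$. Moreover, $\norm{\nabla u^\gamma(x)}\,\norm{x-\hat{x}_\gamma^*}$ grows at most quadratically, because $\nabla u^\gamma$ is Lipschitz, so it is integrable against $\pi^\gamma$.

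The final step combines the two displays, using $\nabla u^\gamma(\hat{x}_\gamma^*)=0$:
\[
\mu\,\E_{\pi^\gamma}\normsq{X-\hat{x}_\gamma^*}\le \E_{\pi^\gamma}\lrangle{\nabla u^\gamma(X),X-\hat{x}_\gamma^*}=d .
\]
This yields $\E_{\pi^\gamma}\normsq{X-\hat{x}_\gamma^*}\le d/\mu$, which is at most the claimed $2d/\mu$. In fact, using the sharper modulus $N\mu$ for $f$ would even give $d/(N\mu)$.
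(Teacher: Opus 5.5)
Your proof is correct, but it takes a different route from the paper. The paper argues dynamically. It runs the diffusion $dX_t=-\nabla\hat u^\gamma(X_t)\,dt+\sqrt{2N^{-1}}\,dW_t$, whose invariant law is $\pi^\gamma$, and applies It\^o's formula to $\|X_t-\hat x_\gamma^*\|^2$ with integrating factor $e^{\mu t/N}$. It then uses $\mu/N$-strong convexity of $\hat u^\gamma$, discards the stochastic integral in expectation, and lets $t\to\infty$.

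You instead work statically with the integration-by-parts identity $\E_{\pi^\gamma}\langle\nabla u^\gamma(X),X-\hat x_\gamma^*\rangle=d$. This is exactly the stationarity relation $\E_{\pi^\gamma}[\mathcal{A}\phi]=0$ for $\phi(x)=\|x-\hat x_\gamma^*\|^2$, with generator $\mathcal{A}\phi=-\langle\nabla\hat u^\gamma,\nabla\phi\rangle+N^{-1}\Delta\phi$, which the It\^o computation exploits implicitly.

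Your route is more elementary and self-contained. It needs only $C^1$ regularity and the Gaussian tails coming from strong convexity. It also avoids two points the paper passes over quickly: the mean-zero property of $\int_0^t e^{\mu s/N}\langle Y_s,dW_s\rangle$, and passing second moments to the limit $t\to\infty$.

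It also gives a sharper constant. The paper's factor $2$ comes from using the integrating factor $e^{\mu t/N}$ rather than $e^{2\mu t/N}$. Its modulus $\mu/N$ for $\hat u^\gamma$ is loose, since $\hat u^\gamma=\frac1N u^\gamma$ is in fact $\mu$-strongly convex; that is why you reach $d/(N\mu)$.

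In exchange, the paper's approach gives a bound along the whole flow, $\E\|X_t-\hat x_\gamma^*\|^2\le e^{-\mu t/N}\E\|X_0-\hat x_\gamma^*\|^2+2d/\mu$, not only at stationarity. It also stays within the Langevin-diffusion framework used elsewhere in the analysis.
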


The proof of Lemma~\ref{lem:minimizer} is given in Appendix~\ref{lemproof:minimizer}. Setting $\displaystyle \tilde{x}^{(0)}=\frac1N\sum_{i=0}^{N}x_i^{(0)}$ and using~\eqref{eq:minimizer}, we have
\begin{align*}
\mathcal{W}_2\left(\Law\left(\tilde{x}^{(0)}\right),\pi^\gamma\right)&\le \paren{\E\normsq{\tilde{x}^{(0)}-\hat x^*_\gamma}}^{1/2}+\paren{\E_{X\sim\pi^{\gamma}}\normsq{X-\hat x^*_\gamma}}^{1/2}\\
    &\le \paren{\E\normsq{\tilde{x}^{(0)}-\hat x^*_\gamma}}^{1/2}+\sqrt{2d\mu^{-1}}.
\end{align*}
Therefore, it follows from Lemma~\ref{lem:dalalyan} that
\begin{equation}    \Wc_2\paren{\Law\paren{\tilde{x}^{(k)}},\pi^\gamma}\le (1-\mu\eta)^k\paren{\paren{\E\normsq{\tilde{x}^{(0)}-\hat x^*_\gamma}}^{1/2}+\sqrt{2d\mu^{-1}}}+\frac{1.65L_\gamma}{\mu}\sqrt{\eta d}.\label{eq:final2}
\end{equation}


\subsection{Proof of Theorem~\ref{thm:mt}}
To complete the proof, we follow from Lemma~\ref{lem:myula-avg-bound} and obtain that
\begin{align}   &\Wc_2\paren{\Law\paren{\bar{x}^{(k)}},\Law\paren{\tilde{x}^{(k)}}}\nonumber\\
    &\le \paren{\E\normsq{\bar{x}^{(k)}-\tilde{x}^{(k)}}}^\frac12\nonumber\\
    &\le \sqrt{\eta} \paren{\frac{\eta}{\frac{\mu}{N}\paren{1-\frac{\eta L_\gamma}{2N}}}+\frac{(1+\frac{\eta L_\gamma}{N})^2}{\frac{\mu^2}{N^{2}}\paren{1-\frac{\eta L_\gamma}{2N}}^2}}^\frac12\cdot\paren{\frac{8L_\gamma^2\eta D_\gamma^2}{N(1-\rho)^2}+\frac{8L_\gamma^2\eta\sigma^2}{(1-\rho^2)}+\frac{16L_\gamma^2  d }{(1-\rho^2)}}^\frac12\nonumber\\
    &\qquad+\frac{\sqrt{\eta}\sigma}{\sqrt{\mu\paren{1-\frac{\eta L_\gamma}{2N}}}}+\paren{\frac{\rho^{2k}-\paren{1-\frac{\mu\eta}{N}\paren{1-\frac{\eta L_\gamma}{2N}}}^k}{\rho^2-\paren{1-\frac{\mu\eta}{N}\paren{1-\frac{\eta L_\gamma}{2N}}}}}^\frac12\cdot\frac{2\sqrt{2}\rho L_\gamma}{\sqrt{N}}\paren{\E\normsq{x^{(0)}}}^\frac12.\label{eq:final1}
\end{align}
Now we provide the final form of~\eqref{eq:avW2} using the results from~\eqref{eneq:dist}, \eqref{eq:final2}, and \eqref{eq:final1} as follows:
\begin{align}
&\Wc_2\paren{\Law\paren{\bar{x}^{(k)}},\pi} \nonumber\\
    &\le \sqrt{\eta} \paren{\frac{\eta}{\frac{\mu}{N}\paren{1-\frac{\eta L_\gamma}{2N}}}+\frac{(1+\frac{\eta L_\gamma}{N})^2}{\frac{\mu^2}{N^{2}}\paren{1-\frac{\eta L_\gamma}{2N}}^2}}^\frac12\cdot\paren{\frac{8L_\gamma^2\eta D_\gamma^2}{N(1-\rho)^2}+\frac{8L_\gamma^2\eta\sigma^2}{(1-\rho^2)}+\frac{16L_\gamma^2  d }{(1-\rho^2)}}^\frac12\nonumber\\
    &\qquad+\frac{\sqrt{\eta}\sigma}{\sqrt{\mu\paren{1-\frac{\eta L_\gamma}{2N}}}}+\paren{\frac{\rho^{2k}-\paren{1-\frac{\mu\eta}{N}\paren{1-\frac{\eta L_\gamma}{2N}}}^k}{\rho^2-\paren{1-\frac{\mu\eta}{N}\paren{1-\frac{\eta L_\gamma}{2N}}}}}^\frac12\cdot\frac{2\sqrt{2}\rho L_\gamma}{\sqrt{N}}\paren{\E\normsq{x^{(0)}}}^\frac12\nonumber\\
    &\qquad+(1-\mu\eta)^k\paren{\paren{\E\normsq{\tilde{x}^{(0)}-\hat x^*_\gamma}}^{1/2}+\sqrt{2d\mu^{-1}}}
+\frac{1.65L_\gamma}{\mu}\sqrt{\eta d}+\mathcal{C}\gamma^{\frac18}\paren{\log\paren{\frac1\gamma}}^{\frac18}\nonumber\\
    &=(1-\mu\eta)^k\mc{C}_0+\paren{\frac{\rho^{2k}-\paren{1-\frac{\mu\eta}{N}\paren{1-\frac{\eta L_\gamma}{2N}}}^k}{\rho^2-\paren{1-\frac{\mu\eta}{N}\paren{1-\frac{\eta L_\gamma}{2N}}}}}^\frac12\frac{\rho }{\sqrt{N}}\mc{C}_1
+\sqrt{\eta}\mc{C}_2+\mathcal{C}\gamma^{1/8}\paren{\log\paren{1/\gamma}}^{1/8},\label{eq:final3}
\end{align}
with 
\begin{align*}
\mc{C}_0:=\paren{\paren{\E\normsq{\tilde{x}^{(0)}-\hat x^*_\gamma}}^{1/2}+\sqrt{2d\mu^{-1}}},
\qquad
\mc{C}_1:=2\sqrt{2}L_\gamma\paren{\E\normsq{x^{(0)}}}^\frac12,
\end{align*}
and
\begin{align*}
\mc{C}_2:&=\frac{1.65L_\gamma}{\mu}\sqrt{ d}+\frac{\sigma}{\sqrt{\mu\paren{1-\frac{\eta L_\gamma}{2N}}}}\\
    &\qquad+\paren{\frac{\eta}{\frac{\mu}{N}\paren{1-\frac{\eta L_\gamma}{2N}}}+\frac{(1+\frac{\eta L_\gamma}{N})^2}{\frac{\mu^2}{N^{2}}\paren{1-\frac{\eta L_\gamma}{2N}}^2}}^\frac12\cdot\paren{\frac{8L_\gamma^2\eta D_\gamma^2}{N(1-\rho)^2}+\frac{8L_\gamma^2\eta\sigma^2}{(1-\rho^2)}+\frac{16L_\gamma^2  d }{(1-\rho^2)}}^\frac12,
\end{align*}
where $\hat x^*_\gamma$ is the unique minimizer of $\hat{u}^\gamma$, $\tilde{x}^{(0)}=\frac1N\sum_{i=0}^{N}x_i^{(0)}$, $D_\gamma$ is defined in~\eqref{eq:D}, $\Law\paren{x_i^{(k)}}$ denotes the distribution of $x_i^{(k)}$, and $\pi$ is the Gibbs distribution with probability density function proportional to $e^{-f(x)}$. To obtain the final form of~\eqref{thm:proof:eq1}, we use~\eqref{thm:proof:eq2} and \eqref{eq:final3}, such that we can compute that
\begin{align}
    &\frac{1}{N}\sum_{i=1}^N\Wc_2\paren{\Law\paren{x_i^{(k)}},\pi}\nonumber\\
    &\le \frac{2\rho^k}{\sqrt{N}}\paren{\E\normsq{x^{(0)}}}^\frac12+\frac{2\eta D_\gamma}{(1-\rho)\sqrt{N}}+\frac{2\eta\sigma }{\sqrt{1-\rho^2}}+\frac{2\sqrt{2\eta\,d}}{\sqrt{1-\rho^2}}\nonumber\\
    &+(1-\mu\eta)^k\mc{C}_0+\paren{\frac{\rho^{2k}-\paren{1-\frac{\mu\eta}{N}\paren{1-\frac{\eta L_\gamma}{2N}}}^k}{\rho^2-\paren{1-\frac{\mu\eta}{N}\paren{1-\frac{\eta L_\gamma}{2N}}}}}^\frac12\frac{\rho }{\sqrt{N}}\mc{C}_1+\sqrt{\eta}\mc{C}_2+\mathcal{C}\gamma^{1/8}\paren{\log\paren{1/\gamma}}^{1/8}\nonumber\\
    &=(1-\mu\eta)^k\mc{C}_0+\paren{\frac{\rho^{2k}-\paren{1-\frac{\mu\eta}{N}\paren{1-\frac{\eta L_\gamma}{2N}}}^k}{\rho^2-\paren{1-\frac{\mu\eta}{N}\paren{1-\frac{\eta L_\gamma}{2N}}}}}^\frac12\frac{\rho }{\sqrt{N}}\mc{C}_1\nonumber\\    &\qquad\qquad\qquad\qquad+\sqrt{\eta}\tilde{\mc{C}_2}+\frac{\rho^k}{\sqrt{N}}\mc{C}_3+\eta\mc{C}_4+\mathcal{C}\gamma^{1/8}\paren{\log\paren{1/\gamma}}^{1/8},\nonumber
\end{align}
with $\tilde{\mc{C}}_2:=\mc{C}_2+\frac{2\sqrt{2}}{\sqrt{1-\rho^2}}$, $\mc{C}_3=2\paren{\E\normsq{x^{(0)}}}^\frac12$, and $\mc{C}_4:=\frac{2D_\gamma}{(1-\rho)\sqrt{N}}+\frac{2\sigma}{\sqrt{1-\rho^2}}$.
This completes the proof of 
Theorem~\ref{thm:mt}.

\section{Proof of Corollary~\ref{cor:revised}}
\label{cor-proof}
\begin{proof}
    From Theorem~\ref{thm:mt}, for the average iterate $\bar{x}^{(k)}$, we have
    \begin{align}
        \mathcal W_2\left(\Law\left(\bar x^{(k)}\right),\pi\right)&\le(1-\mu\eta)^k\mathcal C_0+\left(\frac{\rho^{2k}-\left(A_\gamma(\eta)\right)^k}{\rho^2-A_\gamma(\eta)}\right)^{1/2}\frac{\rho}{\sqrt N}\mathcal C_1\nonumber
        \\
        &\qquad\qquad\qquad+\sqrt{\eta}\mathcal C_2+\mathcal C\gamma^{1/8}\big(\log(1/\gamma)\big)^{1/8},\label{eq:cor1}  
    \end{align}
    and
    \begin{align}
    \frac{1}{N}\sum_{i=1}^N\Wc_2\paren{\Law\paren{x_i^{(k)}},\pi}&\le (1-\mu\eta)^k\mc{C}_0+\left(\frac{\rho^{2k}-\left(A_\gamma(\eta)\right)^k}{\rho^2-A_\gamma(\eta)}\right)^{1/2}\frac{\rho}{\sqrt N}\mathcal C_1+\sqrt{\eta}\,\tilde{\mc{C}}_2+\frac{\rho^k}{\sqrt{N}}\mc{C}_3
    \nonumber
    \\
    &\qquad\qquad+\eta\,\mc{C}_4+\mc{C}\gamma^{1/8}\paren{\log\paren{1/\gamma}}^{1/8},
    \end{align}
    where $A_\gamma(\eta):=1-\frac{\mu\eta}{N}\left(1-\frac{\eta L_\gamma}{2N}\right)$. Since $\eta\le \frac{1}{L_\gamma+\mu}\le \frac{1}{L_\gamma}$, we have $\eta L_\gamma \le 1$, so that
    \[
    A_\gamma(\eta)=1-\frac{\mu\eta}{N}\paren{1-\frac{\eta L_\gamma}{2N}}\ge 1-\mu\eta.
    \]
    Now, choose $\eta \le \frac{1-\rho^2}{\mu}$. Then we have $1-\mu\eta\ge \rho^2$. This implies $A_\gamma(\eta)\ge \rho^2$, and thus we can rewrite the second term in~\eqref{eq:cor1} as
    \begin{align*}
        \left(\frac{\left(A_\gamma(\eta)\right)^k-\rho^{2k}}{A_\gamma(\eta)-\rho^2}\right)^{1/2}\le\frac{\left(A_\gamma(\eta)\right)^{k/2}}{\sqrt{A_\gamma(\eta)-\rho^2}}.    
    \end{align*}
    Therefore,
    \begin{align*}
        \mathcal W_2\big(\Law(\bar x^{(k)}),\pi\big)\le(1-\mu\eta)^k\mathcal C_0+\frac{\rho\,\left(A_\gamma(\eta)\right)^{k/2}}{\sqrt{N(A_\gamma(\eta)-\rho^2)}}\mathcal C_1+\sqrt{\eta}\,\mathcal C_2+\mathcal C\,\gamma^{1/8}\big(\log(1/\gamma)\big)^{1/8},
    \end{align*}
    and
    \begin{align}
    \frac{1}{N}\sum_{i=1}^N\Wc_2\paren{\Law\paren{x_i^{(k)}},\pi}&\le (1-\mu\eta)^k\mc{C}_0+\frac{\rho\,\left(A_\gamma(\eta)\right)^{k/2}}{\sqrt{N(A_\gamma(\eta)-\rho^2)}}\mathcal C_1+\sqrt{\eta}\,\tilde{\mc{C}}_2+\frac{\rho^k}{\sqrt{N}}\mc{C}_3
    \nonumber
    \\
    &\qquad\qquad+\eta\,\mc{C}_4+\mc{C}\gamma^{1/8}\paren{\log\paren{1/\gamma}}^{1/8}.
    \end{align}
    
Next, let us recall that 
$\eta<\min\left\{\frac{2N}{L_{\gamma}},\frac{1+\lambda_{N}^{W}}{L_{\gamma}},\frac{1}{L_{\gamma}+\mu}\right\}$
and $L_{\gamma}=L+\frac{2}{N\gamma}=\mc{O}(\gamma^{-1})$
such that 
\begin{align}\label{eta:constraint}
\eta=\mc{O}(\gamma^{-1}).
\end{align}

Then, let us recall that
 \begin{align}\label{mc:C}
        \mc{C}:=\paren{(A_*(\gamma))^\frac12+\paren{\frac{A_*(\gamma)}{2}}^\frac14}\sup_{0<\gamma\le \gamma_0}\hat{C},
    \end{align}
where $\hat{C}$ is defined in \eqref{hat:C:defn}
and
\begin{align}\label{A:gamma}
 A_*=\frac{d\,V_R\,\sqrt{2\tilde{\alpha}}\,e^{-m_{\ast}}}{rZ_\mk}\paren{1+\frac{\sqrt{2\tilde{\alpha}\gamma_{0}\log(1/\gamma_{0})}}{r}}^{d-1}+\frac{J_*}{Z_\mk},
\end{align}
where $V_{R}:=\frac{\pi^{d/2}}{\Gamma(\frac{d}{2}+1)}R^{d}$.
We also recall that
\begin{align*}
\mc{C}_0:=\paren{\paren{\E\normsq{\tilde{x}^{(0)}-\hat x^*_\gamma}}^{1/2}+\sqrt{2d\mu^{-1}}},
\qquad
\mc{C}_1:=2\sqrt{2}L_\gamma\paren{\E\normsq{x^{(0)}}}^\frac12,
\end{align*}
and
\begin{align*}
\mc{C}_2:&=\frac{1.65L_\gamma}{\mu}\sqrt{ d}+\frac{\sigma}{\sqrt{N\mu\paren{1-\frac{\eta L_\gamma}{2}}}}\\
    &\qquad+\paren{\frac{\eta}{\mu\paren{1-\frac{\eta L_\gamma}{2}}}+\frac{(1+\eta L_\gamma)^2}{\mu^2\paren{1-\frac{\eta L_\gamma}{2}}^2}}^\frac12\cdot\paren{\frac{8L_\gamma^2\eta D_\gamma^2}{N(1-\rho)^2}+\frac{8L_\gamma^2\eta\sigma^2}{(1-\rho^2)}+\frac{16L_\gamma^2  d }{(1-\rho^2)}}^\frac12,
\end{align*}
where $\hat x^*_\gamma$ is the unique minimizer of $\hat{u}^\gamma$, $\tilde{x}^{(0)}=\frac1N\sum_{i=0}^{N}x_i^{(0)}$, $D_\gamma$ is defined in~\eqref{eq:D}.

Let us spell out the dependence of $\mathcal{C}$, $\mathcal{C}_{0},\mathcal{C}_{1},\mathcal{C}_{2}$ on $\gamma$ and $d$. 

First, by using Stirling's approximation, we can derive from \eqref{A:gamma} that
\begin{equation}
A_{\ast}
=\mathcal{O}\left(\frac{\pi^{d/2}R^{d}}{e^{\frac{d}{2}\log(d/2)-\frac{d}{2}}}de^{\frac{\sqrt{2\tilde{\alpha}}}{r}d\sqrt{\gamma_{0}\log(1/\gamma_{0})}}\right)=\mc{O}(1),
\end{equation}
as $d\rightarrow\infty$.
By plugging this into \eqref{mc:C}, we get
\begin{equation}
\mc{C}=\mathcal{O}(1).
\end{equation}

Next, since $\hat x_{\gamma}^{\ast}$ is the unique minimizer of $\hat{u}^{\gamma}$ and as $\gamma\rightarrow 0$, $\hat x_{\gamma}^{\ast}$ will converge to the closure of the set $\mathcal{K}$, and this implies that
\begin{align}
\mathcal{C}_{0}=\mathcal{O}\left(\sqrt{d}\right), 
\qquad\text{as $\gamma\rightarrow 0$.}
\end{align}

After that, since $L_{\gamma}=L+\frac{2}{N\gamma}$, it follows that
\begin{align}
\mathcal{C}_{1}=\mathcal{O}(1/\gamma), 
\qquad\text{as $\gamma\rightarrow 0$.}
\end{align}

Now, in order to find how $\mc{C}_2$ behaves as $\gamma\to 0$, we first analyze $D_\gamma$ where 
    \begin{equation}
        \begin{split}
            D_\gamma^2&:=4L_\gamma^2\paren{1-\mu\eta\paren{1+\lambda_N^W-\eta L_\gamma}}^k\normsq{x^{(0)}-x^*_{\eta,\gamma}} + 8L_\gamma^2\frac{C_1^2\eta^2 N}{(1-\rho)^2}\\
         &\qquad\qquad+4\normsq{\nabla U^{\gamma}(x^*_\gamma)}+\frac{2L_\gamma^2(\eta \sigma^2 N+2dN)}{\mu\paren{1+\lambda_N^W-\eta L_\gamma}},
        \end{split}
        \label{eq:D1}
    \end{equation}
where we recall from \eqref{eq:c1} that
\begin{equation}\label{eq:c1:proof}
        C_1:=\paren{1+\frac{2(L_\gamma+\mu)}{\mu}}\sqrt{2L_\gamma\sum_{i=1}^N\paren{u^\gamma_i(0)-u^{\gamma,*}_i}},\quad\textrm{with}\quad u^{\gamma,*}_i:=\min_{x\in\R^d}u^{\gamma}_i(x).
        \end{equation}
    
    Here, for the first term in~\eqref{eq:D1}, we have 
    $$
    4L_\gamma^2\paren{1-\mu\eta\paren{1+\lambda_N^W-\eta L_\gamma}}^k\normsq{x^{(0)}-x^*_{\eta,\gamma}}=\mc{O}(L_\gamma^2)=\mc{O}(\gamma^{-2}).
    $$
    For the second term in~\eqref{eq:D1}, we have 
    $$
    8L_\gamma^2\frac{C_1^2\eta^2 N}{(1-\rho)^2}=\mc{O}(L_\gamma^2L_{\gamma}^{3}\eta^2)=\mc{O}(\gamma^{2\alpha-5}),
    $$ 
    by taking $\eta=\mc{O}(\gamma^\alpha)$ for some $\alpha\geq 1$ to be chosen later which satisfies the constraint \eqref{eta:constraint}, where we used the fact that $C_{1}=\mathcal{O}(L_{\gamma}^{3/2})=\mathcal{O}(1/\gamma^{3/2})$ from \eqref{eq:c1:proof} since 
    $$
    0\leq u_{i}^{\gamma}(0)-u_{i}^{\gamma,\ast}\leq u_{i}^{\gamma}(0)=f_{i}(0)=\mc{O}(1),
    $$ 
    as $\gamma\rightarrow 0$, where the first equality above is due to $0\in\mathcal{K}$.

    For the last term in \eqref{eq:D1}, we have 
    $$
    \frac{2L_\gamma^2(\eta \sigma^2 N+2dN)}{\mu\paren{1+\lambda_N^W-\eta L_\gamma}}=\mc{O}(dL_\gamma^2)=\mc{O}(d\gamma^{-2}). 
    $$
    Therefore,
   \[
    D_\gamma^2=\mc{O}\paren{d\gamma^{-2}+d\gamma^{2\alpha-5}},\qquad D_\gamma=\begin{cases}
        \mc{O}\paren{\sqrt{d}\gamma^{\alpha-5/2}}, & 1\leq\alpha < 3/2,\\
        \mc{O}\paren{\sqrt{d}\gamma^{-1}},&\alpha\ge 3/2.
    \end{cases}
    \]
    For $\alpha\geq 1$, we have $\eta L_\gamma = \mc{O}\paren{\gamma^{\alpha-1}}$. Therefore, 
    \[
    L_\gamma^2\eta\,D_\gamma^2+L_\gamma^2\eta+dL_\gamma^2=\mc{O}\paren{\gamma^{-2}\gamma^\alpha D_\gamma^2+\gamma^{-2}\gamma^\alpha+\gamma^{-2}}=\mc{O}\paren{\gamma^{\alpha-2}D_\gamma^2+\gamma^{\alpha-2}+d\gamma^{-2}}.
    \]
    Now for $1\le \alpha < 3/2$, we have $D_\gamma^2=\mc{O}(d\gamma^{2\alpha-5})$. Then,
    \[
    \mc{C}_2=\mc{O}\left(\sqrt{d}\gamma^{-1}\right)+\mc{O}(1)+\mc{O}\paren{\sqrt{d}\gamma^{\frac{3\alpha-7}{2}}}=\mc{O}\paren{\sqrt{d}\gamma^{\frac{3\alpha-7}{2}}}.
    \]
    If $3/2\le \alpha <2$, then
    \[
    \mc{C}_2=\mc{O}\left(\sqrt{d}\gamma^{-1}\right)+\mc{O}(1)+\mc{O}\paren{\sqrt{d}\gamma^{\frac{\alpha-4}{2}}}=\mc{O}\paren{\sqrt{d}\gamma^{\frac{\alpha-4}{2}}}.
    \]
    If $\alpha\ge 2$, then we have $\mc{C}_2=\mc{O}\paren{\sqrt{d}\gamma^{-1}}$. Thus we can summarize the behavior of $\mc{C}_2$ as $\gamma\to 0$ depending on some $\alpha\ge 1$ as follows
    \[
    \mc{C}_2=\begin{cases}
        \mc{O}\paren{\sqrt{d}\gamma^{\frac{3\alpha-7}{2}}},& 1\le \alpha <3/2,\\
        \mc{O}\paren{\sqrt{d}\gamma^{\frac{\alpha-4}{2}}}, & 3/2\le\alpha< 2,\\
        \mc{O}\left(\sqrt{d}\gamma^{-1}\right), & \alpha\ge 2. 
    \end{cases}
    \]
    By taking $\eta=\mc{O}(\gamma^{\alpha}/d)$, we get
\begin{align}\label{eta:C:2}
   \sqrt{\eta} \mc{C}_2=\begin{cases}
        \mc{O}\paren{\gamma^{\frac{4\alpha-7}{2}}},& 1\le \alpha <3/2,\\
        \mc{O}\paren{\gamma^{\alpha-2}}, & 3/2\le\alpha< 2,\\
        \mc{O}\left(\gamma^{\frac{\alpha}{2}-1}\right), & \alpha\ge 2. 
    \end{cases}
    \end{align}
     Now for a given $\varepsilon>0$, we choose $\gamma$ and $\eta$ such that 
    \[
    \sqrt{\eta}\,\mc{C}_2\le \frac{\varepsilon}{6},
    \qquad\text{and}\qquad \mc{C}\gamma^{1/8}(\log(1/\gamma))^{1/8}\le \frac{\varepsilon}{6}. 
    \]
    Thus, $\varepsilon\rightarrow 0$, we require $\gamma\rightarrow 0$ and $\sqrt{\eta}\mathcal{C}_{2}\rightarrow 0$. From \eqref{eta:C:2}, $\sqrt{\eta}\mathcal{C}_{2}\rightarrow 0$ as $\gamma\rightarrow 0$ if and only if $\alpha>2$.
     For $\alpha>2$, with $\eta=\mc{O}(\gamma^{\alpha}/d)$, we have $\sqrt{\eta}\,\mc{C}_2=\mc{O}\paren{\gamma^{\frac{\alpha}{2}-1}}$, and 
     and we need to choose $\gamma$ and $\eta$ such that $\eta=\mc{O}(\gamma^{\alpha}/d)$ with $\alpha>2$ and 
      \[
    \mc{O}\left(\sqrt{d}\gamma^{\frac{\alpha}{2}-1}\right)\le \frac{\varepsilon}{6},
    \qquad\text{and}\qquad \mc{C}\gamma^{1/8}(\log(1/\gamma))^{1/8}\le \frac{\varepsilon}{6}. 
    \]
     Then matching  $\gamma^{\frac{\alpha}{2}-1}$ with $\gamma^{1/8}$ we obtain $\alpha=\frac94$.
    From these, we choose $\gamma=\mc{O}\left(\varepsilon^8\right)$ to obtain
    \begin{align}
    \mc{C}\gamma^{1/8}(\log(1/\gamma))^{1/8}
    \le \tilde{\mc{O}}(\varepsilon),
    \end{align}
     where $\tilde{\mc{O}}$ hides the logarithmic dependence on $\varepsilon$.
    and we further choose $\eta=\mc{O}(\gamma^{9/4}/d)=\mc{O}(\varepsilon^{18}/d)$ such that
    \[
    \sqrt{\eta}\mc{C}_2+\mc{C}\gamma^{1/8}(\log(1/\gamma))^{1/8}\le \tilde{\mc{O}}(\varepsilon),
    \]
    where $\tilde{\mc{O}}$ hides the logarithmic dependence on $\varepsilon$. Also, recall that,
    \[
    \tilde{\mc{C}}_2:=\mc{C}_2+\frac{2\sqrt{2}}{\sqrt{1-\rho^2}},\qquad \mc{C}_3:=2\paren{\E\normsq{x^{(0)}}}^{\frac12},\qquad \mc{C}_4:=\frac{2D_\gamma}{(1-\rho)\sqrt{N}}+\frac{2\sigma}{\sqrt{1-\rho^2}},
    \]
    where $\tilde{\mc{C}}_2=\mc{O}(\sqrt{d}\gamma^{-1}), \mc{C}_3=\mc{O}(1)$, and $\mc{C}_4=\mc{O}(\sqrt{d}\gamma^{-1})$ as $\gamma\to 0$. Accordingly, we get
    \[
    \sqrt{\eta}\,\tilde{\mc{C}}_2+\eta\mc{C}_{4}
    +\mc{C}\gamma^{1/8}(\log(1/\gamma))^{1/8}\le
    \tilde{\mathcal{O}}(\varepsilon),
    \]
    where $\tilde{\mathcal{O}}$
    hides the logarithmic dependence on $\varepsilon$.

    Now, given such $\eta,\gamma > 0$, it remains to choose $K$ so that
    \begin{equation}
        \paren{1-\mu\eta}^K\mc{C}_0\le \frac{\varepsilon}{3}, \qquad \frac{\rho\,\left(A_\gamma(\eta)\right)^{K/2}}{\sqrt{N\paren{A_\gamma(\eta)-\rho^2}}}\,\mc{C}_1\le \frac{\varepsilon}{3},\label{eq:corr2}
    \end{equation}
    which implies that
    \begin{equation}
    \mathcal W_2\left(\Law\left(\bar x^{(k)}\right),\pi\right)
    \leq\tilde{\mathcal{O}}(\varepsilon),
    \end{equation}
    and
    \begin{equation}
        \frac{1}{N}\sum_{i=1}^N\Wc_2\paren{\Law\paren{x_i^{(k)}},\pi}\leq \tilde{\mathcal{O}}(\varepsilon),
    \end{equation}
    where $\tilde{\mathcal{O}}$ hides the logarithmic dependence on $\varepsilon$. 
    From the first inequality in~\eqref{eq:corr2}, by using $\mathcal{C}_{0}=\mathcal{O}(\sqrt{d})$, we have
    \[
    K\ge \frac{\log(3\mc{C}_0/\varepsilon)}{|\log(1-\mu\eta)|}\asymp \frac{\log(3\mc{C}_0/\varepsilon)}{\mu\eta}=\mc{O}\paren{\frac{1}{\eta}\log\paren{\frac{\sqrt{d}}{\varepsilon}}}.
    \]
    From the second inequality in~\eqref{eq:corr2}, we have
    \begin{align*}
        \frac{\rho\,\left(A_\gamma(\eta)\right)^{K/2}}{\sqrt{N\paren{A_\gamma(\eta)-\rho^2}}}\,\mc{C}_1\le \frac{\varepsilon}{3},
    \end{align*}
    which implies that
    \begin{align*}
        \left(A_\gamma(\eta)\right)^{K/2} \le \frac{\varepsilon\sqrt{N\paren{A_\gamma(\eta)-\rho^2}}}{3\rho\,\mc{C}_1}.
    \end{align*}
    Therefore, by using $\mathcal{C}_{1}=\mc{O}(1/\gamma)=\mc{O}(1/\varepsilon^{8})$, we get
    \begin{align*}
        K\ge \frac{2\log\paren{\frac{3\rho\,\mc{C}_1}{\varepsilon\sqrt{N\paren{A_\gamma(\eta)-\rho^2}}}}}{|\log(A_\gamma(\eta))|}\asymp \frac{2}{\mu\eta}\log\paren{\frac{3\rho\,\mc{C}_1}{\varepsilon\sqrt{N\paren{A_\gamma(\eta)-\rho^2}}}}
        =\mc{O}\paren{\frac{1}{\eta}\log\paren{\frac{1}{\varepsilon^{9}}}}.
    \end{align*}
    Finally, since $\eta=\mc{O}\paren{\varepsilon^{18}/d}$, we conclude that $$\mathcal W_2\left(\Law\left(\bar x^{(k)}\right),\pi\right)
    \leq\tilde{\mathcal{O}}(\varepsilon),$$ 
    and 
    $$\frac{1}{N}\sum_{i=1}^N\Wc_2\paren{\Law\paren{x_i^{(k)}},\pi}\leq \tilde{\mathcal{O}}(\varepsilon),$$ 
    provided that  
    \begin{align*}
        K=\mc{O}\paren{\frac{d}{\varepsilon^{18}}\log\paren{\frac{\sqrt{d}}{\varepsilon^{9}}}},
    \end{align*}
    which gives the iteration complexity.
This completes the proof.
\end{proof}
\section{Proofs of the Technical Lemmas}
\subsection{Proof of Lemma~\ref{lem:newlemma}}
\begin{proof}
    \label{lem:proof-newlemma}
    We first compute an upper bound on the Kullback-Leibler (KL) divergence between $\pi^{\gamma}$ and $\pi$, and then provide an upper bound in terms of the 2-Wasserstein distance using the weighted Csisz\'{a}r--Kullback--Pinsker
    inequality by \cite{BV} (see e.g. Lemma~B.1 in \cite{gurbuzbalaban2024penalized}).

    First, we recall that if $P$ and $Q$ are probability distributions of  continuous random variables with densities $p$ and $q$ on $\mathbb{R}^{d}$ such that $P$ is absolutely continuous with respect to $Q$, the Kullback and Leibler (KL) divergence~\citep{kullback1951information} between $P$ and $Q$ is defined as
    \[
    D(P \parallel Q) = \int_{\mathbb{R}^{d}} p(x) \log \left( \frac{p(x)}{q(x)} \right) dx.
    \]

    By definition $q_\mk^\gamma(x)=0$ for any $x\in\mk$, and $q_\mk^\gamma(x)>0$ for any $x\notin \mk$. Assume that $e^{-f}$ is integrable over $\mk$. Define the normalizing constants
    \[
    Z_\mk:=\int_{\mathcal{K}}e^{-f(y)}dy,\qquad Z_{\gamma}:=\int_{\R^d}e^{-f(y)-\frac{1}{2\gamma}\|y-\proj(y)\|^2}dy.
    \]
    Then, the corresponding densities are
    \[
    \pi(x)=\frac{e^{-f(x)}\mb{1}_{x\in\mathcal K}}{\int_{\mathcal{K}}e^{-f(y)}dy};\qquad \pi^{\gamma}(x)=\frac{e^{-f(y)-\frac{1}{2\gamma}\|y-\proj(y)\|^2}}{\int_{\R^d}e^{-f(y)-\frac{1}{2\gamma}\|y-\proj(y)\|^2}dy}.
    \]
    Thus, the KL divergence between $\pi$ and $\pi^{\gamma}$ can be computed as
    \begin{align*}
        D(\pi\Vert \pi^{\gamma})&=\int_{\R^d}\log\paren{\frac{\pi(x)}{\pi^{\gamma}(x)}}\pi(x)dx\\
        &=\int_{\mathcal{K}}\log \paren{\frac{e^{-f(x)}\mb{1}_{x\in\mathcal K}}{\int_{\mathcal{K}}e^{-f(y)}dy}\cdot\frac{\int_{\R^d}e^{-f(y)-\frac{1}{2\gamma}\|y-\proj(y)\|^2}dy}{e^{-f(y)-\frac{1}{2\gamma}\|y-\proj(y)\|^2}}}\frac{e^{-f(x)}\mb{1}_{x\in\mathcal K}}{\int_{\mathcal{K}}e^{-f(y)}dy}dx.
    \end{align*}
    Note that on $\mathcal{K}$, we have $\proj(x)=x$. Therefore,
    \begin{align*}
        D(\pi\Vert \pi^{\gamma})&=\int_{\mathcal K}\log\paren{\frac{\int_{\R^d}e^{-f(y)-\frac{1}{2\gamma}\|y-\proj(y)\|^2}dy}{\int_{\mathcal{K}}e^{-f(y)}dy}}\frac{e^{-f(x)}\mb{1}_{x\in\mathcal K}}{\int_{\mathcal{K}}e^{-f(y)}dy}dx\\
        &=\log\paren{\frac{\int_{\R^d}e^{-f(y)-\frac{1}{2\gamma}\|y-\proj(y)\|^2}dy}{\int_{\mathcal{K}}e^{-f(y)}dy}}\int_{\mathcal{K}}\frac{e^{-f(x)}\mb{1}_{x\in\mathcal K}}{\int_{\mathcal{K}}e^{-f(y)}dy}dx\\
        &=\log\paren{\frac{\int_{\R^d}e^{-f(y)-\frac{1}{2\gamma}\|y-\proj(y)\|^2}dy}{\int_{\mathcal{K}}e^{-f(y)}dy}}\\
        &=\log\paren{\frac{\int_{\mathcal{K}}e^{-f(y)}dy+\int_{\R^d\setminus \mathcal{K}}e^{-f(y)-\frac{1}{2\gamma}\|y-\proj(y)\|^2}dy}{\int_{\mathcal{K}}e^{-f(y)}dy}}.
        \end{align*}
        Define $T(x)=g(\gamma_\mk(x))=\left(\gamma_\mk(x)\right)^2,$ where $\gamma_\mk(x)=\|x-\proj(x)\|$ is the distance of the point $x$ to the set $\mk$. Therefore
        \begin{equation}
        D(\pi\Vert \pi^{\gamma})\le \frac{\int_{\R^d\setminus \mathcal{K}}e^{-f(y)-\frac{1}{2\gamma}T(y)}dy}{\int_{\mathcal{K}}e^{-f(y)}dy}.
        \label{eq:kldiv}
    \end{equation}
    Notice that for any $\omega>0$,
    \begin{align}
        &\int_{\R^d\setminus \mathcal{K}}e^{-f(y)-\frac{1}{2\gamma}T(y)}dy\nonumber\\
        &=\int_{y\in\R^d\setminus \mathcal{K}:T(y)\le \omega}e^{-f(y)-\frac{1}{2\gamma}T(y)}dy+\int_{y\in\R^d\setminus \mathcal{K}:T(y)> \omega}e^{-f(y)-\frac{1}{2\gamma}T(y)}dy\nonumber\\
        &\le \left|y\in\R^d\setminus\mk:T(y)\le \omega\right|e^{-\inf_{y\in\R^d\setminus\mk:T(y)\le \omega}f(y)}+e^{-\frac{\omega}{2\gamma}}\int_{\R^d\setminus\mk}e^{-f(y)-\frac{1}{2\gamma}T(y)}dy.\label{eq:lebesgue}
    \end{align}
    We now state a result from~\citet{gurbuzbalaban2024penalized} (Lemma 2.4, without a proof) to provide an upper bound on the Lebesgue measure of the set $\mk$. 
    
    \begin{lemma}
        \label{lem:lebsgue}
        Assume the constraint set $\mk$ is a bounded set containing an open ball with radius $r>0$. Let $T(x)=g(\gamma_\mk(x))$ and $g:\R_{\ge0}\rightarrow \R_{\ge0}$ is a strictly increasing function with $g(0)=0$ and $g(x)\to \infty$ as $x\to \infty$. Then for $\tau>0$,
        \begin{equation*}
        \left|x\in\R^d\setminus\mk:T(x)\le \tau\right|\le \paren{\paren{1+\frac{g^{-1}(\tau)}{r}}^d-1}|\mk|,
        \end{equation*}
        where $|\cdot|$ denotes the Lebesgue measure and $g^{-1}$ is the inverse function of $g$.
    \end{lemma}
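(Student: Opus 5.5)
We need to prove Lemma~\ref{lem:lebsgue}: for convex $\mk$ containing a ball of radius $r$ (by Assumption~\ref{assumption:ball}, centered at the origin), the set $\{x\notin\mk: T(x)\le\tau\}$ has measure at most $\paren{(1+g^{-1}(\tau)/r)^d-1}|\mk|$.

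\emph{Step 1: reduction to a distance sublevel set.} Since $g$ is strictly increasing with $g(0)=0$ and $g(x)\to\infty$, we have $T(x)\le\tau$ if and only if $\gamma_\mk(x)\le s$, where $s:=g^{-1}(\tau)$. So the set in question is $\mk_s\setminus\mk$, where
\[
\mk_s:=\{x:\gamma_\mk(x)\le s\}=\mk+B(0,s)
\]
is the Minkowski $s$-enlargement of $\mk$. Because $\mk\subseteq\mk_s$, it suffices to show
\[
|\mk_s|\le (1+s/r)^d|\mk|.
\]

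\emph{Step 2: inclusion into a dilate.} This is the key step. We show $\mk+B(0,s)\subseteq (1+s/r)\mk$, using convexity and $B(0,r)\subseteq\mk$. Take $x=k+b$ with $k\in\mk$ and $\|b\|\le s$. Set $\lambda:=1+s/r$ and write
\[
x=\lambda\left(\frac{1}{\lambda}k+\frac{s/r}{\lambda}\cdot\frac{r}{s}\,b\right).
\]
Here $\frac{r}{s}b\in B(0,r)\subseteq\mk$, and the weights $\frac1\lambda$ and $\frac{s/r}{\lambda}$ sum to $1$. By convexity the inner point lies in $\mk$, hence $x\in\lambda\mk$.

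\emph{Step 3: conclude.} The Lebesgue measure scales as $|\lambda\mk|=\lambda^d|\mk|$. Therefore
\[
|\{x\notin\mk:T(x)\le\tau\}|=|\mk_s|-|\mk|\le(\lambda^d-1)|\mk|,
\]
which is the claim.

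\emph{Main obstacle.} The only real subtlety is that the ball must be centered at the origin for the dilation about the origin to work. If it is centered at some $c$, we dilate about $c$ instead: translate so that $c=0$, which changes neither measure nor distances. Convexity of $\mk$ must also be used; the lemma as stated says only "bounded", but the paper assumes $\mk$ is convex throughout, so this is available. Boundedness guarantees $|\mk|<\infty$, so the subtraction in Step 3 is legitimate.
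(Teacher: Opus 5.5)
Your proof is correct. The paper gives no argument of its own here: it quotes the result, explicitly without proof, from Lemma~2.4 of \citet{gurbuzbalaban2024penalized}. Your argument is a complete, self-contained proof and the standard route to a bound of exactly this form. The key step is the inclusion $\mk+B(0,s)\subseteq(1+s/r)\mk$, obtained from the convex combination $\frac{1}{\lambda}k+\frac{s/r}{\lambda}\cdot\frac{r}{s}b$ and followed by the scaling $|\lambda\mk|=\lambda^d|\mk|$.

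Your remark about convexity is more than a technicality. As literally stated, with $\mk$ only bounded, the lemma is false. Take $d=1$, $\mk=[-1,1]\cup\{5\}$, $r=1$ and small $s=g^{-1}(\tau)$: the left side is $4s$ while the right side is $2s$. So invoking the paper's standing assumption that $\mk$ is compact and convex is genuinely needed.

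Compactness also settles the open-versus-closed ball issue. It gives $\overline{B(0,r)}\subseteq\mk$, which is what you use when $\|b\|=s$.
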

    
    Using the result from Lemma~\ref{lem:lebsgue} with $g(x)=x^{2}$ such that $g^{-1}(x)=\sqrt{x}$ and taking $\omega=2\tilde{\alpha}\gamma\log(1/\gamma)$ for any $\tilde{\alpha}\ge \frac12$, we obtain from~\eqref{eq:lebesgue}:
    \begin{align}
        \int_{\R^d\setminus \mathcal{K}}e^{-f(y)-\frac{1}{2\gamma}T(y)}dy
        &\le\left|y\in\R^d\setminus\mk:T(y)\le 2\tilde{\alpha}\gamma\log(1/\gamma)\right|e^{-\inf_{y\in\R^d\setminus\mk:T(y)\le 2\tilde{\alpha}\gamma\log(1/\gamma)}f(y)}\nonumber\\
        &\qquad\qquad+\gamma^{\tilde{\alpha}}\int_{\R^d\setminus\mk}e^{-f(y)-\frac{1}{2\gamma}T(y)}dy\nonumber\\
        &\le \paren{\paren{1+\frac{\sqrt{2\tilde{\alpha}\gamma\log(1/\gamma)}}{r}}^d-1}|\mk|e^{-\inf_{y\in\R^d\setminus\mk:T(y)\le 2\tilde{\alpha}\gamma\log(1/\gamma)}f(y)}\nonumber\\
        &\qquad\qquad+\gamma^{\tilde{\alpha}}\int_{\R^d\setminus\mk}e^{-f(y)-\frac{1}{2\gamma}T(y)}dy\nonumber\\
        &\le \paren{\paren{1+\frac{\sqrt{2\tilde{\alpha}\gamma\log(1/\gamma)}}{r}}^d-1}V_R\, e^{-\inf_{y\in\R^d\setminus\mk:T(y)\le 2\tilde{\alpha}\gamma\log(1/\gamma)}f(y)}\nonumber\\
        &\qquad\qquad+\gamma^{\tilde{\alpha}}\int_{\R^d\setminus\mk}e^{-f(y)-\frac{1}{2\gamma}T(y)}dy,\label{eq:lebesgue1}
    \end{align}
    where $V_R:=\frac{\pi^{d/2}}{\Gamma\paren{\frac{d}{2}+1}}R^d$. Since $\mk$ is contained in an Euclidean ball with radius $R$, we have $|\mk|\le V_R$. Thus, we obtain inequality \eqref{eq:lebesgue1}.
    Therefore, the upper bound in~\eqref{eq:kldiv} can be written as:
    \begin{equation}
        D(\pi\Vert\pi^\gamma) \le \paren{\paren{1+\frac{\sqrt{2\tilde{\alpha}\gamma\log(1/\gamma)}}{r}}^d-1}\frac{V_R\,e^{-m(\gamma)}}{Z_\mk}+\gamma^{\tilde{\alpha}}\frac{J(\gamma)}{Z_\mk},\label{eq:lebesgue2}
    \end{equation}
    where 
    \begin{equation*}
        m(\gamma):=\inf_{y\notin\mk:T(y)\le 2\tilde{\alpha}\gamma\log(1/\gamma)}f(y),\qquad\qquad J(\gamma):= \int_{\R^d\setminus\mk}e^{-f(y)-\frac{1}{2\gamma}T(y)}dy.
    \end{equation*}
    Take $\gamma_0\in(0,1/e)$, and set
    \[
    m_*:=\inf_{0<\gamma\le\gamma_0}m(\gamma),\qquad J_*:=\sup_{0<\gamma\le \gamma_0}J(\gamma).
    \]
    Then for every $0<\gamma\le \gamma_0$,
    \[
    m(\gamma)\ge m_*,\qquad J(\gamma)\le J_*.
    \]
 Now, the first term of~\eqref{eq:lebesgue2} can be bounded by using the inequality
    \[
    (1+u)^d-1 \le d u(1+u)^{d-1}, \qquad u\ge 0.
    \]
    Therefore,
    \begin{align*}  &\paren{\paren{1+\frac{\sqrt{2\tilde{\alpha}\gamma\log(1/\gamma)}}{r}}^d-1}\frac{V_R\,e^{-m(\gamma)}}{Z_\mk}\\
        &\le d\paren{\frac{\sqrt{2\tilde{\alpha}\gamma\log(1/\gamma)}}{r}}\paren{1+\frac{\sqrt{2\tilde{\alpha}\gamma\log(1/\gamma)}}{r}}^{d-1}\frac{V_R}{Z_\mk}e^{-m(\gamma)}\\
        &=\frac{d\,V_R\,\sqrt{2\tilde{\alpha}}\,e^{-m(\gamma)}}{rZ_\mk}\paren{1+\frac{\sqrt{2\tilde{\alpha}\gamma\log(1/\gamma)}}{r}}^{d-1}\sqrt{\gamma\log(1/\gamma)}\\
        &=A_1(\gamma)\sqrt{\gamma\log(1/\gamma)},
    \end{align*}
    where 
\[
A_1(\gamma):=\frac{d\,V_R\,\sqrt{2\tilde{\alpha}}\,e^{-m(\gamma)}}{rZ_\mk}\paren{1+\frac{\sqrt{2\tilde{\alpha}\gamma\log(1/\gamma)}}{r}}^{d-1},
    \]
    where $V_R:=\frac{\pi^{d/2}}{\Gamma\paren{\frac{d}{2}+1}}R^d$.
    
    Since we have $\tilde{\alpha}\ge \frac12$, for $0<\gamma\le \gamma_0$, we have $\gamma^{\tilde{\alpha}}\le\sqrt{\gamma}\le\sqrt{\gamma\log(1/\gamma)}$. Thus, the second part of~\eqref{eq:lebesgue2} can be written as:
    \begin{align*}      \gamma^{\tilde{\alpha}}\frac{J(\gamma)}{Z_\mk}&\le \gamma^{\tilde{\alpha}}\frac{J_*}{Z_\mk}\le \frac{J_*}{Z_\mk}\sqrt{\gamma\log(1/\gamma)}.
    \end{align*}
    Now, from~\eqref{eq:lebesgue2}, 
    \begin{equation}\label{eq:lebesgue3}
        D\paren{\pi\Vert\pi^\gamma}\le \paren{A_1(\gamma)+\frac{J_*}{Z_\mk}}\sqrt{\gamma\log(1/\gamma)}=A_*\sqrt{\gamma\log(1/\gamma)},
    \end{equation}
    where
    \begin{equation}\label{eq:lebesgue4}
        A_*=\frac{d\,V_R\,\sqrt{2\tilde{\alpha}}\,e^{-m_{\ast}}}{rZ_\mk}\paren{1+\frac{\sqrt{2\tilde{\alpha}\gamma_{0}\log(1/\gamma_{0})}}{r}}^{d-1}+\frac{J_*}{Z_\mk},
    \end{equation}
    where we used the fact that the map $x\mapsto x\log(1/x)$ is increasing on $x\in(0,1/e)$.
    Now we are ready to use the weighted Csisz\'{a}r--Kullback--Pinsker
    inequality by \cite{BV} (see e.g. Lemma~B.1 in \cite{gurbuzbalaban2024penalized}) to provide a bound in the 2-Wasserstein distance:
    \begin{equation}
    \Wc_2\paren{\pi,\pi^\gamma} \le \hat{C}\paren{\left(D\paren{\pi\Vert\pi^\gamma}\right)^\frac12+\paren{\frac{D\paren{\pi\Vert\pi^\gamma}}{2}}^\frac14},\label{eq:lebesgue4:2}
    \end{equation}
    where 
    \begin{equation}\label{hat:C:defn}
\hat{C}=2\inf_{\hat{x}\in\R^d,\tilde{\alpha}\ge\frac12}\paren{\frac{1}{\tilde{\alpha}}\paren{\frac32+\log\paren{\int_{\R^d}e^{-\tilde{\alpha}\normsq{x-\hat{x}}}d\pi^\gamma}}}^\frac12.
    \end{equation}
    By plugging the results from~\eqref{eq:lebesgue3} into~\eqref{eq:lebesgue4:2}, we have
    \begin{align*}
        \Wc_2\paren{\pi,\pi^\gamma} &\le \hat{C}\paren{(A_*)^\frac12\paren{\gamma\log\paren{\frac1\gamma}}^\frac14+\paren{\frac{A_*}{2}}^\frac14\paren{\gamma\log\paren{\frac1\gamma}}^\frac18}\\
        &\le \hat{C}\paren{(A_*)^\frac12\paren{\gamma\log\paren{\frac1\gamma}}^\frac18+\paren{\frac{A_*}{2}}^\frac14\paren{\gamma\log\paren{\frac1\gamma}}^\frac18}\\
        &=\hat{C}\paren{(A_*)^\frac12+\paren{\frac{A_*}{2}}^\frac14}\paren{\gamma\log\paren{\frac1\gamma}}^\frac18.
    \end{align*}
    Let 
    \begin{align}
        \mc{C}:=\paren{(A_*)^\frac12+\paren{\frac{A_*}{2}}^\frac14}\sup_{0<\gamma\le \gamma_0}\hat{C},\label{eq:lebesgue5}
    \end{align}
    where $A_*$ is defined in~\eqref{eq:lebesgue4} and $\hat{C}$ is defined in \eqref{hat:C:defn}.
    Thus,
    \[
    \Wc_2\paren{\pi,\pi^\gamma}\le \mc{C}\paren{\gamma\log\paren{\frac1\gamma}}^\frac18.
    \]
This completes the proof.
\end{proof}

\subsection{Proof of Lemma~\ref{lem:gradbound}}
\label{lemproof:gradbound}

\begin{proof}
    Let us define
    \begin{equation}
        U^{\gamma}_{\Wc,\eta}(x):=\frac{1}{2\eta}x^{\top}(\mb{I}_{Nd}-\Wc)x+U^{\gamma}(x)=\frac{1}{2\eta}x^{\top}(\mb{I}_{Nd}-\Wc)x+F(x)+\frac{1}{2N\gamma}G(x).
    \end{equation}
    Based on Lemma 2.6 in~\citet{gurbuzbalaban2024penalized}, $G(x)$ is $\ell-$smooth with $\ell=4$. Therefore, $U^{\gamma}_{\Wc,\eta}$ is $\mu$-strongly convex and $L_\eta$-smooth with $L_{\eta}=L+\frac{1-\lambda_N^W}{\eta}+\frac{2}{N\gamma}$. Then equation~\eqref{eq:concatenatedxk} can be written in terms of $U^{\gamma}_{\Wc,\eta}$ as follows:
    \begin{equation}
        x^{(k+1)}=x^{(k)} - \eta \nabla U^{\gamma}_{\Wc,\eta}\paren{x^{(k)}} - \eta \xi^{(k+1)}+\sqrt{2\eta}w^{(k+1)}.
        \label{eq:gradbound2}
    \end{equation}
    Recall that we defined $x^*_\gamma=\sqb{(x_*^{\gamma})^{\top},(x_*^\gamma)^{\top},\cdots, (x_*^\gamma)^{\top}}^{\top}$ as a vector in the $Nd$ dimension, with $x_*^\gamma\in \R^d$. Let us also define $x^*_{\eta,\gamma}$ as the minimizer of $U^{\gamma}_{\Wc,\eta}$. Since the gradients $\nabla U^{\gamma}(x)$ are $L_{\gamma}$-Lipschitz with $L_{\gamma}=L+\frac{2}{N\gamma}$, we have
    \begin{align}
        \E\normsq{\nabla U^\gamma\paren{x^{(k)}}}&=\E\normsq{\nabla U^\gamma\paren{x^{(k)}}-\nabla U^\gamma\paren{x^*_{\eta,\gamma}}+\nabla U^\gamma\paren{x^*_{\eta,\gamma}}}\nonumber\\
        &\le 2\E \normsq{\nabla U^\gamma\paren{x^{(k)}}-\nabla U^\gamma\paren{x^*_{\eta,\gamma}}}+2\normsq{\nabla U^\gamma\paren{x^*_{\eta,\gamma}}}\nonumber\\
        &\le 2 L_\gamma^2\E\normsq{x^{(k)}-x^*_{\eta,\gamma}}+2\normsq{\nabla U^\gamma\paren{x^*_{\eta,\gamma}}}\nonumber\\
        &\le 2 L_\gamma^2\E\normsq{x^{(k)}-x^*_{\eta,\gamma}}+4\normsq{\nabla U^\gamma\paren{x^*_{\eta,\gamma}}-\nabla U^\gamma\paren{x^*_\gamma}}+4\normsq{\nabla U^\gamma\paren{x^*_\gamma}}\nonumber\\
        &\le 2 L_\gamma^2\E\normsq{x^{(k)}-x^*_{\eta,\gamma}}+4L_\gamma^2\normsq{x^*_{\eta,\gamma}-x^*_\gamma}+4\normsq{\nabla U^\gamma\paren{x^*_\gamma}}.\label{eq:gradbound1}
    \end{align}
    To control the term $\norm{x_{\eta,\gamma}^*-x^*_\gamma}$ in~\eqref{eq:gradbound1}, we state the following result from~\citet{yuan2016convergence}:
    \begin{lemma}
        \label{lem:xetastar-xstar}
        For any $\eta \le \min\left\{\frac{1+\lambda_N^W}{L_\gamma}, \frac{1}{L_\gamma+\mu}\right\}$, we have
        \begin{equation*}
            \norm{x_{\eta,\gamma}^*-x^*_\gamma}\le C_1\frac{\eta\sqrt{N}}{1-\rho},\quad\textrm{with}\quad \rho:=\max\left\{\left|\lambda_2^W\right|,\left|\lambda_N^W\right|\right\},
        \end{equation*}
        where 
        \begin{equation}
        C_1:=\paren{1+\frac{2(L_\gamma+\mu)}{\mu}}\sqrt{2L_\gamma\sum_{i=1}^N\paren{u^\gamma_i(0)-u^{\gamma,*}_i}},\quad\textrm{with}\quad u^{\gamma,*}_i:=\min_{x\in\R^d}u^{\gamma}_i(x),\label{eq:c1} 
        \end{equation}
        where $u_{i}^{\gamma}(x):=f_{i}(x)+q_{\mathcal{K}}^{\gamma}(x)$.
    \end{lemma}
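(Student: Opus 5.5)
The plan is to reproduce the fixed-point argument of \citet{yuan2016convergence} for decentralized gradient descent, with $U^{\gamma}$ in place of the unregularized objective. Write $\bar{x}^*:=\frac1N\sum_{i=1}^N (x^*_{\eta,\gamma})_i\in\R^d$ and $\mb{1}\otimes\bar{x}^*\in\R^{Nd}$ for its stacked copy. We then split
\[
\norm{x^*_{\eta,\gamma}-x^*_\gamma}\le \norm{x^*_{\eta,\gamma}-\mb{1}\otimes\bar{x}^*}+\sqrt{N}\,\norm{\bar{x}^*-x_*^\gamma},
\]
into a consensus error and an error of the average. Both pieces will be controlled by one bound on $\norm{\nabla U^\gamma(x^*_{\eta,\gamma})}$.

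\textbf{Step 1 (gradient bound at the fixed point).} Since $U^\gamma_{\Wc,\eta}$ is minimized at $x^*_{\eta,\gamma}$ and the quadratic part $\frac{1}{2\eta}x^\top(\mb{I}_{Nd}-\Wc)x$ is nonnegative and vanishes at $0$, we get
\[
U^\gamma(x^*_{\eta,\gamma})\le U^\gamma_{\Wc,\eta}(x^*_{\eta,\gamma})\le U^\gamma_{\Wc,\eta}(0)=\sum_{i=1}^N u_i^\gamma(0).
\]
Each $u_i^\gamma$ is $L_\gamma$-smooth, so $\normsq{\nabla u_i^\gamma(y)}\le 2L_\gamma\paren{u_i^\gamma(y)-u_i^{\gamma,*}}$ for every $y$. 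Summing over $i$ gives
\[
\norm{\nabla U^\gamma(x^*_{\eta,\gamma})}\le D:=\sqrt{2L_\gamma\sum_{i=1}^N\paren{u_i^\gamma(0)-u_i^{\gamma,*}}}.
\]
This is exactly the square-root factor appearing in $C_1$. The restriction $\eta\le(1+\lambda_N^W)/L_\gamma$ makes $U^\gamma_{\Wc,\eta}$ convex, so the minimizer exists and is unique.

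\textbf{Step 2 (consensus error).} First-order optimality gives $(\mb{I}_{Nd}-\Wc)x^*_{\eta,\gamma}=-\eta\nabla U^\gamma(x^*_{\eta,\gamma})$. Apply this to $y:=x^*_{\eta,\gamma}-\mb{1}\otimes\bar{x}^*$, which is orthogonal to the consensus subspace and satisfies $(\mb{I}_{Nd}-\Wc)y=(\mb{I}_{Nd}-\Wc)x^*_{\eta,\gamma}$. On that subspace the eigenvalues of $\mb{I}_{Nd}-\Wc$ are at least $1-\rho$, so $\norm{(\mb{I}_{Nd}-\Wc)y}\ge(1-\rho)\norm{y}$. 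Therefore
\[
\norm{x^*_{\eta,\gamma}-\mb{1}\otimes\bar{x}^*}\le \frac{\eta D}{1-\rho}.
\]

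\textbf{Step 3 (error of the average).} Multiply the optimality condition by $\mb{1}^\top\otimes \mb{I}_d$; since $W$ is doubly stochastic, $\sum_i\nabla u_i^\gamma\paren{(x^*_{\eta,\gamma})_i}=0$. Meanwhile $\sum_i\nabla u_i^\gamma(x_*^\gamma)=0$, because $x_*^\gamma$ minimizes $\sum_i u_i^\gamma$. The function $\sum_i u_i^\gamma$ is $N\mu$-strongly convex, hence
\[
N\mu\norm{\bar{x}^*-x_*^\gamma}\le\norm{\sum_i\nabla u_i^\gamma(\bar{x}^*)}=\norm{\sum_i\sqb{\nabla u_i^\gamma(\bar{x}^*)-\nabla u_i^\gamma\paren{(x^*_{\eta,\gamma})_i}}}\le L_\gamma\sqrt{N}\,\norm{x^*_{\eta,\gamma}-\mb{1}\otimes\bar{x}^*}.
\]
Thus $\sqrt N\norm{\bar{x}^*-x_*^\gamma}\le \frac{L_\gamma}{\mu}\cdot\frac{\eta D}{1-\rho}$. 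Adding Step 2 gives $\paren{1+\frac{L_\gamma}{\mu}}\frac{\eta D}{1-\rho}$, which is dominated by $C_1\frac{\eta\sqrt N}{1-\rho}$ (for $N\ge1$) once we use the looser constant $1+\frac{2(L_\gamma+\mu)}{\mu}$ in the statement. That constant is what Yuan et al.\ obtain by arguing along the DGD iterates instead of directly at the fixed point. The condition $\eta\le 1/(L_\gamma+\mu)$ is needed only if one follows their iterate-based route.

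\textbf{Main obstacle.} I expect the delicate point to be Step 1: making the bounded-gradient estimate uniform and compatible with the normalization of $u_i^\gamma$. The paper defines $u_i^\gamma$ with $q_\mk^\gamma/N$ in one place and with $q_\mk^\gamma$ in the lemma statement. Whichever normalization is used, smoothness must be taken with the correct constant $L_\gamma$ for the same $u_i^\gamma$ appearing in $U^\gamma$, so that the inequality $U^\gamma\le U^\gamma_{\Wc,\eta}$ and the per-component smoothness bound both refer to the same functions. Everything else is linear algebra on $\mb{I}_{Nd}-\Wc$ plus strong convexity.
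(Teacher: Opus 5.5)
Your proof is correct, and it takes a genuinely different route from the paper. The paper gives no argument of its own. It defers to the DGD analysis of \citet{yuan2016convergence}, which works along the iterates started at $0$:
\begin{itemize}
\item bounded gradients via monotone decrease of $U^\gamma_{\Wc,\eta}$ along the iterates, which is where $\eta\le(1+\lambda_N^W)/L_\gamma$ is used;
\item a consensus bound;
\item a perturbed contraction of the mean iterate toward $x_*^\gamma$, which is where $\eta\le 1/(L_\gamma+\mu)$ is used and where the factor $2(L_\gamma+\mu)/\mu$ arises;
\item finally $k\to\infty$, with a per-agent-to-stacked conversion that produces the $\sqrt N$.
\end{itemize}
You instead argue statically at the fixed point. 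You use only $U^\gamma_{\Wc,\eta}(x^*_{\eta,\gamma})\le U^\gamma_{\Wc,\eta}(0)$ and the optimality condition $(\mb{I}_{Nd}-\Wc)x^*_{\eta,\gamma}=-\eta\nabla U^\gamma(x^*_{\eta,\gamma})$. This gives a self-contained proof and a strictly sharper bound, $(1+L_\gamma/\mu)\,\eta D/(1-\rho)$, with no $\sqrt N$; you could even replace $1-\rho$ by $1-\lambda_2^W$. The iterate-based route's extra output is transient information about DGD itself, which this lemma does not need.

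One correction: the condition $\eta\le(1+\lambda_N^W)/L_\gamma$ is not what makes $U^\gamma_{\Wc,\eta}$ convex. Since $\mb{I}_{Nd}-\Wc\succeq 0$, $U^\gamma_{\Wc,\eta}$ is $\mu$-strongly convex for every $\eta>0$. So none of your three steps uses either step-size restriction, and your bound holds for all $\eta>0$.

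Your handling of the normalization is also right. The statement's $x^*_\gamma$ is consistent only with $u_i^\gamma=f_i+q^{\gamma}_{\mk}/N$, the summands of $U^\gamma$. These add up to $u^\gamma$ and are $L_\gamma$-smooth. So the ``$f_i+q^{\gamma}_{\mk}$'' in the lemma should be read that way.
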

     The proof of Lemma~\ref{lem:xetastar-xstar} can be found in Theorem 1 in~\citet{yuan2016convergence} with some adjustments in notations.

     Then, we control the term $\E\normsq{x^{(k)}-x^*_{\eta,\gamma}}$ in~\eqref{eq:gradbound1}. From~\eqref{eq:gradbound2}, we have
     \begin{align}
         x^{(k+1)}-x^*_{\eta,\gamma}&=x^{(k)}-x^*_{\eta,\gamma}-\eta \nabla U^{\gamma}_{\Wc,\eta}\paren{x^{(k)}}-\eta \xi^{(k+1)}+\sqrt{2\eta}w^{(k+1)},\nonumber
     \end{align}
    which implies that
    \begin{align}
        \E\normsq{x^{(k+1)}-x^*_{\eta,\gamma}}&=\normsq{x^{(k)}-x^*_{\eta,\gamma}-\eta \nabla U^{\gamma}_{\Wc,\eta}\paren{x^{(k)}}-\eta \xi^{(k+1)}+\sqrt{2\eta}w^{(k+1)}}\nonumber\\
        &=\E\normsq{x^{(k)}-x_{\eta,\gamma}^*}-2\eta \E \lrangle{x^{(k)}-x^*_{\eta,\gamma},\nabla U^{\gamma}_{\Wc,\eta}\paren{x^{(k)}}}\nonumber\\
        &\qquad\qquad+\eta^2\E\normsq{\nabla U^{\gamma}_{\Wc,\eta}\paren{x^{(k)}}}+\eta^2\E\normsq{\xi^{(k+1)}}+2\eta\,dN.\label{eq:gradbound3}
    \end{align}
     Since $U^{\gamma}_{\Wc,\eta}$ is $\mu$-strongly convex and $L_{\eta}$-smooth, therefore
     \begin{align}
         L_\eta\lrangle{\nabla \ugc(z)-\nabla \ugc(y), z-y }&\ge \normsq{\nabla\ugc(z)-\nabla\ugc(y)}\qquad \forall z,y\in \R^d,\\
         \lrangle{z-y,\nabla \ugc(z)-\nabla\ugc(y)}&\ge \mu\normsq{z-y},\qquad \forall z,y\in \R^d.
     \end{align}
     Also, from the first-optimality condition, we have $\nabla\ugc\paren{x^*_{\eta,\gamma}}=0$. Thus, from~\eqref{eq:gradbound3}, we obtain
     \begin{align}
         &\E\normsq{x^{(k+1)}-x^*_{\eta,\gamma}}
         \nonumber
         \\
         &= \E \normsq{x^{(k)}-x^*_{\eta,\gamma}}-2\eta \E\lrangle{x^{(k)}-x^*_{\eta,\gamma},\nabla\ugc\paren{x^{(k)}}-\nabla\ugc\paren{x^*_{\eta,\gamma}}}\nonumber\\
         &\qquad +\eta^2\E\normsq{\nabla U^{\gamma}_{\Wc,\eta}\paren{x^{(k)}}-\nabla\ugc\paren{x^*_{\eta,\gamma}}}+\eta^2\E\normsq{\xi^{(k+1)}}+2\eta\,dN\nonumber\\
         &\le \E \normsq{x^{(k)}-x^*_{\eta,\gamma}}-2\eta \E\lrangle{x^{(k)}-x^*_{\eta,\gamma},\nabla\ugc\paren{x^{(k)}}-\nabla\ugc\paren{x^*_{\eta,\gamma}}}\nonumber\\
         &\qquad+\eta^2L_\eta\E\lrangle{x^{(k)}-x^*_{\eta,\gamma},\nabla\ugc\paren{x^{(k)}}-\nabla\ugc\paren{x^*_{\eta,\gamma}}}+\eta^2\sigma^2N+2\eta\,dN\nonumber\\
         &=\E \normsq{x^{(k)}-x^*_{\eta,\gamma}}-2\eta\paren{1-\eta L_\eta/2} \E\lrangle{x^{(k)}-x^*_{\eta,\gamma},\nabla\ugc(x^{(k)})-\nabla\ugc\paren{x^*_{\eta,\gamma}}}\nonumber\\
         &\qquad+\eta^2\sigma^2N+2\eta\,dN\nonumber\\
         &\le \E \normsq{x^{(k)}-x^*_{\eta,\gamma}} - 2\eta\mu\paren{1-\frac{\eta L_\eta}{2}}\E\normsq{x^{(k)}-x^*_{\eta,\gamma}}+\eta^2\sigma^2N+2\eta\,dN\nonumber\\
         &=\paren{1-2\mu\eta\paren{1-\frac{\eta L_\eta}{2}}}\E\normsq{x^{(k)}-x^*_{\eta,\gamma}}+\eta^2\sigma^2N+2\eta\,dN\nonumber\\
         &=\paren{1-\mu\eta\paren{1+\lambda_N^W-\eta L_\gamma}}\E\normsq{x^{(k)}-x^*_{\eta,\gamma}}+\eta^2\sigma^2N+2\eta\,dN,\nonumber
     \end{align}
     where we used $\frac{\eta L_\eta}{2}<1$ and $\mu\eta\paren{1+\lambda_N^W-\eta\paren{L+\frac{2}{N\gamma}}}\in (0,1)$ for any
     \begin{equation*}
         0<\gamma <\frac{2N\eta}{\paren{1+\lambda_N^W-\eta L}-\frac{1}{\mu\eta}}.
     \end{equation*}
     Therefore, from a recursion and simplification, we obtain
     \begin{equation*}
         \E\normsq{x^{(k)}-x^*_{\eta,\gamma}}\le \paren{1-\mu\eta\paren{1+\lambda_N^W-\eta L_\gamma}}^k\normsq{x^{(0)}-x^*_{\eta,\gamma}}+\frac{\eta \sigma^2 N+2dN}{\mu\paren{1+\lambda_N^W-\eta L_\gamma}}.
     \end{equation*}
     Now we can obtain a complete bound for equation~\eqref{eq:gradbound1} as follows:
     \begin{align*}
         \E\normsq{\nabla \ug\paren{x^{(k)}}}&\le 2 L_{\gamma}^2\E\normsq{x^{(k)}-x_{\eta,\gamma}^*}+4L_{\gamma}^2\normsq{x^*_{\eta,\gamma}-x^*_\gamma}+4\normsq{\nabla U^{\gamma}(x^*_\gamma)}\\
         &\le 2L_\gamma^2\paren{1-\mu\eta\paren{1+\lambda_N^W-\eta L_\gamma}}^k\normsq{x^{(0)}-x^*_{\eta,\gamma}}+\frac{2L_\gamma^2(\eta \sigma^2 N+2dN)}{\mu\paren{1+\lambda_N^W-\eta L_\gamma}}\\
         &\qquad+4L_{\gamma}^2\normsq{x^*_{\eta,\gamma}-x^*_\gamma}+4\normsq{\nabla U^{\gamma}(x^*_\gamma)}\\
         &\le 4L_\gamma^2\paren{1-\mu\eta\paren{1+\lambda_N^W-\eta L_\gamma}}^k\normsq{x^{(0)}-x^*_{\gamma}}\\
         &\qquad\qquad+4L_\gamma^2\paren{1-\mu\eta\paren{1+\lambda_N^W-\eta L_\gamma}}^k\normsq{x^*_{\eta,\gamma}-x^*_\gamma}\\
         &\qquad\qquad+4L_{\gamma}^2\normsq{x^*_{\eta,\gamma}-x^*_\gamma}+4\normsq{\nabla U^{\gamma}(x^*_\gamma)}+\frac{2L_\gamma^2(\eta \sigma^2 N+2dN)}{\mu\paren{1+\lambda_N^W-\eta L_\gamma}}\\
         &\le 4L_\gamma^2\paren{1-\mu\eta\paren{1+\lambda_N^W-\eta L_\gamma}}^k\normsq{x^{(0)}-x^*_{\gamma}} + 8L_\gamma^2\normsq{x^*_{\eta,\gamma}-x^*_\gamma}\\
         &\qquad\qquad+4\normsq{\nabla U^{\gamma}(x^*_\gamma)}+\frac{2L_\gamma^2(\eta \sigma^2 N+2dN)}{\mu\paren{1+\lambda_N^W-\eta L_\gamma}}.
     \end{align*}
     Finally, we use Lemma~\ref{lem:xetastar-xstar} to complete the proof:
     \begin{align*}
         \E\normsq{\nabla \ug\paren{x^{(k)}}}&\le4L_\gamma^2\paren{1-\mu\eta\paren{1+\lambda_N^W-\eta L_\gamma}}^k\normsq{x^{(0)}-x^*_{\gamma}} + 8L_\gamma^2\frac{C_1^2\eta^2 N}{(1-\rho)^2}\\
         &\qquad\qquad+4\normsq{\nabla U^{\gamma}(x^*_\gamma)}+\frac{2L_\gamma^2(\eta \sigma^2 N+2dN)}{\mu\paren{1+\lambda_N^W-\eta L_\gamma}}.
     \end{align*}
     The proof is complete.
\end{proof}

\subsection{Proof of Lemma~\ref{lem:agent-avg}}
\label{lemproof:agent-avg}
\begin{proof}
    Recall the concatenated version of the algorithm from equation~\eqref{eq:concatenatedxk}:
    \begin{equation*}
        x^{(k+1)}=\Wc x^{(k)} -\eta \nabla U^{\gamma}\paren{x^{(k)}}-\eta \xi^{(k+1)}+\sqrt{2\eta}w^{(k+1)},
    \end{equation*}
    where $\Wc=W\otimes \mb{I}_d$. From a telescoping computation, we obtain that
    \begin{align}
        x^{(k)}&=\paren{W^k\otimes \mb{I}_d}x^{(0)} - \eta \sum_{r=0}^{k-1}\paren{W^{k-1-r}\otimes \mb{I}_d}\nabla\ug\paren{x^{(r)}}\nonumber\\
        &\qquad-\eta \sum_{r=0}^{k-1}\paren{W^{k-1-r}\otimes \mb{I}_d} \xi^{r+1} +\sqrt{2\eta}\sum_{r=0}^{k-1}\paren{W^{k-1-r}\otimes \mb{I}_d}w^{(s+1)}.
    \end{align}
    Next, define the concatenated average $\bar{\mb{x}}\in \R^{Nd}$ as
    \begin{equation*}       \bar{\mb{x}}:=\left[\left(\bar{x}^{(k)}\right)^{\top},\cdots, \left(\bar{x}^{(k)}\right)^{\top}\right]^{\top}=\frac1N\sqb{\paren{\mb{1}_N\mb{1}_N^{\top}}\otimes \mb{I}_d}x^{(k)}.  
    \end{equation*}
    Therefore, 
    \begin{align*}
        &x^{(k)}-\frac1N\sqb{\paren{\mb{1}_N\mb{1}_N^{\top}}\otimes \mb{I}_d}x^{(k)}\\
        &=\paren{W^k\otimes \mb{I}_d}x^{(0)}-\frac1N\sqb{\paren{\mb{1}_N\mb{1}_N^{\top}W^k}\otimes \mb{I}_d}x^{(0)}\\
        &-\eta\sum_{r=0}^{k-1}\paren{W^{k-1-r}\otimes \mb{I}_d}\nabla \ug\paren{x^{(r)}} +\eta \sum_{r=0}^{k-1}\frac1N\sqb{\paren{\mb{1}_N\mb{1}_N^{\top}W^{k-1-r}}\otimes \mb{I}_d}\ug\paren{x^{(r)}}\\
        &-\eta\sum_{r=0}^{k-1}\paren{W^{k-1-r}\otimes \mb{I}_d}\nabla \xi^{(r+1)} +\eta \sum_{r=0}^{k-1}\frac1N\sqb{\paren{\mb{1}_N\mb{1}_N^{\top}W^{k-1-r}}\otimes \mb{I}_d}\xi^{(r+1)}\\
        &+\sqrt{2\eta}\sum_{r=0}^{k-1}\paren{W^{k-1-r}\otimes \mb{I}_d}\nabla w^{(r+1)} -\sqrt{2\eta} \sum_{r=0}^{k-1}\frac1N\sqb{\paren{\mb{1}_N\mb{1}_N^{\top}W^{k-1-r}}\otimes \mb{I}_d}w^{(r+1)}.
    \end{align*}
    Since $W$ is doubly stochastic, 
    \[
    \frac1N\sqb{\paren{\mb{1}_N\mb{1}_N^{\top}W^k}\otimes \mb{I}_d}=\frac1N\sqb{\paren{\mb{1}_N\mb{1}_N^{\top}}\otimes \mb{I}_d},\qquad\text{for all}\quad k\ge 1.
    \]
    Now applying the Cauchy-Schwarz inequality, we have
    \begin{align*}
        &\normsq{x^{(k)}-\frac1N\sqb{\paren{\mb{1}_N\mb{1}_N^{\top}}\otimes \mb{I}_d}x^{(k)}}\\
        &\le 4\normsq{\paren{W^k\otimes \mb{I}_d}x^{(0)}-\frac1N\sqb{\paren{\mb{1}_N\mb{1}_N^{\top}W^k}\otimes \mb{I}_d}x^{(0)}}\\
        &+4\normsq{-\eta\sum_{r=0}^{k-1}\paren{W^{k-1-r}\otimes \mb{I}_d}\nabla \ug\paren{x^{(r)}} +\eta \sum_{r=0}^{k-1}\frac1N\sqb{\paren{\mb{1}_N\mb{1}_N^{\top}W^{k-1-r}}\otimes \mb{I}_d}\ug\paren{x^{(r)}}}\\
        &+4\normsq{-\eta\sum_{r=0}^{k-1}\paren{W^{k-1-r}\otimes \mb{I}_d}\nabla \xi^{(r+1)} +\eta \sum_{r=0}^{k-1}\frac1N\sqb{\paren{\mb{1}_N\mb{1}_N^{\top}W^{k-1-r}}\otimes \mb{I}_d}\xi^{(r+1)}}\\
        &+4\normsq{\sqrt{2\eta}\sum_{r=0}^{k-1}\paren{W^{k-1-r}\otimes \mb{I}_d}\nabla w^{(r+1)} -\sqrt{2\eta} \sum_{r=0}^{k-1}\frac1N\sqb{\paren{\mb{1}_N\mb{1}_N^{\top}W^{k-1-r}}\otimes \mb{I}_d}w^{(r+1)}}\\
        &=4\normsq{\paren{W^k\otimes \mb{I}_d}x^{(0)}-\frac1N\sqb{\paren{\mb{1}_N\mb{1}_N^{\top}}\otimes \mb{I}_d}x^{(0)}}\\
        &+4\normsq{-\eta\sum_{r=0}^{k-1}\paren{W^{k-1-r}\otimes \mb{I}_d}\nabla \ug\paren{x^{(r)}} +\eta \sum_{r=0}^{k-1}\frac1N\sqb{\paren{\mb{1}_N\mb{1}_N^{\top}}\otimes \mb{I}_d}\ug\paren{x^{(r)}}}\\
        &+4\normsq{-\eta\sum_{r=0}^{k-1}\paren{W^{k-1-r}\otimes \mb{I}_d}\nabla \xi^{(r+1)} +\eta \sum_{r=0}^{k-1}\frac1N\sqb{\paren{\mb{1}_N\mb{1}_N^{\top}}\otimes \mb{I}_d}\xi^{(r+1)}}\\
        &+4\normsq{\sqrt{2\eta}\sum_{r=0}^{k-1}\paren{W^{k-1-r}\otimes \mb{I}_d}\nabla w^{(r+1)} -\sqrt{2\eta} \sum_{r=0}^{k-1}\frac1N\sqb{\paren{\mb{1}_N\mb{1}_N^{\top}}\otimes \mb{I}_d}w^{(r+1)}}.
    \end{align*}
    Factoring the common terms,
    \begin{align}
        &\normsq{x^{(k)}-\frac1N\sqb{\paren{\mb{1}_N\mb{1}_N^{\top}}\otimes \mb{I}_d}x^{(k)}}\nonumber\\
        &\le 4\normsq{\bsqb{\paren{W^k-\frac1N\mb{1}_N\mb{1}_N^{\top}}\otimes \mb{I}_d}x^{(0)}}
        \nonumber\\
        &\qquad+4\eta^2 \normsq{\sum_{r=0}^{k-1}\bsqb{\paren{W^{k-1-r}-\frac1N\mb{1}_N\mb{1}_N^{\top}}\otimes \mb{I}_d}\nabla\ug\paren{x^{(r)}}}\nonumber\\
        &\qquad\qquad+4\eta^2\normsq{\sum_{r=0}^{k-1}\bsqb{\paren{W^{k-1-r}-\frac1N\mb{1}_N\mb{1}_N^{\top}}\otimes \mb{I}_d}\xi^{(r+1)}}\nonumber\\
        &\qquad\qquad\qquad+8\eta\normsq{\sum_{r=0}^{k-1}\bsqb{\paren{W^{k-1-r}-\frac1N\mb{1}_N\mb{1}_N^{\top}}\otimes \mb{I}_d}w^{(r+1)}}.\label{eq:jj}
    \end{align}
    Recall that $W$ has the spectral gap $1-\rho$ with $\rho:=\max\{|\lambda_2^W|,|\lambda_N^W|\}$; therefore, $W^k$ has the eigenvalues $(\lambda_i^W)^k$ with $1=\lambda_1^W>\lambda_2^W>\cdots>\lambda_N^W>-1$ for any $k\ge 1$. This results 
    \begin{align*}
        \norm{W^{k-1-r}-\frac1N\mb{1}_N\mb{1}_N^{\top}}&=\max\left\{|\lambda_2^W|^{k-1-r},|\lambda_N^W|^{k-1-r}\right\}=\rho^{k-1-r}\\
        \norm{W^k-\frac1N\mb{1}_N\mb{1}_N^{\top}}&=\max\left\{|\lambda_2^W|^{k},|\lambda_N^W|^{k}\right\}=\rho^k.
    \end{align*}
    Therefore,
    \begin{align}
        &4\eta^2 \normsq{\sum_{r=0}^{k-1}\bsqb{\paren{W^{k-1-r}-\frac1N\mb{1}_N\mb{1}_N^{\top}}\otimes \mb{I}_d}\nabla\ug\paren{x^{(r)}}}\nonumber\\
        &\le 4\eta^2\bsqb{\sum_{r=0}^{k-1}\norm{\paren{W^{k-1-r}-\frac1N\mb{1}_N\mb{1}_N^{\top}}\otimes \mb{I}_d}\cdot\norm{\nabla \ug\paren{x^{(r)}}}}^2\nonumber\\
        &\le 4\eta^2\bsqb{\sum_{r=0}^{k-1}\norm{\paren{W^{k-1-r}-\frac1N\mb{1}_N\mb{1}_N^{\top}}}\cdot\norm{\nabla \ug\paren{x^{(r)}}}}^2\nonumber\\
        &=4\eta^2\bsqb{\sum_{r=0}^{k-1}\rho^{k-1-r}\cdot\norm{\nabla\ug\paren{x^{(r)}}}}^2\nonumber\\
        &=4\eta^2\paren{\sum_{r=0}^{k-1}\rho^{k-1-r}}^2\paren{\frac{\sum_{r=0}^{k-1}\rho^{k-1-r}\nabla\ug\paren{x^{(r)}}}{\sum_{r=0}^{k-1}\rho^{k-1-r}}}^2.\label{eq:jj1}
    \end{align}
    For a real convex function $\Psi:\R\to \R$, for any points $(x_i)_{i=1}^n$ in the domain of $\Psi$, and for any $a_i>0$, Jensen's inequality implies that
    \[
    \Psi\paren{\frac{\sum_{i=1}^{n} a_ix_i}{\sum_{i=1}^{n} a_i}}\le \frac{\sum_{i=1}^{n} a_i\Psi(x_i)}{\sum_{i=1}^{n} a_i}.
    \]
    Using Jensen's inequality in~\eqref{eq:jj1}, we obtain
    \begin{align*}
        &4\eta^2 \normsq{\sum_{r=0}^{k-1}\bsqb{\paren{W^{k-1-r}-\frac1N\mb{1}_N\mb{1}_N^{\top}}\otimes \mb{I}_d}\nabla\ug\paren{x^{(r)}}}\\
        &\le4\eta^2\paren{\sum_{r=0}^{k-1}\rho^{k-1-r}}^2\sum_{r=0}^{k-1}\frac{\rho^{k-1-r}}{\sum_{r=0}^{k-1}\rho^{k-1-r}}\normsq{\nabla\ug\paren{x^{(r)}}}.
    \end{align*}
    Similarly, we compute that
    \begin{equation*}
        4\normsq{\bsqb{\paren{W^k-\frac1N\mb{1}_N\mb{1}_N^{\top}}\otimes \mb{I}_d}x^{(0)}}\le 4\normsq{\paren{W^k-\frac1N\mb{1}_N\mb{1}_N^{\top}}\otimes \mb{I}_d}\normsq{x^{(0)}}\le 4\rho^{2k}\normsq{x^{(0)}}.
    \end{equation*}
    Now we are ready to compute a bound between the individual iterates and their mean
    \begin{align}
        &\sum_{i=1}^N\E\normsq{x_i^{(k)}-\bar{x}^{(k)}}\nonumber\\
        &=\E\normsq{x^{(k)}-\frac1N\sqb{\paren{\mb{1}_N\mb{1}_N^{\top}}\otimes \mb{I}_d}x^{(k)}}\nonumber\\
        &\le 4\rho^{2k}\E\normsq{x^{(0)}}+4\eta^2\E\normsq{\paren{\sum_{r=0}^{k-1}\rho^{k-1-r}}^2\sum_{r=0}^{k-1}\frac{\rho^{k-1-r}}{\sum_{r=0}^{k-1}\rho^{k-1-r}}\normsq{\nabla\ug\paren{x^{(r)}}}}\nonumber\\
        &\qquad+4\eta^2\E\normsq{\sum_{r=0}^{k-1}\bsqb{\paren{W^{k-1-r}-\frac1N\mb{1}_N\mb{1}_N^{\top}}\otimes \mb{I}_d}\xi^{(r+1)}}\nonumber\\
        &\qquad\qquad+8\eta\E\normsq{\sum_{r=0}^{k-1}\bsqb{\paren{W^{k-1-r}-\frac1N\mb{1}_N\mb{1}_N^{\top}}\otimes \mb{I}_d}w^{(r+1)}}.\label{eq:jj2}
    \end{align}
    Recall from Lemma~\ref{lem:gradbound} that for every $k=1,2,3\cdots$,
    \[
    \E\normsq{\nabla\ug\paren{x^{(k)}}}\le D_\gamma^2,
    \]
    where $D_\gamma^2$ is defined in equation~\eqref{eq:D}:
    \begin{equation*}
        \begin{split}
            D_\gamma^2&:=4L_\gamma^2\paren{1-\mu\eta\paren{1+\lambda_N^W-\eta L_\gamma}}^k\normsq{x^{(0)}-x^*_{\gamma}} + 8L_\gamma^2\frac{C_1^2\eta^2 N}{(1-\rho)^2}\\
         &\qquad\qquad+4\normsq{\nabla U^{\gamma}(x^*_\gamma)}+\frac{2L_\gamma^2(\eta \sigma^2 N+2dN)}{\mu\paren{1+\lambda_N^W-\eta L_\gamma}}.
        \end{split}
    \end{equation*}
    Therefore, the second term on the right hand side of~\eqref{eq:jj2}, 
    \begin{align}
        &4\eta^2\E\normsq{\paren{\sum_{r=0}^{k-1}\rho^{k-1-r}}^2\sum_{r=0}^{k-1}\frac{\rho^{k-1-r}}{\sum_{r=0}^{k-1}\rho^{k-1-r}}\normsq{\nabla\ug\paren{x^{(r)}}}}\nonumber\\
        &\qquad\qquad\le 4\eta^2 D_\gamma^2\paren{\sum_{r=0}^{k-1}\rho^{k-1-r}}^2\sum_{r=0}^{k-1}\frac{\rho^{k-1-r}}{\sum_{r=0}^{k-1}\rho^{k-1-r}}\nonumber\\
        &\qquad\qquad=4\eta^2D_\gamma^2\paren{\frac{1-\rho^{k-1}}{1-\rho}}^2\sum_{r=0}^{k-1}\frac{\rho^{k-1-r}}{\frac{1-\rho^{k-1}}{1-\rho}}=4\eta^2D_\gamma^2\paren{\frac{1-\rho^{k-1}}{1-\rho}}\sum_{r=0}^{k-1}\rho^{k-1-r}\nonumber\\
        &\qquad\qquad=4\eta^2D_\gamma^2\frac{\paren{1-\rho^{k-1}}^2}{(1-\rho)^2}\le 4\eta^2D_\gamma^2\frac{1}{(1-\rho)^2}.\label{eq:jj3}
    \end{align}
    Using the result from~\eqref{eq:jj3} and Assumption~\ref{assump:gradnoise} in~\eqref{eq:jj2}, we obtain:
    \begin{align}
        &\sum_{i=1}^N\E\normsq{x_i^{(k)}-\bar{x}^{(k)}}\nonumber\\
        &\le 4\rho^{2k}\E\normsq{x^{(0)}}+ 4\eta^2D_\gamma^2\frac{1}{(1-\rho)^2}+4\eta^2\sum_{r=0}^{k-1}\normsq{W^{k-1-r}-\frac1N\mb{1}_N\mb{1}_N^{\top}}\E\normsq{\xi^{(r+1)}}\nonumber\\
        &\qquad+8\eta\sum_{r=0}^{k-1}\normsq{W^{k-1-r}-\frac1N\mb{1}_N\mb{1}_N^{\top}}\E\normsq{w^{(r+1)}}\nonumber\\
        &\le 4\rho^{2k}\E\normsq{x^{(0)}}+ 4\eta^2D_\gamma^2\frac{1}{(1-\rho)^2}+4\eta^2 N\sigma^2\sum_{r=0}^{k-1}\rho^{2(k-1-r)}+8\eta d N\sum_{r=0}^{k-1}\rho^{2(k-1-r)}\nonumber\\
        &\le 4\rho^{2k}\E\normsq{x^{(0)}}+ \frac{4\eta^2D_\gamma^2}{(1-\rho)^2}+\frac{4\eta^2 N\sigma^2}{(1-\rho^2)}+\frac{8\eta d N}{(1-\rho^2)}.\nonumber
    \end{align}
    Therefore,
    \[
\frac1N\sum_{i=1}^N\E\normsq{x_i^{(k)}-\bar{x}^{(k)}}\le \frac{4\rho^{2k}}{N}\E\normsq{x^{(0)}}+ \frac{4\eta^2D_\gamma^2}{N(1-\rho)^2}+\frac{4\eta^2\sigma^2}{(1-\rho^2)}+\frac{8\eta d }{(1-\rho^2)}.
    \]
    This completes the proof.
\end{proof}

\subsection{Proof of Lemma~\ref{lem:errbound}}
\label{lemproof:errorbound}
\begin{proof}
    By taking the $L_2$ norm of the error term, we get
    \begin{align}
&\E\normsq{\mc{E}_{k+1}}\nonumber\\
        &=\E\normsq{\frac1N\sum_{i=1}^N\bsqb{\nabla f_i\paren{\bar{x}^{(k)}-\nabla f_i\paren{x_i^{(k)}}}} +\frac{1}{N^{2}\gamma}\sum_{i=1}^N\bsqb{\proj\paren{\bar{x}^{(k)}-\proj\paren{x_i^{(k)}}}} }\nonumber\\
        &\le 2\E \normsq{\frac1N\sum_{i=1}^N\bsqb{\nabla f_i\paren{\bar{x}^{(k)}-\nabla f_i\paren{x_i^{(k)}}}}}+2\E\normsq{\frac{1}{N^{2}\gamma}\sum_{i=1}^N\bsqb{\proj\paren{\bar{x}^{(k)}-\proj\paren{x_i^{(k)}}}}}\nonumber\\
        &\le \frac{2}{N^2}\sum_{i=1}^NN\E\normsq{\nabla f_i\paren{\bar{x}^{(k)}-\nabla f_i\paren{x_i^{(k)}}}}+\frac{2}{N^4\gamma^2}\sum_{i=1}^NN\E\normsq{\proj\paren{\bar{x}^{(k)}-\proj\paren{x_i^{(k)}}}}\nonumber\\
        &\le \frac{2L^2}{N}\sum_{i=1}^N\E\normsq{x_i^{(k)}-\bar{x}^{(k)}}+\frac{8}{N^{3}\gamma^2}\sum_{i=1}^N\E\normsq{x_i^{(k)}-\bar{x}^{(k)}}\nonumber\\
        &=2\paren{L^2+\frac{4}{N^{2}\gamma^2}}\frac1N\sum_{i=1}^N\E\normsq{x_i^{(k)}-\bar{x}^{(k)}}\nonumber\\
        &\le 2\paren{L+\frac{2}{N\gamma}}^2\frac1N\sum_{i=1}^N\E\normsq{x_i^{(k)}-\bar{x}^{(k)}}\nonumber\\
        &=2L_\gamma^2\frac1N\sum_{i=1}^N\E\normsq{x_i^{(k)}-\bar{x}^{(k)}}.
        \label{eq:errbound1}
    \end{align}
    Using the results from Lemma~\ref{lem:agent-avg}, we obtain
    \begin{align}
        \E\normsq{\mc{E}_{k+1}}&\le \frac{8L_\gamma^2\rho^{2k}}{N}\E\normsq{x^{(0)}}+\frac{8L_\gamma^2\eta^2D_\gamma^2}{N(1-\rho)^2}+\frac{8L_\gamma^2\eta^2\sigma^2}{(1-\rho^2)}+\frac{16L_\gamma^2\eta d }{(1-\rho^2)}.
    \end{align}
    This completes the proof.
\end{proof}

\subsection{Proof of Lemma~\ref{lem:myula-avg-bound}}
\label{lemproof:myula-avg-bound}

\begin{proof}
    From~\eqref{eq:meanite-again} and \eqref{eq:myulac} we obtain:
    \begin{equation*}
        \bar{x}^{(k+1)}-\tilde{x}^{(k+1)}=\bar{x}^{(k)}-\tilde{x}^{(k)}-\eta\sqb{\nabla \hat{u}^\gamma\paren{\bar{x}^{(k)}}-\nabla \hat{u}^\gamma\paren{\tilde{x}^{(k)}}}+\eta\mc{E}_{k+1}-\eta\bar{\xi}^{(k+1)}.
    \end{equation*}
    Therefore, 
    \begin{align}
        &\normsq{\bar{x}^{(k+1)}-\tilde{x}^{(k+1)}}\nonumber\\
        &=\normsq{\bar{x}^{(k)}-\tilde{x}^{(k)}-\eta\sqb{\nabla \hat{u}^\gamma\paren{\bar{x}^{(k)}}-\nabla \hat{u}^\gamma\paren{\tilde{x}^{(k)}}}+\eta\mc{E}_{k+1}-\eta\bar{\xi}^{(k+1)}}\nonumber\\
        &=\normsq{\bar{x}^{(k)}-\tilde{x}^{(k)}-\eta\sqb{\nabla \hat{u}^\gamma\paren{\bar{x}^{(k)}}-\nabla \hat{u}^\gamma\paren{\tilde{x}^{(k)}}}}+\eta^2\normsq{\mc{E}_{k+1}-\bar{\xi}^{(k+1)}}\nonumber\\
        &\qquad+2\lrangle{\bar{x}^{(k)}-\tilde{x}^{(k)}-\eta\sqb{\nabla \hat{u}^\gamma\paren{\bar{x}^{(k)}}-\nabla \hat{u}^\gamma\paren{\tilde{x}^{(k)}}},\eta\mc{E}_{k+1}-\eta\bar{\xi}^{(k+1)}}\nonumber\\
        &=\normsq{\bar{x}^{(k)}-\tilde{x}^{(k)}}+\eta^2\normsq{\nabla \hat{u}^\gamma\paren{\bar{x}^{(k)}}-\nabla \hat{u}^\gamma\paren{\tilde{x}^{(k)}}}+\eta^2\normsq{\mc{E}_{k+1}-\bar{\xi}^{(k+1)}}\nonumber\\
        &\qquad-2\lrangle{\bar{x}^{(k)}-\tilde{x}^{(k)},\eta\sqb{\nabla \hat{u}^\gamma\paren{\bar{x}^{(k)}}-\nabla \hat{u}^\gamma\paren{\tilde{x}^{(k)}}}}\nonumber\\
        &\qquad+2\lrangle{\bar{x}^{(k)}-\tilde{x}^{(k)}-\eta\sqb{\nabla \hat{u}^\gamma\paren{\bar{x}^{(k)}}-\nabla \hat{u}^\gamma\paren{\tilde{x}^{(k)}}},\eta\mc{E}_{k+1}-\eta\bar{\xi}^{(k+1)}}\nonumber\\
        &\le \normsq{\bar{x}^{(k)}-\tilde{x}^{(k)}}+\eta^2\frac{L_\gamma}{N}\lrangle{\bar{x}^{(k)}-\tilde{x}^{(k)},\nabla \hat{u}^\gamma\paren{\bar{x}^{(k)}}-\nabla \hat{u}^\gamma\paren{\tilde{x}^{(k)}}}\nonumber\\
        &\qquad-2\lrangle{\bar{x}^{(k)}-\tilde{x}^{(k)},\eta\sqb{\nabla \hat{u}^\gamma\paren{\bar{x}^{(k)}}-\nabla \hat{u}^\gamma\paren{\tilde{x}^{(k)}}}}+\eta^2\normsq{\mc{E}_{k+1}-\bar{\xi}^{(k+1)}}\nonumber\\
        &\qquad+2\lrangle{\bar{x}^{(k)}-\tilde{x}^{(k)}-\eta\sqb{\nabla \hat{u}^\gamma\paren{\bar{x}^{(k)}}-\nabla \hat{u}^\gamma\paren{\tilde{x}^{(k)}}},\eta\mc{E}_{k+1}-\eta\bar{\xi}^{(k+1)}}\nonumber\\
        &\le \paren{1-\frac{2\mu\eta}{N}\paren{1-\frac{\eta L_\gamma}{2N}}}\normsq{\bar{x}^{(k)}-\tilde{x}^{(k)}}+\eta^2\normsq{\mc{E}_{k+1}-\bar{\xi}^{(k+1)}}\nonumber\\
        &\qquad+2\lrangle{\bar{x}^{(k)}-\tilde{x}^{(k)}-\eta\sqb{\nabla \hat{u}^\gamma\paren{\bar{x}^{(k)}}-\nabla \hat{u}^\gamma\paren{\tilde{x}^{(k)}}},\eta\mc{E}_{k+1}-\eta\bar{\xi}^{(k+1)}},\label{eq:myula2}
    \end{align}
    where we used $L_\gamma/N$-smoothness and $\mu/N$-strongly convexity of $\hat{u}^\gamma(x)$, and the assumption that $\eta<\frac{2N}{L_\gamma}=\frac{2N}{L+\frac{2}{N\gamma}}$ in obtaining~\eqref{eq:myula2}. Also, by Assumption~\ref{assump:gradnoise}, the term $\bar{\xi}^{(k+1)}$ has mean zero conditional on the natural filtration of the iterates till time $k$. Thus, by taking expectations, it yields that
    \begin{align}
        &\E\normsq{\bar{x}^{(k+1)}-\tilde{x}^{(k+1)}}\nonumber\\
        &\le \paren{1-\frac{2\mu\eta}{N}\paren{1-\frac{\eta L_\gamma}{2N}}}\E\normsq{\bar{x}^{(k)}-\tilde{x}^{(k)}}+\eta^2\E\normsq{\mc{E}_{k+1}-\bar{\xi}^{(k+1)}}\nonumber\\
        &\qquad+2\E\lrangle{\bar{x}^{(k)}-\tilde{x}^{(k)}-\eta\sqb{\nabla \hat{u}^\gamma\paren{\bar{x}^{(k)}}-\nabla \hat{u}^\gamma\paren{\tilde{x}^{(k)}}},\eta\mc{E}_{k+1}-\eta\bar{\xi}^{(k+1)}}\nonumber\\
        &=\paren{1-\frac{2\mu\eta}{N}\paren{1-\frac{\eta L_\gamma}{2N}}}\E\normsq{\bar{x}^{(k)}-\tilde{x}^{(k)}}+\eta^2\E\normsq{\mc{E}_{k+1}}+\eta^2\E\normsq{\bar{\xi}^{(k+1)}}\nonumber\\
        &\qquad+2\E\lrangle{\bar{x}^{(k)}-\tilde{x}^{(k)}-\eta\sqb{\nabla \hat{u}^\gamma\paren{\bar{x}^{(k)}}-\nabla \hat{u}^\gamma\paren{\tilde{x}^{(k)}}},\eta\mc{E}_{k+1}-\eta\bar{\xi}^{(k+1)}}\nonumber\\
        &\le \paren{1-\frac{2\mu\eta}{N}\paren{1-\frac{\eta L_\gamma}{2N}}}\E\normsq{\bar{x}^{(k)}-\tilde{x}^{(k)}}+\eta^2\E\normsq{\mc{E}_{k+1}}+\eta^2\E\normsq{\bar{\xi}^{(k+1)}}\nonumber\\
        &\qquad+2\paren{1+\frac{\eta L_\gamma}{N}}\eta\E\bsqb{\norm{\bar{x}^{(k)}-\tilde{x}^{(k)}}\cdot\norm{\mc{E}_{k+1}}}.\label{eq:myula3}
    \end{align}
    For any $a,b\ge 0$ and $\varepsilon>0$, we have the Young's inequality $\displaystyle 2ab\le \varepsilon a^2+\frac{b^2}{\varepsilon}$. Choose
    \[
    a=\norm{\bar{x}^{(k)}-\tilde{x}^{(k)}},
    \qquad
    b=\norm{\mc{E}_{k+1}},
    \qquad
    \varepsilon = \frac{\frac{\mu}{N}\paren{1-\frac{\eta L_\gamma}{2}}}{1+\frac{\eta L_\gamma}{N}}.
    \]
    Then, from~\eqref{eq:myula3}, we obtain
    \begin{align}
        &\E\normsq{\bar{x}^{(k+1)}-\tilde{x}^{(k+1)}}\nonumber\\
        &\le \paren{1-\frac{2\mu\eta}{N}\paren{1-\frac{\eta L_\gamma}{2N}}}\E\normsq{\bar{x}^{(k)}-\tilde{x}^{(k)}}+\eta^2\E\normsq{\mc{E}_{k+1}}+\eta^2\frac{\sigma^2}{N}\nonumber\\
        &\qquad+\left(1+\frac{\eta L_\gamma}{N}\right)\eta \paren{\frac{\mu\paren{1-\frac{\eta L_\gamma}{2N}}}{N+\eta L_\gamma}\E\normsq{\bar{x}^{(k)}-\tilde{x}^{(k)}}+\frac{N+\eta L_\gamma}{\mu\paren{1-\frac{\eta L_\gamma}{2N}}}\E\normsq{\mc{E}_{k+1}}}\nonumber\\
        &=\paren{1-\frac{\mu\eta}{N}\paren{1-\frac{\eta L_\gamma}{2N}}}\E\normsq{\bar{x}^{(k)}-\tilde{x}^{(k)}}+\eta\paren{\eta+\frac{(1+ \frac{\eta L_\gamma}{N})^2}{\frac{\mu}{N}\paren{1-\frac{\eta L_\gamma}{2N}}}}\E\normsq{\mc{E}_{k+1}}+\eta^2\frac{\sigma^2}{N}.
        \label{eq:myula4}
    \end{align}
    Here, we assume that the leading term $1-\frac{\mu\eta}{N}\paren{1-\frac{\eta L_\gamma}{2N}}\in [0,1)$. Now using the results from Lemma~\ref{lem:errbound}, we obtain
    \begin{align}
        &\E\normsq{\bar{x}^{(k+1)}-\tilde{x}^{(k+1)}}\nonumber\\
        &\le \paren{1-\frac{\mu\eta}{N}\paren{1-\frac{\eta L_\gamma}{2N}}}\E\normsq{\bar{x}^{(k)}-\tilde{x}^{(k)}}\nonumber\\
        &\quad+\eta\paren{\eta+\frac{(1+\frac{\eta L_\gamma}{N})^2}{\frac{\mu}{N}\paren{1-\frac{\eta L_\gamma}{2N}}}}\paren{\frac{8L_\gamma^2\rho^{2k}}{N}\E\normsq{x^{(0)}}+\frac{8L_\gamma^2\eta^2D_\gamma^2}{N(1-\rho)^2}+\frac{8L_\gamma^2\eta^2\sigma^2}{(1-\rho^2)}+\frac{16L_\gamma^2\eta d }{(1-\rho^2)}}\nonumber
        \\
        &\qquad+\eta^2\frac{\sigma^2}{N}.\nonumber
    \end{align}
    Since $\E\left\Vert\bar{x}^{(0)}-\tilde{x}^{(0)}\right\Vert=0$, by iterations,
    \begin{align*}
    &\E\normsq{\bar{x}^{(k)}-\tilde{x}^{(k)}}\nonumber\\
        &\le \sum_{r=0}^{k-1}\paren{1-\frac{\mu\eta}{N}\paren{1-\frac{\eta L_\gamma}{2N}}}^r\\
        &\qquad\cdot\bsqb{\eta\paren{\eta+\frac{(1+\frac{\eta L_\gamma}{N})^2}{\frac{\mu}{N}\paren{1-\frac{\eta L_\gamma}{2N}}}}\paren{\frac{8L_\gamma^2\eta^2D_\gamma^2}{N(1-\rho)^2}+\frac{8L_\gamma^2\eta^2\sigma^2}{(1-\rho^2)}+\frac{16L_\gamma^2\eta d }{(1-\rho^2)}}+\eta^2\frac{\sigma^2}{N}}\\
        &\qquad+\sum_{r=0}^{k-1}\paren{1-\frac{\mu\eta}{N}\paren{1-\frac{\eta L_\gamma}{2N}}}^r\eta\paren{\eta+\frac{(1+\eta L_\gamma)^2}{\mu\paren{1-\frac{\eta L_\gamma}{2}}}}\frac{8L_\gamma^2\rho^{2(k-r)}}{N}\E\normsq{x^{(0)}}\\
        &=\frac{1-\paren{1-\frac{\mu\eta}{N}\paren{1-\frac{\eta L_\gamma}{2N}}}^k}{1-\paren{1-\frac{\mu\eta}{N}\paren{1-\frac{\eta L_\gamma}{2N}}}}\\
        &\qquad\cdot\bsqb{\eta\paren{\eta+\frac{(1+\frac{\eta L_\gamma}{N})^2}{\frac{\mu}{N}\paren{1-\frac{\eta L_\gamma}{2N}}}}\paren{\frac{8L_\gamma^2\eta^2D_\gamma^2}{N(1-\rho)^2}+\frac{8L_\gamma^2\eta^2\sigma^2}{(1-\rho^2)}+\frac{16L_\gamma^2\eta d }{(1-\rho^2)}}+\eta^2\frac{\sigma^2}{N}}\\
        &\qquad+\frac{\rho^{2k}-\paren{1-\frac{\mu\eta}{N}\paren{1-\frac{\eta L_\gamma}{2N}}}^k}{1-\paren{1-\frac{\mu\eta}{N}\paren{1-\frac{\eta L_\gamma}{2N}}}\frac{1}{\rho^2}}\frac{8L_\gamma^2}{N}\E\normsq{x^{(0)}}.
    \end{align*}
    Finally, for every $k\in\mathbb{N}$,
    \begin{align*}
        &\E\normsq{\bar{x}^{(k)}-\tilde{x}^{(k)}}\nonumber\\
        &\qquad\le \frac{\eta\paren{\eta+\frac{(1+\frac{\eta L_\gamma}{N})^2}{\frac{\mu}{N}\paren{1-\frac{\eta L_\gamma}{2N}}}}\paren{\frac{8L_\gamma^2\eta^2D_\gamma^2}{N(1-\rho)^2}+\frac{8L_\gamma^2\eta^2\sigma^2}{(1-\rho^2)}+\frac{16L_\gamma^2\eta d }{(1-\rho^2)}}+\eta^2\frac{\sigma^2}{N}}{\frac{\mu\eta}{N}\paren{1-\frac{\eta L_\gamma}{2N}}}\\
        &\qquad\qquad+\frac{\rho^{2k}-\paren{1-\frac{\mu\eta}{N}\paren{1-\frac{\eta L_\gamma}{2N}}}^k}{\rho^2-\paren{1-\frac{\mu\eta}{N}\paren{1-\frac{\eta L_\gamma}{2N}}}}\cdot\frac{8L_\gamma^2\rho^2}{N}\E\normsq{x^{(0)}}\\
        &\qquad= \frac{\eta\paren{\eta+\frac{(1+\frac{\eta L_\gamma}{N})^2}{\frac{\mu}{N}\paren{1-\frac{\eta L_\gamma}{2N}}}}\paren{\frac{8L_\gamma^2\eta D_\gamma^2}{N(1-\rho)^2}+\frac{8L_\gamma^2\eta\sigma^2}{(1-\rho^2)}+\frac{16L_\gamma^2  d }{(1-\rho^2)}}+\frac{\eta\sigma^2}{N}}{\frac{\mu}{N}\paren{1-\frac{\eta L_\gamma}{2N}}}\\
        &\qquad\qquad+\frac{\rho^{2k}-\paren{1-\frac{\mu\eta}{N}\paren{1-\frac{\eta L_\gamma}{2N}}}^k}{\rho^2-\paren{1-\frac{\mu\eta}{N}\paren{1-\frac{\eta L_\gamma}{2N}}}}\cdot\frac{8L_\gamma^2\rho^2}{N}\E\normsq{x^{(0)}}\\
        &\qquad=\eta \paren{\frac{\eta}{\frac{\mu}{N}\paren{1-\frac{\eta L_\gamma}{2N}}}+\frac{(1+\frac{\eta L_\gamma}{N})^2}{\frac{\mu^2}{N^{2}}\paren{1-\frac{\eta L_\gamma}{2N}}^2}}\paren{\frac{8L_\gamma^2\eta D_\gamma^2}{N(1-\rho)^2}+\frac{8L_\gamma^2\eta\sigma^2}{(1-\rho^2)}+\frac{16L_\gamma^2  d }{(1-\rho^2)}}\\
        &\qquad\qquad+\frac{\eta\sigma^2}{\mu\paren{1-\frac{\eta L_\gamma}{2N}}}+\frac{\rho^{2k}-\paren{1-\frac{\mu\eta}{N}\paren{1-\frac{\eta L_\gamma}{2N}}}^k}{\rho^2-\paren{1-\frac{\mu\eta}{N}\paren{1-\frac{\eta L_\gamma}{2N}}}}\cdot\frac{8L_\gamma^2\rho^2}{N}\E\normsq{x^{(0)}}.
    \end{align*}
    This completes the proof.
\end{proof}

\subsection{Proof of Lemma~\ref{lem:minimizer}}
\label{lemproof:minimizer}

\begin{proof}
    Define the process $Y_t:=X_t-\hat{x}_\gamma^*$, where $X_{t}$ follows the overdamped Langevin diffusion: 
    \begin{equation}\label{eqn:overdamped}
        dX_t:=-\nabla \hat{u}^\gamma(X_t)dt+\sqrt{2N^{-1}}dW_t,
    \end{equation}
    where $W_t$ is a standard $d$-dimensional Brownian motion, such that $X_{\infty}$ follows the unique stationary distribution $\pi^{\gamma}$ of \eqref{eqn:overdamped}. 
    Then, we have 
    $$
    dY_t=-\nabla \hat{u}^\gamma(X_t)dt+\sqrt{2N^{-1}}dW_t.
    $$
    Now by It\^{o}'s formula,
    \begin{align*}
        d\|Y_t\|^2&= -2\left\langle Y_t,\nabla \hat u^\gamma(X_t) - \nabla \hat u^\gamma(\hat{x}_\gamma^*)\right\rangle\,dt+2\sqrt{2N^{-1}}\,\langle Y_t,dW_t\rangle +(2N^{-1}d)\,dt.
    \end{align*}
    Multiplying both sides by the integrating factor $e^{\frac{\mu}{N} t}$ and using the product rule yields
    \[
    \begin{aligned}
        e^{\frac{\mu}{N} t}\|Y_t\|^2&=\|Y_0\|^2+2\sqrt{2N^{-1}}\int_0^t e^{\frac{\mu}{N} s}\langle Y_s,dW_s\rangle \\
        &\qquad-2\int_0^t e^{\frac{\mu}{N} s}\left\langle Y_s,\nabla \hat u^\gamma(X_s) - \nabla \hat u^\gamma\left(\hat x_\gamma^*\right)\right\rangle ds
        \\
        &\qquad\qquad+2N^{-1}d\int_0^t e^{\frac{\mu}{N} s}\,ds +\frac{\mu}{N} \int_0^t e^{\frac{\mu}{N} s}\|Y_s\|^2 ds.
    \end{aligned}
    \]
    Since $\hat u^\gamma$ is $\mu/N$-strongly convex, we have
    \[
        \left\langle x-y, \nabla \hat u^\gamma(x)-\nabla \hat u^\gamma(y)\right\rangle
        \ge\frac{\mu}{N} \|x-y\|^2.
    \]
    Applying this inequality with $x=X_s$ and $y=\hat x_\gamma^*$ implies
    \[
        \left\langle Y_s,
        \nabla \hat u^\gamma(X_s) - \nabla \hat u^\gamma(\hat x_\gamma^*)
        \right\rangle
        \ge
        \frac{\mu}{N} \|Y_s\|^2.
    \]
    Therefore,
    \[
        -2\int_0^t e^{\frac{\mu}{N} s}\left\langle Y_s,
        \nabla \hat u^\gamma(X_s) - \nabla \hat u^\gamma\left(\hat x_\gamma^*\right)
        \right\rangle ds+\frac{\mu}{N} \int_0^t e^{\frac{\mu}{N} s}\|Y_s\|^2 ds
        \le 0.
    \]
    Dropping this non-positive term yields
    \[
        e^{\frac{\mu}{N} t}\|Y_t\|^2
        \le
        \|Y_0\|^2
        +2\sqrt{2N^{-1}}\int_0^t e^{\frac{\mu}{N} s}\langle Y_s,dW_s\rangle
        +2N^{-1}d\int_0^t e^{\frac{\mu}{N} s}\,ds.
    \]
    Taking expectations and using the fact that the stochastic integral has mean zero gives
    \[
        \mathbb{E}\left[e^{\frac{\mu}{N} t}\|Y_t\|^2\right]
        \le
        \mathbb{E}\|Y_0\|^2
        +2N^{-1}d\int_0^t e^{\frac{\mu}{N} s}\,ds
        =
        \mathbb{E}\|Y_0\|^2
        +\frac{2N^{-1}d}{\frac{\mu}{N}}\left(e^{\frac{\mu}{N} t}-1\right).
    \]
    Dividing by $e^{\frac{\mu}{N} t}$ yields
    \[
        \mathbb{E}\|Y_t\|^2
        \le
        e^{-\frac{\mu}{N} t}\mathbb{E}\|Y_0\|^2
        +
        \frac{2N^{-1}d}{\frac{\mu}{N}}\left(1-e^{-\frac{\mu}{N} t}\right)
        \le
        e^{-\frac{\mu}{N} t}\mathbb{E}\|Y_0\|^2
        +
        \frac{2N^{-1}d}{\frac{\mu}{N}}.
    \]
    Letting $t\to\infty$ and using convergence to the invariant distribution $\pi^\gamma$ gives
    \[
        \mathbb{E}\|X_\infty - \hat x_\gamma^*\|^2 \le \frac{2N^{-1}d}{\frac{\mu}{N}}=\frac{2d}{\mu}.
    \]
    This completes the proof.
\end{proof}

\end{document}